\title[AAMAS-2025 Formatting Instructions]{Truthful mechanisms for linear bandit games with private contexts}
\author{Yiting Hu}
\affiliation{
  \institution{Singapore University of Technology and Design}
  \country{Singapore}}
\email{yiting_hu@mymail.sutd.edu.sg}
\author{Lingjie Duan}
\affiliation{
  \institution{Singapore University of Technology and Design}
  \country{Singapore}}
\email{lingjie_duan@sutd.edu.sg}
\thanks{This work is to appear in AAMAS 2025. This work is also supported in part by the Ministry of Education, Singapore, under its Academic Research Fund Tier 2 Grant with Award no. MOE-T2EP20121-0001; in part by SUTD Kickstarter Initiative (SKI) Grant with no. SKI 2021\_04\_07; and in part by the Joint SMU-SUTD Grant with no. 22-LKCSB-SMU-053.}
\renewcommand\footnotetextcopyrightpermission[1]{}
\begin{abstract}
The contextual bandit problem, where agents arrive sequentially with personal contexts and the system adapts its arm allocation decisions accordingly, has recently garnered increasing attention for enabling more personalized outcomes. However, in many healthcare and recommendation applications, agents have private profiles and may misreport their contexts to gain from the system. For example, in adaptive clinical trials, where hospitals sequentially recruit volunteers to test multiple new treatments and adjust plans based on volunteers' reported profiles such as symptoms and interim data, participants may misreport severe side effects like allergy and nausea to avoid perceived suboptimal treatments. We are the first to study this issue of private context misreporting in a stochastic contextual bandit game between the system and non-repeated agents. We show that traditional low-regret algorithms, such as UCB family algorithms and Thompson sampling, fail to ensure truthful reporting and can result in linear regret in the worst case, while traditional truthful algorithms like explore-then-commit (ETC) and \(\epsilon\)-greedy algorithm incur sublinear but high regret. We propose a mechanism that uses a linear program to ensure truthfulness while minimizing deviation from Thompson sampling, yielding an $O(\ln T)$ frequentist regret. Our numerical experiments further demonstrate strong performance in multiple contexts and across other distribution families.
\end{abstract}
\keywords{Contextual linear bandit, private context, truthful mechanism, regret bound}
\newcommand{\BibTeX}{\rm B\kern-.05em{\sc i\kern-.025em b}\kern-.08em\TeX}
\begin{document}


\pagestyle{fancy}
\fancyhead{}


\maketitle 


\section{Introduction}\nocite{abramowitz1948handbook}

The contextual bandit problems have received increasing attention over the past decade, beginning with Auer’s introduction of the concept \cite{auer2002using,chu2011contextual,abbasi2011improved, agrawal2013thompson, abeille2017linear}. In the contextual bandit model, an arbitrary set of observable actions is available at each time step, and the reward for each action is determined by an unknown parameter shared across all actions. The contextual bandit excels in making personalized decisions by using contextual information to select the best possible actions. This allows for more efficient learning and better adaptation to dynamic environments compared to traditional bandit models.

However, these models do not align with scenarios involving private contexts and fail to capture the challenges posed by private information. In the new stochastic bandit problem involving private contexts \cite{lazaric2009hybrid, goldenshluger2013linear,rakhlin2016bistro, bastani2021mostly}, at each time step, a new agent arrives, reports her private context, and the system selects
one of the available $K$ arms, where each arm is associated with a different unknown parameter. The agent then receives a stochastic reward based
on the system’s chosen action, after which she leaves. This scenario is common in applications like clinical trials and online recommendations, where agents may strategically misreport private contexts to maximize single-round personal rewards. For example, in adaptive clinical trials of phase 2 INSIGHT trial where hospitals test treatments for glioblastoma based on patients' symptoms and medical history, some patients may misreport side effect histories like allergies or anemia to avoid the less-established abemaciclib treatment \cite{FDA_Abemaciclib_2017, rahman2023inaugural}. On online platforms like Netflix or Amazon, many users prefer recommendations based on popular choices or expert curation \cite{growthsetting_netflix_ai}.

In this new stochastic contextual linear bandit problem with private contexts, previous works assume observable and public contexts and do not consider the agents' strategic behavior to game the system. The conflict between the system's long-term reward and the individual's immediate reward in the multi-armed bandit problem has been studied for the past decade \cite{kremer2014implementing,mansour2020bayesian,mansour2022bayesian,sellke2021price,hu2022incentivizing,immorlica2018incentivizing,sellke2023incentivizing}. Kremer et al. \cite{kremer2014implementing} initiate the research within a Bayesian exploration framework, introducing a recommendation mechanism for incentivizing exploration with deterministic rewards. Mansour et al. \cite{mansour2020bayesian} further develop the problem to the stochastic rewards. Sellke and Slivkins \cite{sellke2021price} first prove that Thompson sampling algorithm can be naturally incentive-compatible (IC) if provided with sufficient initial samples. Then Hu et al. \cite{hu2022incentivizing} and Simchowitz and Slivkins \cite{sellke2023incentivizing} extend this result to the combinatorial and linear bandit problems. Beyond recommendation mechanisms, Immorlica et al. \cite{immorlica2018incentivizing} apply selective disclosure of historical information to encourage exploration. Simchowitz and Slivkins \cite{simchowitz2024exploration} also study this problem in reinforcement learning. These works assume that the system has the full information about agents for the recommendation, then the problem is to design the IC mechanisms that ensure agents follow the recommendation. In contrast, our system needs agents to report their private contexts, where the system lacks context information, and agents may strategically misreport their private contexts, rendering these methods ineffective for the problem addressed in this paper.


Our main contributions are summarized as follows:
\begin{itemize}
\item We are the first to study agents' strategic context misreporting to maximize their one-time individual expected rewards in the new contextual bandit problem. We demonstrate that existing algorithms perform poorly under misreporting. Specifically, existing truthful algorithms, such as the greedy and explore-then-commit (ETC) methods, suffer from relatively high regret, while low-regret algorithms like UCB family algorithms and Thompson sampling exhibit a regret of \(O(T)\) under strategic misreporting.

 \item We propose a truthful mechanism based on Thompson sampling algorithm which guarantees that agents have no incentive to misreport their contexts. We prove that our algorithm achieves a frequentist regret upper bound of \(O(\ln T)\) in the Bayesian contextual linear bandit setting. Additionally, our experiments show that the mechanism has sublinear regret when applied to multiple contexts and across some other sub-Gaussian distributions.
\end{itemize}


\subsection{Related work}

Stochastic contextual linear bandit algorithms can be categorized into deterministic algorithms, which make deterministic choices, and stochastic algorithms, which maintain a probability distribution among arms for selection. When facing agents' context misreporting, deterministic algorithms in which the choice depends on context cannot ensure truthful reporting in exploration because the resulting arm choices are predictable. Therefore, deterministic low-regret algorithms like the UCB family \cite{chu2011contextual, abbasi2011improved} suffer from linear regret in the worst-case scenario as shown in Section \ref{section3} in this paper. Another deterministic algorithm, the Explore-Then-Commit (ETC) algorithm, does not rely on agents' context but is inefficient, incurring a relatively high regret of order \(O(T^{2/3})\) \cite{lattimore2020bandit}. For stochastic algorithms, the \(\epsilon^t\)-greedy algorithm is truthful as its exploration probability is independent of the context, but it also incurs a regret order of \(O(T^{2/3})\) \cite{banditsintroduction}.

Another stochastic low-regret algorithm is Thompson sampling, which was first adapted by Agrawal and Goyal in \cite{agrawal2013thompson} for the contextual linear bandits problem. Abeille and Lazaric in \cite{abeille2017linear} further improve the frequentist regret of linear Thompson sampling. Thompson sampling is also widely applied to Bayesian bandit problems, as it naturally leverages posterior distributions. Russo and Van Roy \cite{russo2014learning, russo2016information} provide the Bayesian regret upper bound for Thompson sampling, with frequentist regret serving as an upper bound on Bayesian regret. However, we will show in Section \ref{section3} that Thompson sampling is not truthful and still suffer from linear regret under misreporting behavior. 

Regarding context misreporting behavior, Buening et al. examine this phenomenon in a different problem in \cite{buening2024strategic}. Their work considers arms as repeated strategic entities that manipulate rewards to increase their chances of being chosen. While they also address context misreporting behavior, their focus fundamentally differs from ours, and their approach is inapplicable to our problem, as our agents are non-repeated and myopic.

\section{Problem formulation}\label{section2}
We consider the Bayesian contextual linear bandit model in \cite{bastani2021mostly}. There are \( K \) arms in the set \([K]=\{1,\dots,K\}\), each associated with an unknown, fixed \(d\)-dimensional hidden parameter \(\theta_k \in \mathbb{R}^d\). These parameters \(\{\theta_k\}_{k\in[K]}\) are unknown to both the system and the agents but are drawn from a known prior distribution \(\mathcal{P}_k: \mathbb{R}^d \to \mathbb{R}\). The prior distributions \(\mathcal{P}_k\) for any $k\in[K]$ are common knowledge for both the system and the agents. We define the prior mean of each \(\mathcal{P}_k\) as \(\mu_k \in \mathbb{R}^d\) and the covariance matrix as \(V_k \in \mathbb{S}^{d}_+\).

At each time \(t \in \{1, \dots, T\}\), a new agent arrives with a private context \(x_t \in \mathcal{X}\), where \(\mathcal{X} = \{\chi_1, \dots, \chi_N\} \subsetneq \mathbb{R}^d\) is a set of finite $N$ contexts. We assume that each \(x_t\) takes the value \(\chi_n\in\mathcal{X}\) with a known probability \(\beta_n > 0\), where \(\sum_{n=1}^N \beta_n = 1\) \footnote{Our mechanism also works when the arrival probability \(\beta_n\) is unknown. We can initialize the mechanism with a uniform discrete context distribution and adjust \(\beta_n\) as we learn and update it throughout the process.}. The system first provides its history \(\mathcal{F}_t\), which includes previously observed contexts, actions, and rewards, as well as the arm selection policy \(\pi(x, \mathcal{F}_t)\) for all contexts $x \in \mathcal{X}$ which defines a probability distribution over the \(K\) arms, to the agent. After receiving this information, the agent reports the context \(x'_t \in \mathcal{X}\) to the system. Based on the reported context \(x'_t\), the system selects an arm \(a_t \in [K]\) and only observes the corresponding reward \(r_t = x_t^\top \theta_{a_t} + \eta_t\), where \(\eta_t\) is a zero-mean random variable. The system then updates the posterior estimation of the chosen arm $a_t$ to $\hat{\theta}^{t+1}_{a_t}$, and the posterior distribution to \(\mathcal{P}_{a_t}^{t+1}=\mathcal{P}_{a_t}(\cdot | \mathcal{F}_{t+1})\).

We begin by considering Gaussian priors and Gaussian noise to provide a clearer illustration of the problem and to facilitate the analysis of frequentist regret, as in \cite{agrawal2013thompson,agrawal2017near}. Specifically, we assume $\mathcal{P}_k\sim \mathcal{N}(\mu_k, V_k)$ and $\eta_t\sim\mathcal{N}(0, 1)$. This assumption allows for closed-form updates of \(\hat{\theta}_k^t\) and \(\mathcal{P}_{a_t}^t\), making the analysis more tractable. Still, our mechanism is applicable to any family of prior and noise distributions. In Section \ref{section7}, we use simulations to demonstrate that the mechanism design in Section \ref{section4} achieves good regret performance under other sub-Gaussian distributions. Under Gaussian priors and Gaussian noise, the posterior distribution is \(\mathcal{P}_k^t(\cdot) \sim \mathcal{N}(\hat{\theta}^t_k, \hat{V}^t_k)\), where \(\hat{\theta}_k^t\) and \(\hat{V}^t_k\) are updated as follows:
\begin{align}
    &\hat{V}_k^t=\bigg(V_k^{-1}+\sum_{\tau\in\mathcal{T}_k^t}x_\tau x_\tau^\top\bigg)^{-1},\hat{\theta}_k^t=\hat{V}_k^t\bigg(V_k^{-1}\mu_k+\sum_{\tau\in\mathcal{T}_k^t}x_\tau r_\tau\bigg)\label{thetakt},
\end{align}
where $\mathcal{T}_k^t$ denotes the set of time steps when the system chooses arm $k$ before time $t$.

We now formulate the objectives of both the agents and the system. Given the arm choice policy $\pi(x,\mathcal{F}_t)$ for any $x\in \mathcal{X}$ provided by the system, the agent arriving with context $x_t$ chooses to report the context \(x'_t\) that maximizes her expected reward, expressed as:
\begin{align}
x'_t = \arg\max_{x \in \mathcal{X}} {x_t}^\top \Theta^t {\pi}(x,\mathcal{F}_t),\label{agentReward}
\end{align}
where \(\Theta^t = [\hat{\theta}_1^t, \dots, \hat{\theta}_K^t]\) represents the matrix of posterior estimates at time \(t\). Based on Eq. (\ref{agentReward}), we then present the definition of the truthful mechanism as follows:

\begin{definition}
A mechanism is considered truthful in our Bayesian contextual linear bandit problem if no agent can increase her expected reward by misreporting her true context at any time step. Formally, for any history $\mathcal{F}_t$ and any pair of distinct contexts \(x_t\) and \(x'_t\) in \(\mathcal{X}\), the following condition holds at each time step \(t \in [T]\):
\begin{align}
    x_t^\top \Theta^t {\pi}(x_t,\mathcal{F}_t) \geq {x_t}^\top \Theta^t {\pi}(x'_t,\mathcal{F}_t).\label{condition}
\end{align}
\end{definition}
When a context \(x \in \mathcal{X}\) has an incentive to misreport as another context \(x' \in \mathcal{X}\), we say that contexts \(x\) and \(x'\) have conflict.

Conversely, the system’s objective is to maximize the cumulative reward, or equivalently, to minimize the expected total regret by choosing $a_t$ for each time step $t$ in the truthful mechanism $\pi(\cdot)$. Let \(a_t^*\) denote the optimal arm for the agent arriving at time \(t\). The expected total regret is:
\begin{align}
    \mathbb{E}[\mathcal{R}(T)] = \mathbb{E}\left[\sum_{t=1}^T \left(x_t^\top \theta_{a_t^*} - x_t^\top \theta_{a_t}\right)\right]. \label{regret}
\end{align}

Building on the truthful mechanism defined above, we will next analyze the behavior of existing bandit algorithms under misreporting.


\section{Performance analysis of existing algorithms under misreporting}\label{section3}
In this section, we present comprehensive studies on the performances of existing deterministic and stochastic algorithms under agents' possible misreporting. 

\subsection{Deterministic Algorithms} 

In deterministic algorithms, the algorithm selects one of the \(K\) arms based on the history with probability 1 at each time step. A well-known class of deterministic algorithms for the contextual linear bandit problem is the UCB family, including LinUCB \cite{chu2011contextual} and OFUL \cite{abbasi2011improved}, which select arms at each time step based on upper confidence bounds. However, UCB family algorithms are vulnerable to misreporting because their allocation is predictable, allowing agents to easily manipulate their reported context to favor the currently optimal arm, which can lead to linear regret in the worst-case scenario. 

In contrast, the deterministic Explore-Then-Commit (ETC) algorithm, which first operates in a round-robin exploration phase before switching to a purely greedy strategy, is truthful because its decisions are independent of personal contexts, making agents’ context reporting irrelevant to the algorithm’s choice. However, the ETC algorithm incurs a relatively high regret of \(O(T^{2/3})\) \cite{lattimore2020bandit}.

\subsection{Stochastic Algorithms} 

In stochastic algorithms, the algorithm maintains a probability distribution over the arms at each time step and selects an arm according to this distribution. The \(\epsilon^t\)-greedy algorithm, which selects the greedy arm with probability \(\epsilon^t\) and chooses an arm uniformly at random with probability \(1 - \epsilon^t\), is truthful since \(\epsilon^t\) can be set so that the selection probability is independent of the contexts. However, it also suffers from a relatively high regret of \(O(T^{2/3})\) \cite{banditsintroduction}. Next, we consider Thompson sampling algorithm \cite{agrawal2013thompson}. 

In Thompson sampling, given the reported context \(x'_t\) at each time step, the system draws a sample \(\tilde{\theta}_k^t\) from the posterior distribution of \({x'_t}^\top \theta_k\), denoted as \({\mathcal{P}}^t_k(\cdot | x'_t) : \mathbb{R} \to \mathbb{R}\), and then selects arm \(\bar{a}_t = \arg\max_k \tilde{\theta}^t_k\). Note that \(\mathcal{P}_k^t(\cdot)\) represents the posterior distribution of \(\theta_k\) at time $t$, whereas \({\mathcal{P}}^t_k(\cdot|x_t')\) is the posterior distribution of \(x_t'^\top \theta_k\). The process of first sampling \(\theta_k\) from \({\mathcal{P}}^t_k\) and then multiplying it by \(x_t'\) yields the same result as directly sampling from \({\mathcal{P}}^t_k(\cdot|x_t')\) when assuming Gaussian prior and noise. Therefore, the distribution of arm selection $\mathbf{p}^t(x_t')=({p}^t_1(x_t'), \dots,{p}^t_K(x_t'))$ in Thompson sampling, which is also the policy \(\pi(x'_t, \mathcal{F}_t)\) at time \(t\) is:

\begin{align}
    p_k^t(x'_t) = \int \cdots \int_{\tilde{\theta}_k^t \geq \tilde{\theta}_j^t, \, j \in [K], \, j \neq k} d {\mathcal{P}}^t_1(\tilde{\theta}_1^t | x'_t) \cdots d {\mathcal{P}}_K(\tilde{\theta}^t_K | x'_t)\label{TSP}.
\end{align}

We can demonstrate that Thompson sampling is not truthful through a simple example in Fig. \ref{exam1}. At time $t$, given the posterior estimation \(\hat{\theta}^t_k\) for all \(k \in [3]\) and the expected arm choice policy ${\pi}(x_t', \mathcal{F}_t)=(p^t_1(x_t'), p^t_2(x_t'), p^t_3(x_t'))$ in (\ref{TSP}), the resulting expected parameter \(\Theta^t \mathbf{p}^t(x_t')\) lies within the convex hull \(\text{conv}(\hat{\theta}^t_1, \hat{\theta}^t_2, \hat{\theta}^t_3)\). Therefore, Thompson sampling can be seen as a mapping from any context \(x_t' \in \{\chi_1,\chi_2\}\) to $\Theta^t \mathbf{p}^t(x_t')$ within the convex hull. In the Thompson sampling mapping of $\Theta^t \mathbf{p}^t(\chi_1)$ and $\Theta^t \mathbf{p}^t(\chi_2)$ in Fig. \ref{exam1}, context \(\chi_2\) yields a higher inner product with \(\Theta^t \mathbf{p}^t(\chi_1)\) than with \(\Theta^t \mathbf{p}^t(\chi_2)\). Consequently, if \(\chi_2\) arrives at time $t$, it will misreport as \(\chi_1\).

\begin{figure}
    \centering
    \includegraphics[width=0.3\linewidth]{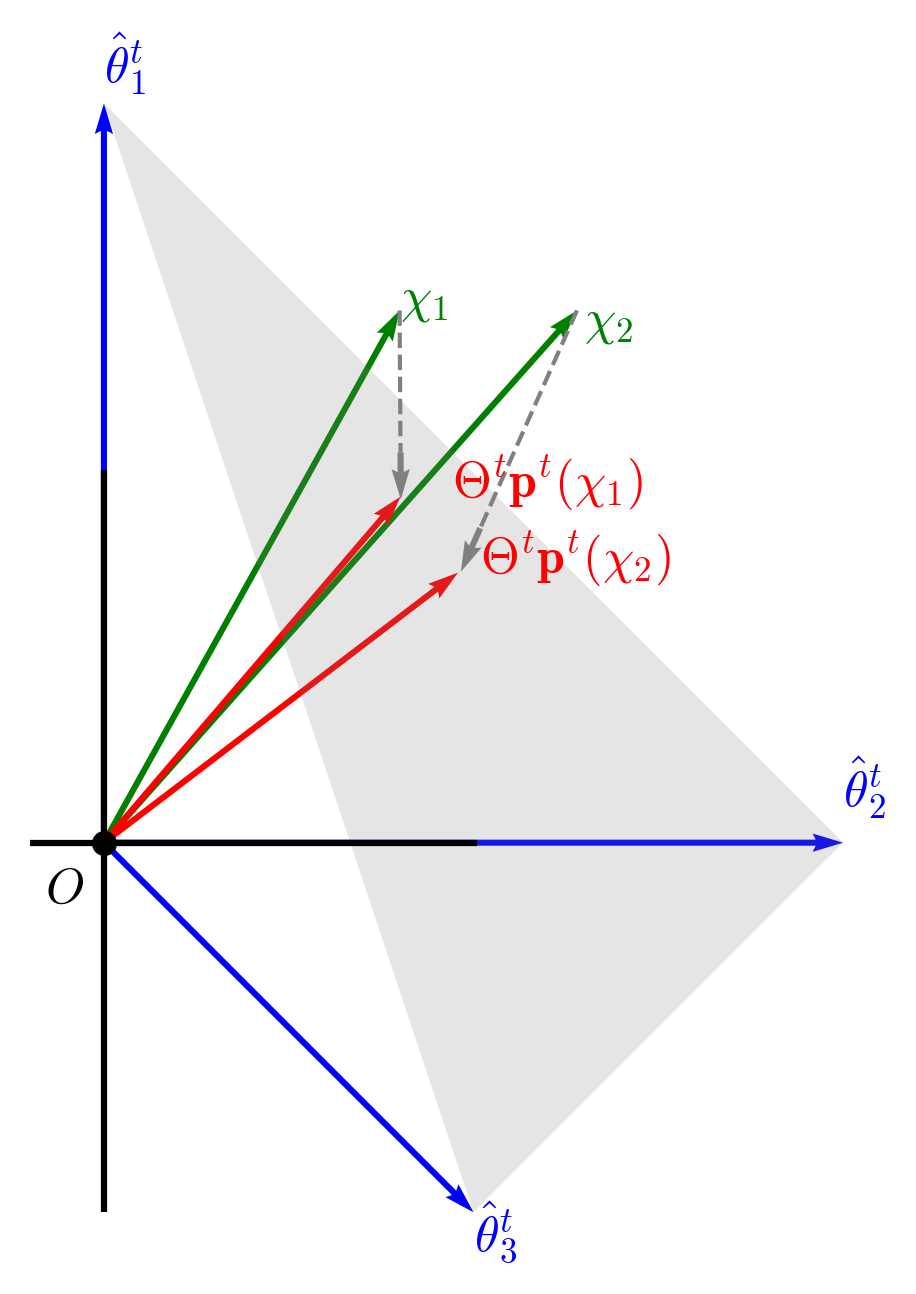}
    \caption{Geometric illustration of context \(\chi_2\)'s incentive to misreport as \(\chi_1\).}
    \label{exam1}
\end{figure}
We further prove the regret of Thompson sampling under the context misreporting.



\begin{lemma}\label{TSL}
Thompson sampling algorithm cannot ensure truthful context reporting and results in linear regret $O(T)$ in the worst case.
\end{lemma}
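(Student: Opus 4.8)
The plan is to establish the two assertions in sequence, relying on a single worst-case instance. For non-truthfulness, I would make the geometric argument preceding the lemma fully explicit. Since the expected selected parameter $\Theta^t \mathbf{p}^t(x')$ always lies in $\text{conv}(\hat{\theta}^t_1,\dots,\hat{\theta}^t_K)$, Thompson sampling is a map from the finite context set $\mathcal{X}$ into this convex hull, and I would exhibit concrete posterior means $\hat{\theta}^t_k$ together with two contexts $\chi_1,\chi_2$ (as in Fig.~\ref{exam1}) for which $\chi_2^\top \Theta^t \mathbf{p}^t(\chi_1) > \chi_2^\top \Theta^t \mathbf{p}^t(\chi_2)$, directly violating condition~(\ref{condition}). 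This certifies that the mechanism is not truthful and that $\chi_2$ strictly prefers to report $\chi_1$.

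For the $\Omega(T)$ regret, since only a worst case is required, I would design a small instance (say $K=2$ or $3$ arms and $N=2$ contexts) in which such a conflict is present from the first round and is self-sustaining. The crucial lever is that the system observes only the \emph{reported} context $x'_t$ and therefore forms its posterior updates in~(\ref{thetakt}) using $x'_t$, whereas the realized reward $r_t = x_t^\top \theta_{a_t} + \eta_t$ is generated by the \emph{true} context $x_t$. Consequently, whenever a $\chi_2$-agent misreports as $\chi_1$, the system files a mislabeled pair $(\chi_1, r_t)$, so its posterior means drift toward values that describe neither context faithfully, and the information needed to resolve the conflict for $\chi_2$ is never collected truthfully.

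I would then show the conflict persists for a constant fraction of rounds. The idea is to choose the prior $\mathcal{P}_k$ and the context geometry so that the region of belief space in which ``$\chi_2$ prefers to report $\chi_1$'' is invariant under the corrupted updates: the mislabeled data keeps the induced allocation $\mathbf{p}^t(\chi_1)$ weighted on an arm that is strictly suboptimal for $\chi_2$, reinforcing rather than correcting the belief. Conditioned on remaining in this region, each arriving $\chi_2$-agent, which occurs with probability $\beta_2 > 0$, is served a suboptimal arm with at least constant probability, incurring at least a fixed gap $\Delta > 0$ in the per-round regret~(\ref{regret}). Summing the per-round expected regret, bounded below by a positive constant proportional to $\beta_2 \Delta$, over $t = 1, \dots, T$ then yields $\mathbb{E}[\mathcal{R}(T)] = \Omega(T)$.

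The main obstacle is this persistence step: a single-round violation of~(\ref{condition}) is immediate from Fig.~\ref{exam1}, but establishing linear regret requires ruling out self-correction of the posteriors over the entire horizon. The cleanest route is to engineer the instance so that the misreporting-induced data is self-reinforcing, that is, the bad belief region is absorbing, rather than attempting to track the exact coupled dynamics of the posterior and the agents' best responses in~(\ref{agentReward}); verifying invariance of this region for the specific instance is where the real work lies.
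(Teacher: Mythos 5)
Your overall architecture matches the paper's: a concrete two-context instance with a prior-stage conflict (non-truthfulness), followed by an argument that the conflict persists so that a $\beta_2$-fraction of agents keeps receiving an arm with a constant suboptimality gap, giving $\Omega(T)$ regret. The first half of your plan is fine and is essentially what the paper does, taking $\chi_1=(1,0)$, $\chi_2=(1,\frac{1}{2})$, $\mu_1=(1,0)$, $\mu_2=(-1,1)$, $V_1=V_2=I$, computing $\mathbf{p}^1(\chi_1)=(0.76,0.24)$ and $\mathbf{p}^1(\chi_2)=(0.62,0.38)$, and concluding that $\chi_2$ strictly prefers to report $\chi_1$, violating (\ref{condition}). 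You also correctly identify the mechanism that makes the instance pathological: posteriors are built from the reported design vectors while rewards come from the true contexts.

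The genuine gap is in your persistence step, precisely the step you flag as ``where the real work lies.'' Your proposed route --- engineering the instance so that the bad belief region is \emph{absorbing} (deterministically invariant) under the corrupted updates --- cannot work as stated: rewards carry unbounded Gaussian noise, so from any belief state there is positive probability that a run of anomalous rewards drives the posterior out of any prescribed region; no nontrivial region of belief space is invariant with probability one. What is actually needed is a $T$-independent constant lower bound on the probability that the posteriors \emph{never} self-correct over the entire horizon; a per-round ``stays bad with probability $p<1$'' argument degenerates to $p^T \to 0$, and your summation step silently conflates the conditional per-round regret with the unconditional one, which requires exactly this uniform bound. The paper supplies the missing tool: under sustained misreporting, the processes $Y^k_t$ in (\ref{eq1_}), formed from the posterior means $\chi_2^\top\hat{\theta}^t_k$ computed with the misreported design $\chi_1$ but rewards generated by the true context mixture, are martingales (Lemma \ref{martin}); Doob's maximal inequality then bounds, uniformly over all $t\le T$, the probability that $\chi_2^\top\hat{\theta}^t_1$ ever falls below, or $\chi_2^\top\hat{\theta}^t_2$ ever rises above, fixed thresholds chosen via the realization conditions (\ref{realization}). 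On the intersection of these events, which has probability at least $\delta^2>0$ independent of $T$, arm 1 remains posterior-optimal for both contexts at every step; this simultaneously closes the coupled ``logical loop'' you hoped to sidestep --- persistent misordering implies $\chi_2$ keeps misreporting, which is the regime in which the martingale dynamics hold --- and guarantees $\chi_1$ never starts misreporting, so arm 1 is chosen with probability at least $1/2$ each round and the regret is at least $\frac{1}{2}\delta^2\beta_2\epsilon T$. Without a maximal-inequality-type argument (or an equivalent uniform-in-$T$ control of the posterior trajectory), your outline does not deliver the linear lower bound.
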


\begin{proof}[Sketch of Proof]
We prove the lemma by constructing an example in which, given a specific prior \(\mathcal{P}_1 \times \cdots \times \mathcal{P}_K\), one of the contexts has an incentive to misreport. Then, under a certain context arrival distribution \(\{\beta_n\}_{n \in [N]}\), we find a positive probability that the algorithm fails to identify the true optimal arm for this context throughout the process and ultimately converge to a suboptimal arm for this context. The complete proof of Lemma \ref{TSL} can be found in Appendix \ref{APTSL}.
\end{proof}
Given that existing deterministic and stochastic algorithms fail due to either untruthfulness or inefficiency, there is a need to design more effective, truthful mechanisms.

\section{Truthful Thompson sampling mechanism}\label{section4}
In this section, we introduce our truthful-Thompson sampling (truthful-TS) mechanism for contextual linear bandits to ensure truthful reporting under Thompson sampling. Our objective is to determine a new probability distribution \(\mathbf{q}^t(\chi_n)\) across the \(K\) arms at each time \(t\) for any \(n \in [N]\) that guarantees truthfulness. We derive \(\{\mathbf{q}^t(\chi_n)\}_{n \in [N]}\) from the Thompson sampling probabilities \(\{\mathbf{p}^t(\chi_n)\}_{n \in [N]}\) in Eq. (\ref{TSP}), aiming to keep \(\{\mathbf{q}^t(\chi_n)\}_{n \in [N]}\) as close as possible to \(\{\mathbf{p}^t(\chi_n)\}_{n \in [N]}\). To achieve this, we formulate a linear optimization problem (LP) at each time $t$, given by:
 \begin{align}
    \text{minimize }&\max_{n\in [N]}(||\mathbf{p}^t(\chi_n) - \mathbf{q}^t(\chi_n)||_{\infty})\nonumber\\
    \text{s.t. }&\chi_i^\top\Theta^t(\mathbf{q}^t(\chi_i)-\mathbf{q}^t(\chi_j))\geq0,\quad\forall i\neq j,i,j\in[N]\nonumber\\
    &\sum_{n\in [N]}\beta_n\mathbf{q}^t(\chi_n)=\sum_{n\in [N]}\beta_n\mathbf{p}^t(\chi_n),\nonumber\\
    & {q}^t_{1}(\chi_n)+\dots+{q}^t_{K}(\chi_n)=1,\quad n\in[N]\nonumber\\
    &{q}^t_{k}(\chi_n)\geq 0\quad k\in[K], \quad n\in[N].\label{LP1}
\end{align}
In (\ref{LP1}), the objective is to minimize the maximum difference between any \(p_k^t(\chi_n)\) and \(q_k^t(\chi_n)\) across all possible contexts, keeping the new distribution as aligned as possible with the Thompson sampling probabilities. The first constraint ensures that the agent with private context \(\chi_i\) cannot obtain a higher expected reward \(\chi_i^\top\Theta^t\mathbf{q}^t(\chi_j)\) by misreporting as \(\chi_j\) than the reward \(\chi_i^\top\Theta^t\mathbf{q}^t(\chi_i)\) by truthfully reporting. The second constraint ensures that the weighted average of choosing arm \(k\) across all contexts, based on the arrival probability \(\beta_n\) of context $\chi_n$, remains the same as the weighted average of Thompson sampling probability \(p_k^t(\cdot)\). The third and fourth constraints ensure that the solution \(\mathbf{q}^t(x)\) for each \(x \in \mathcal{X}\) forms a valid probability distribution. 

Based on (\ref{LP1}), we present our truthful-TS mechanism in Mechanism \ref{A1}. To demonstrate the feasibility of our mechanism, we first need to show that the LP in problem (\ref{LP1}) has a feasible and convergent solution, where the probability of selecting the optimal arm converges to 1. We can easily construct such a feasible solution. Define the conflict clustering \(\mathbf{C} = \{C_1, \dots, C_j\}\), where each $C\in \mathbf{C}$ is a subset of contexts and each context \(x \in \mathcal{X}\) belongs to exactly one cluster \(C \in \mathbf{C}\). Within each cluster, every context has a conflict with at least one other context in the same cluster and has no conflicts with contexts in any other clusters. Let \(\mathcal{C}\) be a mapping from any context \(x \in \mathcal{X}\) to its respective cluster, such that \(\mathcal{C}: \mathcal{X} \rightarrow \mathbf{C}\). Then, we can construct a feasible solution for (\ref{LP1}) as follows:
\begin{align}
    \mathbf{q}^t(x) = \sum_{\chi_i\in \mathcal{C}(x)}\frac{\beta_i\mathbf{p}^t(\chi_i)}{\sum_{\chi_i\in \mathcal{C}(x)} \beta_i},\forall x\in\mathcal{X}.\label{FC}
\end{align}
However, a better solution for \(\mathbf{q}^t(x)\) can be obtained by solving (\ref{LP1}), under the condition in Lemma \ref{feasible}.

\begin{algorithm}
\caption{Truthful-Thompson sampling mechanism}\label{A1}
\DontPrintSemicolon
\For{$t=1$ to $T$}{
 Calculate the arm choice distribution \(\mathbf{p}^t(x)\) for each \(x \in \mathcal{X}\) in Thompson sampling algorithm by Eq. (\ref{TSP}).\\ 
Solve \(\{\mathbf{q}^t(x)\}_{x \in \mathcal{X}}\) in (\ref{LP1}), using \(\mathcal{X}\), \(\{\mathbf{p}^t(x)\}_{x \in \mathcal{X}}\), $\{\hat{\theta}^t_k\}_{k\in[K]}$ and $\{\hat{V}^t_k\}_{k\in[K]}$ as inputs.\\
 Provide the solution \(\{\mathbf{q}^t(x)\}_{x \in \mathcal{X}}\), the posterior estimates \(\{\hat{\theta}^t_k\}_{k\in[K]}\) and \(\{\hat{V}^t_k\}_{k\in[K]}\) to the agent arriving at time \(t\).\\ 
 Observe the context \(x'_t\) reported by the agent.\\
 Choose arm \(a_t\) according to the probability distribution \(\mathbf{q}^t(x'_t)\).\\ 
 Observe the reward \(r_t\), then update \(\hat{V}_{a_t}^t\) and \(\hat{\theta}_{a_t}^t\) based on Eq. (\ref{thetakt}).} 
\end{algorithm}

\begin{lemma}\label{feasible}
Problem (\ref{LP1}) must have a feasible and convergent solution as in (\ref{FC}). Additionally, as long as $\chi_n^\top (\Theta^t(\cdot, 1:K-1)-\hat{\theta}^t_K\mathbf{1}^\top)$ are linearly independent across all $n\in[N]$, i.e.,
\begin{align}
    \sum_{n\in[N]}\lambda_n\chi_n^\top (\Theta^t(\cdot, 1:K-1)-\hat{\theta}^t_K\mathbf{1}^\top)\neq 0,\;\forall (\lambda_1,\dots,\lambda_N)\neq \mathbf{0},\label{conditionLP}
\end{align}
the feasible solution space of problem (\ref{LP1}) has a non-zero measure around $\{\mathbf{q}^t(x)\}_{x\in\mathcal{X}}$ in (\ref{FC}) with infinitely many possible solutions.
\end{lemma}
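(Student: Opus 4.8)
The plan is to prove the two assertions of Lemma~\ref{feasible} separately. First I would verify that the explicit construction in Eq.~(\ref{FC}) is genuinely feasible for the LP~(\ref{LP1}), and then I would use the linear-independence hypothesis (\ref{conditionLP}) to show that this feasible point is interior to the polytope defined by the truthfulness constraints, so that a full-dimensional neighborhood of solutions exists.

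For feasibility of (\ref{FC}), I would check each constraint in turn. The probability-simplex constraints (rows three and four) are immediate: $\mathbf{q}^t(x)$ is a convex combination of the valid distributions $\mathbf{p}^t(\chi_i)$ with weights $\beta_i/\sum_{\chi_i\in\mathcal{C}(x)}\beta_i$ that are nonnegative and sum to one, hence it is itself a distribution over the $K$ arms. The aggregate constraint $\sum_n\beta_n\mathbf{q}^t(\chi_n)=\sum_n\beta_n\mathbf{p}^t(\chi_n)$ follows by summing (\ref{FC}) cluster-by-cluster: within each cluster $C$, $\sum_{\chi_n\in C}\beta_n\mathbf{q}^t(\chi_n)=\big(\sum_{\chi_n\in C}\beta_n\big)\cdot\frac{\sum_{\chi_i\in C}\beta_i\mathbf{p}^t(\chi_i)}{\sum_{\chi_i\in C}\beta_i}=\sum_{\chi_i\in C}\beta_i\mathbf{p}^t(\chi_i)$, and summing over all clusters (which partition $\mathcal{X}$) recovers the right-hand side. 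The truthfulness constraint requires $\chi_i^\top\Theta^t(\mathbf{q}^t(\chi_i)-\mathbf{q}^t(\chi_j))\ge 0$ for all $i\ne j$. By construction, any two contexts in the same cluster are assigned the \emph{same} distribution, so for $i,j$ in one cluster the difference is exactly zero and the constraint holds with equality. For $i,j$ in different clusters, the definition of the conflict clustering says $\chi_i$ and $\chi_j$ have no conflict, which is precisely the statement (recalling the definition following Eq.~(\ref{condition})) that $\chi_i$ has no incentive to misreport as $\chi_j$; this gives the required inequality. Convergence of the selection probability to the optimal arm then follows because within-cluster averaging preserves the Thompson-sampling mass on the arm that is optimal for the whole cluster as the posterior concentrates.

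For the second assertion, the goal is to show the feasible region has nonzero measure around (\ref{FC}). The key observation is that the only constraints that can cut down dimension are the $N(N-1)$ truthfulness inequalities and the aggregate equality; the simplex constraints merely confine us to the product of simplices, which is already full-dimensional in the relevant $N(K-1)$ free coordinates after eliminating one arm per context via $\sum_k q_k^t(\chi_n)=1$. Writing $q_K^t(\chi_n)=1-\sum_{k<K}q_k^t(\chi_n)$ turns each truthfulness form $\chi_i^\top\Theta^t\mathbf{q}^t(\chi_i)$ into an affine functional in the reduced variables whose linear part is governed exactly by the matrix $\chi_n^\top(\Theta^t(\cdot,1{:}K{-}1)-\hat{\theta}^t_K\mathbf{1}^\top)$ appearing in (\ref{conditionLP}). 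I would argue that hypothesis (\ref{conditionLP}) guarantees that the aggregate equality constraint does not force the truthfulness inequalities to bind: because the $N$ functionals are linearly independent, no nontrivial combination of them collapses, so the intersection of the aggregate hyperplane with the interior of the simplex product meets the open halfspaces $\{\chi_i^\top\Theta^t(\mathbf{q}^t(\chi_i)-\mathbf{q}^t(\chi_j))>0\}$ in a relatively open, hence positive-measure, set.

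The main obstacle I anticipate is making the last step fully rigorous: one must show not merely that the feasible set is nonempty, but that the specific point (\ref{FC}) can be perturbed without violating the aggregate equality, i.e.\ that the aggregate constraint and the linear-independence condition are jointly compatible with an open neighborhood inside the simplex product. Concretely, the danger is that some truthfulness inequalities hold with equality at (\ref{FC}) (indeed all within-cluster ones do), so local full-dimensionality must come from perturbation directions that simultaneously respect $\sum_n\beta_n\mathbf{q}^t(\chi_n)=\sum_n\beta_n\mathbf{p}^t(\chi_n)$ and keep those active inequalities nonnegative. I would handle this by exhibiting a cone of admissible perturbations: since (\ref{conditionLP}) ensures the active truthfulness functionals are not all simultaneously pinned by the single aggregate equality, there remains positive-dimensional freedom to move $\mathbf{q}^t(\chi_i)$ within each cluster (trading mass between contexts) while holding the $\beta$-weighted average fixed, which both preserves feasibility and sweeps out a set of nonzero measure around (\ref{FC}).
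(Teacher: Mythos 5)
Your overall architecture parallels the paper's (verify feasibility of (\ref{FC}), then combine the linear-independence hypothesis with interiority of the Thompson-sampling probabilities to get a positive-measure feasible set), but the central technical step of the second part is missing rather than merely sketched. The paper does not stop at noting that the $N$ vectors $\chi_n^\top(\Theta^t_{1:K-1}-\hat{\theta}^t_K\mathbf{1}^\top)$ are linearly independent; it must convert this into non-degeneracy of the $N(N-1)$ pairwise truthfulness constraints in the reduced joint variable. Concretely, after substituting $\mathbf{q}^t(\chi_n)=\sum_{i\in[N]}\beta_i\mathbf{p}^t(\chi_i)+\xi_n$ and eliminating $\xi_{n,K}$ and $\xi_{N,k}$ through the equality constraints, each constraint normal takes the Kronecker-product form $\chi_i^\top(\Theta^t_{1:K-1}-\hat{\theta}^t_K\mathbf{1}^\top)\otimes A_i^\top$ as in (\ref{halfspace}), and the paper proves (Lemma \ref{kron}) that independence of the $N$ context vectors together with independence of the columns of each constructed $A_i$ forces all $N(N-1)$ normals to be linearly independent; only then does the Gordan-type criterion (origin not in the convex hull of the normals) yield a cone with non-empty interior, which is intersected with a $\delta$-ball permitted by the strict interiority of $\sum_n\beta_n\mathbf{p}^t(\chi_n)$ in the simplex. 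Your proposal replaces this entire chain with the assertion that ``no nontrivial combination of them collapses,'' which is precisely what requires proof: linear independence of $N$ vectors in $\mathbb{R}^{K-1}$ does not by itself exclude a degenerate nonnegative combination of the $N(N-1)$ pairwise-difference functionals acting on $(\xi_1,\dots,\xi_N)$, and the ``positive-dimensional freedom'' you invoke at the end is strictly weaker than the non-zero measure the lemma claims.

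A second, smaller problem is your cross-cluster feasibility argument for (\ref{FC}). The no-conflict relation defining the clustering is stated with respect to the original probabilities $\mathbf{p}^t$, but the constraint you must verify compares cluster \emph{averages}: you need $\sum_{\chi_l\in\mathcal{C}(\chi_i)}\beta_l\,\chi_i^\top\Theta^t\mathbf{p}^t(\chi_l)\big/\sum_{\chi_l\in\mathcal{C}(\chi_i)}\beta_l\;\geq\;\chi_i^\top\Theta^t\mathbf{q}^t(\chi_j)$. The left side can fall strictly below $\chi_i^\top\Theta^t\mathbf{p}^t(\chi_i)$, because the cluster of $\chi_i$ may contain contexts $\chi_l$ merged in only through a chain of conflicts not involving $\chi_i$, for which $\chi_i^\top\Theta^t\mathbf{p}^t(\chi_l)$ is small; meanwhile the right side can be arbitrarily close to $\chi_i^\top\Theta^t\mathbf{p}^t(\chi_i)$ while still respecting no-conflict. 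So ``$\chi_i$ and $\chi_j$ have no conflict under $\mathbf{p}^t$'' does not deliver the inequality for the averaged $\mathbf{q}^t$. To be fair, the paper itself treats this feasibility claim as ``straightforward to observe'' and proves only the measure statement in its appendix (centering its neighborhood at the pooled aggregate rather than at (\ref{FC})), but your written justification for this step is a non-sequitur rather than a proof.
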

\begin{proof}[Sketch of proof]
It is straightforward to observe that setting \(\mathbf{q}^t(x)\) as the weighted average of \(\mathbf{p}^t(x)\) within each subset of conflicting contexts, with weights proportional to \(\beta_n\), yields a feasible solution. To prove the second part of the lemma, we start by noting that the feasible solution of (\ref{LP1}) must take the form \(\mathbf{q}^t(\chi_n) = \sum_{i \in [N]} \beta_i \mathbf{p}^t(\chi_i) + \xi_n\), where \(\sum_{n \in [N]} \beta_n \xi_n = 0\). By substituting \(\mathbf{q}^t(\chi_n) = \sum_{i \in [N]} \beta_i \mathbf{p}^t(\chi_i) + \xi_n\) into (\ref{LP1}) and redefining the variables in terms of \(\xi_n\) for all \(n \in [N]\), we then reformulate problem (\ref{LP1}) into an equivalent form as follows:
\begin{align}
    \min \max_{n\in[N]}&\left\| {\sum_{i \in [N]}\beta_i\mathbf{p}^t(\chi_i)} -\mathbf{p}^t(\chi_n)+ \xi_n\right\|_{\infty} \nonumber \\
    \text{s.t.} &\quad \chi_i^\top \Theta^t (\xi_i - \xi_j) \geq 0, \quad \forall i\neq j, \ i,j\in[N] \nonumber \\
    &\quad \xi_{n,1} + \dots + \xi_{n,K} = 0, \quad \forall n\in[N] \nonumber \\
    &\quad \beta_1\xi_{1,k} + \dots + \beta_N\xi_{N,k} = 0, \quad \forall k\in[K] \nonumber \\
    &\quad 0 \leq \sum_{i}\beta_i{p}^t_k(\chi_i) + \xi_{n,k} \leq 1, \quad \forall n\in[N], k\in[K]. \label{LP2-}
\end{align}
We can reformulate the first three constraints of (\ref{LP2-}) as a convex cone given by
\begin{align*}
\begin{pmatrix}
    \chi_1^\top (\Theta^t(\cdot, 1:K-1) - \hat{\theta}^t_K\mathbf{1}^\top) \otimes A_1^\top \\
    \vdots \\
    \chi_N^\top (\Theta^t(\cdot, 1:K-1) - \hat{\theta}^t_K\mathbf{1}^\top) \otimes A_N^\top
\end{pmatrix}
\text{vec}(\mathcal{E}^\top) \geq 0,
\end{align*}
where $\mathcal{E}$ represents the first $K-1$ rows and first $N-1$ columns of matrix $(\xi_1,\dots,\xi_N)$, and \(A_n\) is a constant matrix constructed for each \(n \in [N]\). We show that this convex cone has a non-zero measure when all vectors \( \chi_n^\top(\Theta^t(\cdot, 1:K-1) - \hat{\theta}^t_K \mathbf{1}^\top) \) for \(n \in [N]\) are linearly independent, based on the properties of the constructed matrix \(A_n\). Furthermore, since the distribution \(\mathbf{p}^t(x)\) for any \(x \in \mathcal{X}\) lies within the interior of the simplex \(\Delta^K\), representing all probability distributions over \(K\) arms, we can construct a feasible solution space with a non-zero measure. For the complete proof of this lemma, please refer to Appendix \ref{APfeasible}.
\end{proof}

When Eq. (\ref{conditionLP}) is satisfied, we can modify the first constraint of problem (\ref{LP1}) to \(\chi_i^\top \Theta^t (\mathbf{q}^t(\chi_i) - \mathbf{q}^t(\chi_j)) \geq \epsilon\), where \(\epsilon\) is a sufficiently small positive value to ensure that truthful reporting becomes a strictly dominant strategy for agents. Furthermore, when the feasible space has a non-zero measure, it can yield an improved optimal value for problem (\ref{LP1}) compared to $\{\mathbf{q}^t(x)\}_{x\in\mathcal{X}}$ in (\ref{FC}).

However, solving the linear program in (\ref{LP1}) incurs a complexity higher than \(O((KN)^{2+1/6})\) \cite{cohen2021solving}, whereas identifying (\ref{FC}) has a lower complexity of \(O(KN^2)\), which arises from the process of identifying conflict clusters. For larger \( K \) and \( N \), we can improve efficiency of Mechanism \ref{A1} by using \(\{\mathbf{q}^t(x)\}_{x \in \mathcal{X}}\) from (\ref{FC}) as a substitute for solving (\ref{LP1}).




\section{Regret Analysis Under the Same Optimal Arm for two contexts}\label{section5}
In this section, we analyze the regret performance of our truthful mechanism in the simple but fundamental case of two contexts. The two-context scenario is
common in real-world settings. For example, online platforms often categorize users as either Mac or Windows users to tailor sales strategies \cite{adekotujo2020comparative}. Similarly, in clinical trials, hospitals categorize patients as either treatment-naive or treatment-experienced when conducting studies \cite{neale2007treatment}. For agents with two possible contexts, the scenarios are limited to either having the same or different optimal arms to misreport the other context. As \(N\) increases, the misreporting patterns become exponentially more intricate among agents, significantly complicating the regret analysis. In Section \ref{section7}, we still run simulations for multiple contexts to show similar results as presented in this section.

We focus on problem-dependent frequentist regret because misreporting behavior is influenced by the specific realization of the prior, requiring separate analysis for different cases. Specifically, we divide the realizations into two cases: (1) when the two contexts share the same optimal arm, and (2) when the two contexts have different optimal arms. The differing misreporting incentives for each case lead to major differences in regret analysis. Still, Bayesian regret can be obtained by taking the expectation over our frequentist regret, yielding the same order. We begin by analyzing the regret in the scenario where the two contexts, \(\chi_1\) and \(\chi_2\), share the same optimal arm in this section. Addressing the other scenario requires new extra techniques (see Section \ref{section6}).

When two contexts share the same optimal arm, the probability of selecting the optimal arm will eventually converge to 1 for both contexts. However, one context must converge faster than the other. Consequently, the context with a slower convergence rate may have an incentive to misreport for most of the time steps during the process. Inspired by this, we consider the two contexts collectively and derive an upper bound on the total number of suboptimal pulls for both contexts.
\begin{theorem}\label{Theorem1}
For the realization of prior, \(\{\theta_k\}_{k \in [K]}\), such that the two contexts \(\chi_1\) and \(\chi_2\) share the same optimal arm \(\alpha\), the frequentist regret of our truthful-TS mechanism in Mechanism \ref{A1} is $O(\ln T)$ to be upper bounded by
\begin{align*}
&\sum_{j\neq \alpha}\bigg(\sum_{n=1}^2 \frac{18}{\Delta_{n,j}^2}\ln \frac{T\Delta_{n,j}^2}{36}+C_{n,j}\bigg)(1+{\beta_{3-n}}/{\beta_{n}}) \max_{n=1,2} \Delta_{n,j},
\end{align*}
where \(\Delta_{n,j}=\chi_n^\top\theta_{\alpha}-\chi_n^\top\theta_j\) is the reward gap between the optimal arm \(\alpha\) and arm \(j \neq \alpha\) for context \(\chi_n\). $C_{n,j}$ is a constant for $n\in\{1,2\}$ and $j\in[K]$.
\end{theorem}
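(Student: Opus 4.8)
The plan is to turn the regret (\ref{regret}) into a weighted count of suboptimal pulls, reduce the truthful allocation $\mathbf{q}^t$ back to the Thompson-sampling allocation $\mathbf{p}^t$ through the mass-preservation constraint of (\ref{LP1}), and then invoke a Gaussian Thompson-sampling bound on the number of suboptimal pulls. Since the first constraint of (\ref{LP1}) makes truthful reporting optimal (feasibility guaranteed by Lemma \ref{feasible}), each agent of type $\chi_n$ reports $\chi_n$ and is served by $\mathbf{q}^t(\chi_n)$. Because $\chi_1$ and $\chi_2$ share the optimal arm $\alpha$, serving a type-$\chi_n$ agent with arm $j\neq\alpha$ incurs exactly $\Delta_{n,j}=\chi_n^\top\theta_\alpha-\chi_n^\top\theta_j$, so I would first write
\[
\mathbb{E}[\mathcal{R}(T)]=\sum_{j\neq\alpha}\sum_{n=1}^2\beta_n\,\Delta_{n,j}\sum_{t=1}^T q_j^t(\chi_n),
\]
reducing everything to controlling $\sum_t q_j^t(\chi_n)$ for each suboptimal $j$ and each context.

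The second (mass-preservation) constraint of (\ref{LP1}), which here reads $\beta_1 q_j^t(\chi_1)+\beta_2 q_j^t(\chi_2)=\beta_1 p_j^t(\chi_1)+\beta_2 p_j^t(\chi_2)$, is the bridge back to Thompson sampling. Combined with $q_j^t(\chi_{3-n})\ge 0$ it gives the pointwise bound $\beta_n q_j^t(\chi_n)\le \beta_n p_j^t(\chi_n)+\beta_{3-n}p_j^t(\chi_{3-n})$, i.e.\ $q_j^t(\chi_n)\le p_j^t(\chi_n)+(\beta_{3-n}/\beta_n)\,p_j^t(\chi_{3-n})$. Summing over $t$ and charging each suboptimal pull at $\max_{n}\Delta_{n,j}$ then collapses the two contexts into a single per-arm quantity, and this is exactly where the prefactor $(1+\beta_{3-n}/\beta_n)$ in the statement appears: the slower-converging context can only inflate its suboptimal mass by borrowing the faster context's preserved mass, which is what the paper means by treating the two contexts collectively.

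Next I would bound the Thompson-sampling counts $\sum_t p_j^t(\chi_n)$ by $\frac{18}{\Delta_{n,j}^2}\ln\frac{T\Delta_{n,j}^2}{36}+C_{n,j}$. Under Gaussian priors and unit-variance noise the posterior of $\chi_n^\top\theta_k$ is Gaussian with the closed form in (\ref{thetakt}), so this is an adaptation of the standard Gaussian Thompson-sampling argument: partition the horizon according to whether the optimal arm's posterior mean is well concentrated, bound the selection probability (\ref{TSP}) that a suboptimal sample beats it by Gaussian tail estimates once arm $j$ has accumulated enough effective observations, and let the constant $C_{n,j}$ absorb the lower-order anti-concentration and insufficient-exploration terms. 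The logarithmic threshold $\frac{18}{\Delta_{n,j}^2}\ln(T\Delta_{n,j}^2/36)$ is precisely the number of effective samples needed to separate $\chi_n^\top\theta_\alpha$ from $\chi_n^\top\theta_j$ at a confidence level scaling with $T$.

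The hard part is that the concentration argument of the previous step cannot be quoted verbatim: unlike in plain Thompson sampling, the posterior, and hence $p_j^t$ itself, evolves from the arms actually pulled under $\mathbf{q}^t$ rather than under $\mathbf{p}^t$, so the usual self-consistency that arm $j$ is pulled in proportion to $p_j^t$ no longer holds exactly. Here I would again use the mass-preservation constraint, which guarantees that the $\beta$-weighted number of pulls of every arm is identical under $\mathbf{q}^t$ and $\mathbf{p}^t$; this keeps the self-referential loop of the Thompson-sampling analysis intact in weighted form, so the information accrued about each $\theta_k$, and in particular the concentration rate of the suboptimal arm $j$, is governed by the same counts as in pure Thompson sampling. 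The shared-optimal-arm assumption is what allows the loop to close: as either context's allocation converges toward $\alpha$ it drives the other's as well, so the redistribution cannot perpetually prop up a suboptimal arm. Substituting the Gaussian-TS bound into the two displays above and summing over $j\neq\alpha$ then yields the stated $O(\ln T)$ regret.
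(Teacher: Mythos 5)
Your skeleton matches the paper's: the same regret decomposition charged at $\max_{n=1,2}\Delta_{n,j}$, the same use of the mass-preservation constraint of (\ref{LP1}) to pass from $\mathbf{q}^t$ back to $\mathbf{p}^t$ (your pointwise bound $q_j^t(\chi_n)\le p_j^t(\chi_n)+(\beta_{3-n}/\beta_n)p_j^t(\chi_{3-n})$ is a legitimate variant of the paper's Eq.~(\ref{eq30}), and arguably more careful, since the constraint only fixes the sum over $n$), and the same Agrawal--Goyal-style Gaussian analysis with $\delta=\Delta_{n,j}/3$ for the final counts. The gap is exactly at the step you yourself flag as the hard part, and your proposed fix does not close it. The quantity you must bound is $\sum_t\mathbb{E}[p_j^t(\chi_n)]$, where $p_j^t$ is computed from the history generated by the \emph{mechanism}, i.e., by pulls $a_t\sim\mathbf{q}^t(x_t)$ whose posterior updates are indexed by the arriving contexts. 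Mass preservation equates, conditionally on $\mathcal{F}_t$, only the $\beta$-weighted \emph{marginal} probability of pulling each arm under $\mathbf{q}^t$ and $\mathbf{p}^t$; it says nothing about the \emph{joint} law of (context, arm) pairs, and it is this joint law that determines each arm's design matrix $\sum_{\tau\in\mathcal{T}_k^t}x_\tau x_\tau^\top$ and hence the concentration rates of $\chi_n^\top\hat{\theta}_k^t$ on which the Gaussian-TS lemmas rely. Your sentence ``the information accrued about each $\theta_k$ is governed by the same counts as in pure Thompson sampling'' is precisely the assertion that needs proof, and matching weighted marginals does not imply it: under the mechanism, arm $j$'s pulls can be distributed across contexts quite differently than under TS run on the true context sequence. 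Likewise, the ``loop closing'' intuition in your last paragraph (convergence of one context drives the other) is the conclusion, not an argument.

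The paper closes this gap with a concrete coupling that your proposal lacks. It observes that in the worst case (contexts conflicting at every step, solution (\ref{FC})) the mechanism's arm draw $\sum_n\beta_n\mathbf{p}^t(\chi_n)$ can be reimplemented as: sample a virtual context $\tilde{x}_t\sim\{\beta_1,\beta_2\}$ independently of the real arrival, then run standard Thompson sampling on $\tilde{x}_t$. This makes the arm-selection process \emph{literally} a TS process on the i.i.d.\ virtual sequence, so the per-arm bounds (Lemmas \ref{lemmaut1}--\ref{lemmaut3}) apply to it; the real context then enters only through the change-of-measure identity (\ref{eq31}), which exchanges the event $\{x_t=\chi_n,\tilde{x}_t=\chi_{3-n}\}$ for $\{x_t=\chi_{3-n},\tilde{x}_t=\chi_{3-n}\}$ at the price of a factor $\beta_n/\beta_{3-n}$ --- this coupling-plus-reweighting, culminating in (\ref{eq17}), is where the paper's prefactor $(1+\beta_{3-n}/\beta_n)$ actually comes from. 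To repair your route you would need to supply this device (or an explicit argument that the mechanism's posterior process matches, or is dominated by, a genuine TS process), rather than the appeal to weighted self-consistency.
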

\begin{proof}
Let \(\alpha\) denote the optimal arm for both contexts. Recall that $a_t$ represents the arm chosen under our truthful-TS mechanism in Mechanism \ref{A1}, and $\bar{a}_t$ represents the arm chosen under the standard TS algorithm. We analyze the worst-case scenario where the two contexts conflict at every step, which yields an upper bound on the regret of Mechanism \ref{A1}. Indeed, for a trajectory in which the contexts have no conflict, we can bound its regret by that of standard Thompson sampling. Since the two contexts share the same optimal arm, we can decompose the regret in equation (\ref{regret}) as follows:
    \begin{align}
        \mathbb{E}[\mathcal{R}(T)]
        =& \mathbb{E}\left[\sum_{t=1}^T \left(x_t^\top \theta_{a_t^*} - x_t^\top \theta_{a_t}\right)\right]\nonumber\\
    =&\sum_{t=1}^T\sum_{n=1}^2\sum_{j\neq\alpha}\mathbb{E}[\mathbf{1}(x_t=\chi_n,a_t=j)]\Delta_{n,j}\nonumber\\
        \leq&\sum_{t=1}^T\sum_{n=1}^2\sum_{j\neq\alpha}\mathbb{E}[\mathbf{1}(x_t=\chi_n,a_t=j)]{\max_{n=1,2}\Delta_{n,j}}\nonumber\\
        =&\sum_{t=1}^T\sum_{n=1}^2\sum_{j\neq\alpha}\mathbb{E}[\beta_n\mathbb{E}[\mathbf{1}(a_t=j)|x_t=\chi_n,\mathcal{F}_t]]{\max_{n=1,2}\Delta_{n,j}}.\label{eq21}
        \end{align}
The expectation in the first equality is taken over the arrivals of contexts, the arm selections, and the observed rewards. Then, using the second constraint of our LP in (\ref{LP1}), the $\mathbb{E}[\beta_n\mathbb{E}[\mathbf{1}(a_t=j)|x_t=\chi_n,\mathcal{F}_t]]$ of (\ref{eq21}) equals the following:
        \begin{align}
\mathbb{E}[\beta_nq^t_j(\chi_n)]
        =&\mathbb{E}[\beta_np^t_j(\chi_n)]
        =\mathbb{E}[\mathbf{1}(x_t=\chi_n,\bar{a}_t=j)].\label{eq30}
        \end{align}
       Consider an alternative implementation of Mechanism \ref{A1}. At each time step $t$, if the two contexts conflict, regardless of which context actually arrives, the system first samples a virtual context \( 
\tilde{x}_t \,\in\,\{\chi_1,\chi_2\}\) according to the distribution $\{\beta_1,\beta_2\}$, and then selects an arm by running Thompson sampling on that sampled context $\tilde{x}_t$. Under this, the (\ref{eq30}) can be decomposed as
        \begin{align*}
           \mathbb{E}[\mathbf{1}(x_t=\chi_n,\tilde{x}_t=\chi_{n},\bar{a}_t=j)+\mathbf{1}(x_t=\chi_n,\tilde{x}_t=\chi_{3-n},\bar{a}_t=j)].\nonumber
        \end{align*}
Along the trajectory for the first term above, the expectation coincides exactly with standard Thompson sampling. Note that, given history \(\mathcal{F}_t\) and the sampled virtual context \(\tilde x_t\), the mechanism selects arm \(j\) solely according to its Thompson‐sampling probability for \(\tilde x_t\), rendering the choice independent of the actual context \(x_t\). Under this observation, the second term above can be rewritten as:
\begin{align}
&\mathbb{E}[\mathbb{E}[\mathbf{1}(x_t=\chi_n,\tilde{x}_t=\chi_{3-n},\bar{a}_t=j)|\mathcal{F}_t]]\nonumber\\
    =&\mathbb{E}[\mathbb{E}[\mathbf{1}(\tilde{x}_t=\chi_{3-n},\bar{a}_t=j)|\mathcal{F}_t]\mathbb{P}(x_t=\chi_n)]\nonumber\\
    =&\mathbb{E}[\mathbb{E}[\mathbf{1}(\tilde{x}_t=\chi_{3-n},\bar{a}_t=j)|\mathcal{F}_t]\mathbb{P}(x_t=\chi_{3-n})]{\beta_{n}}/{\beta_{3-n}}\nonumber\\
    =&\mathbb{E}[\mathbf{1}(x_t=\chi_{3-n},\tilde{x}_t=\chi_{3-n},\bar{a}_t=j)]{\beta_{n}}/{\beta_{3-n}}.\label{eq31}
\end{align}
By combining the equations above, the term $\sum_{t=1}^T\sum_{n=1}^2\mathbb{E}\!\bigl[\beta_n\,\mathbb{E}[1(a_t=j)\mid x_t=\chi_n,\mathcal{F}_t]\bigr]$
in (\ref{eq21}), which quantifies the expected number of pulls of the suboptimal arm \(j\), can be equivalently written as:
        \begin{align}
        \sum_{t=1}^T\sum_{n=1}^2\mathbb{E}[\mathbf{1}(x_t=\chi_n,\tilde{x}_t=\chi_n,\bar{a}_t=j)](1+{\beta_{3-n}}/{\beta_{n}}).\label{eq17}
\end{align}
In this way, we convert the regret of our truthful-TS mechanism into the regret under the Thompson sampling algorithm. Define \(\tilde{\theta}^t_{n,k}\) as the value sampled from the posterior distribution for arm \( k \) by context \( \chi_n \) at time \( t \) in the Thompson sampling algorithm. Inspired by the method from \cite{agrawal2017near}, which bounds the number of suboptimal arm pulls in Thompson sampling, we decompose Eq. (\ref{eq17}) by considering the following two events: one event $E^\mu_{n,j}(t)$ where the posterior mean estimate \(\chi_n^\top \hat{\theta}_k^t\) does not deviate significantly from the true value \(\chi_n^\top \theta_k\), and the other event $E^\theta_{n,j}(t)$ where \(\tilde{\theta}^t_{n,k}\) remains close to the posterior mean \(\chi_n^\top \hat{\theta}_k^t\) at time $t$. These events are formally defined as follows:
\begin{align*}
    &E^{\mu}_{n,j}(t): \chi_n^\top \hat{\theta}_j^t \leq \chi_n^\top {\theta}_j + \frac{\Delta_{n,j}}{3},\\
    &E^{\theta}_{n,j}(t): \tilde{\theta}_{n,j}^t \leq \chi_n^\top \hat{\theta}_j^t + \frac{\Delta_{n,j}}{3}, \quad n \in\{1,2\},\ j \in [K].
\end{align*}
Using these event realizations, we decompose equation (\ref{eq17}) and, summing over all \(t\) and \(n\), obtain:
\begin{align}
& \sum_{t=1}^T\sum_{n=1}^2\mathbb{E}[\mathbf{1}(\bar{a}_t=j,x_t=\chi_n,\tilde{x}_t=\chi_n)](1+{\beta_{3-n}}/{\beta_{n}})\nonumber\\
    =&\sum_{t=1}^T\sum_{n=1}^2\mathbb{E}[\mathbf{1}(\bar{a}_t=j,\tilde{x}_t=\chi_n)|M_n(t)=t-1](1+{\beta_{3-n}}/{\beta_{n}})\nonumber\\ = & \sum_{t=1}^T\sum_{n=1}^2 \mathbb{E}\left[(\mathbf{1}(\bar{a}_t = j, E^{\mu}_{n,j}(t), E^{\theta}_{n,j}(t))  + \mathbf{1}(\bar{a}_t = j, E^{\mu}_{n,j}(t), \overline{E^{\theta}_{n,j}(t)}) \right. \nonumber\\
    & \left. + \mathbf{1}(\bar{a}_t = j, \overline{E^{\mu}_{n,j}(t)}))\mathbf{1}(\tilde{x}_t=\chi_n) \mid M_n(t)=t-1 \right](1+{\beta_{3-n}}/{\beta_{n}}),\label{eq23}
\end{align}
where \(M_n(t)\) represent the number of agent arrivals with context \(\chi_n\) before time \(t\).

Inspired by the method in \cite{agrawal2017near}, we upper bound the three terms using the following Lemmas \ref{lemmaut1}, \ref{lemmaut2}, and \ref{lemmaut3}, respectively. The complete proofs of these lemmas can be found in Appendix \ref{APlemmaut1}, \ref{APlemmaut2} and \ref{APlemmaut3}. The upper bound of the first term directly follows from Lemma  \ref{lemmaut1}. The upper bounds of the second and third terms are obtained from Lemma \ref{lemmaut2} and Lemma \ref{lemmaut3} by setting \(\delta = \Delta_{n,j}/3\). Let \(C_{n,j}\) summarize all the constant parts in the number of pulls for arm $j$ and context $n$, then we obtain and complete the proof of Theorem \ref{Theorem1}.

\begin{lemma}\label{lemmaut1}
    In Thompson sampling with arm action $\bar{a}_t$ at time $t$, the expected total number of pulls of a suboptimal arm \( j\neq \alpha \) by context \( \chi_n\in\mathcal{X} \), together with the occurrence of events \( E^{\mu}_{n,j}(t) \) and \( E^{\theta}_{n,j}(t) \), can be upper bounded by a constant $C_{n,j}^1$ for $n\in\{1,2\}$ and $j\in[K]$:
\begin{align*}
    &\sum_{t=1}^T\mathbb{E}\left[\mathbf{1}(\bar{a}_t = j, E^{\mu}_{n,j}(t), E^{\theta}_{n,j}(t))\mid\tilde{x}_t=\chi_n,M_n(t)=t-1\right]< C_{n,j}^1.
\end{align*}

\end{lemma}
\begin{lemma}\label{lemmaut2}
In Thompson sampling with arm action $\bar{a}_t$ at time $t$, the expected total number of pulls of a suboptimal arm $j\neq \alpha$ by context $\chi_n\in\mathcal{X}$, together with the occurrence of event $\tilde{\theta}^t_{n,j}-\chi_n^\top\hat{\theta}^t_j>\delta$ for $\delta>0$, can be upper bounded as follows:
    \begin{align*}
        &\sum_{t=1}^T\mathbb{E}\bigg[\mathbf{1}(\bar{a}_t=j,\tilde{\theta}^t_{n,j}-\chi_n^T\hat{\theta}_j^t>\delta)\mid \tilde{x}_t=\chi_n,M_n(t)=t-1\bigg]\\
        \leq&  \frac{2}{\delta^2}\ln \frac{T\delta^2}{4}-\frac{1}{\chi_n^\top V_j\chi_n}+\frac{2}{\delta^2}.
    \end{align*}
\end{lemma}

\begin{lemma}\label{lemmaut3}                                      
In Thompson sampling with arm action $\bar{a}_t$ at time $t$, the expected total number of pulls of suboptimal arm $j\neq \alpha$ under any context $\chi_n\in\mathcal{X}$, together with the occurrence of event $\chi_n^\top\hat{\theta}^t_j-\chi_n^\top{\theta}_j>\delta$ for $\delta>0$, can be upper bounded as follows:
    \begin{align*}
        &\sum_{t=1}^T\mathbb{E}\bigg[\mathbf{1}(\bar{a}_t=j,\chi_n^T\hat{\theta}_j^t-\chi_n^\top\theta_{j}>\delta)\mid \tilde{x}_t=\chi_n,M_n(t)=t-1\bigg]\\
        \leq&\max\left(\left\lceil \frac{\chi_n^\top (\mu_j - \theta_j)}{\delta \chi_n^\top V_j \chi_n} - \frac{1}{\chi_n^\top V_j \chi_n} \right\rceil, 0\right)+1+\frac{\exp\bigg(\frac{\delta\chi_n^T(\mu_j-\theta_j)}{2\chi_n^TV_j\chi_n}\bigg)}{\delta^2}.
    \end{align*}
\end{lemma}

\end{proof}

Theorem \ref{Theorem1} and our proof demonstrate that our truthful-TS mechanism shares the same regret order as the Thompson sampling algorithm, indicating that our mechanism can achieve an optimal regret order while ensuring truthfulness.

\section{Regret Analysis Under Different Optimal Arms for two contexts}\label{section6}
When two contexts have different optimal arms, define the true optimal arms for \(\chi_1\) and \(\chi_2\) as \(\alpha_1\) and \(\alpha_2\), respectively, with \(\alpha_1 \neq \alpha_2\). The regret-bounding method used in the previous section, where contexts share the same optimal arm, cannot be applied here. This is because arm \(\alpha_{3-n}\) is suboptimal for context \(\chi_n\) but optimal for context \(\chi_{3-n}\), which prevents us from jointly bounding the total number of pulls of arm \(\alpha_{3-n}\) across both contexts. As a result, we must also consider the number of pulls of the optimal arm \(\alpha_{3-n}\) for context \(\chi_{3-n}\), which is not considered when bounding the regret in Thompson sampling.

First, we illustrate the misreporting incentives over time in Thompson sampling when agents have different optimal arms based on their beliefs, which provides insight into our proof approach. For illustration, we assume an ideal Thompson sampling scenario in which agents report truthfully in Fig. \ref{exam21} and \ref{exam22}. Initially, agents may have an incentive to misreport even if they have different prior optimal arms, as illustrated in Fig. \ref{exam21}. This is because, at first, the variance of the prior distribution is relatively large to encourage exploration, causing the range under the Thompson sampling mapping from \(\mathcal{X}\) to \(\text{conv}(\hat{\theta}_1^t, \hat{\theta}_2^t, \hat{\theta}_3^t)\) to be concentrated near the center of \(\text{conv}(\hat{\theta}^t_1, \dots, \hat{\theta}^t_K)\). As a result, even though the probability of selecting \(\chi_2\)’s prior optimal arm 2 is higher than that of $\chi_1$ (\({p}^t_2(\chi_2) > {p}^t_2(\chi_1)\)), \(\chi_2\) can still obtain a higher expected reward \(\chi_2^\top \Theta^t \mathbf{p}^t(\chi_1)\) by misreporting as \(\chi_1\). As the algorithm progresses and posterior variance decreases, the range of the Thompson sampling mapping shifts towards the extreme points of the simplex $\Delta^3$, as shown in Fig. \ref{exam22}, where each context ultimately achieves a higher expected reward under its own Thompson sampling distribution. Thus, when contexts have different optimal arms, the number of pulls of arm $\alpha_{3-n}$ by $\chi_n$ can be analyzed by separately considering the time steps when the contexts are in conflict and when they are not.
\begin{figure}
    \centering
    \begin{minipage}{0.18\textwidth}
        \centering
        \includegraphics[width=\textwidth]{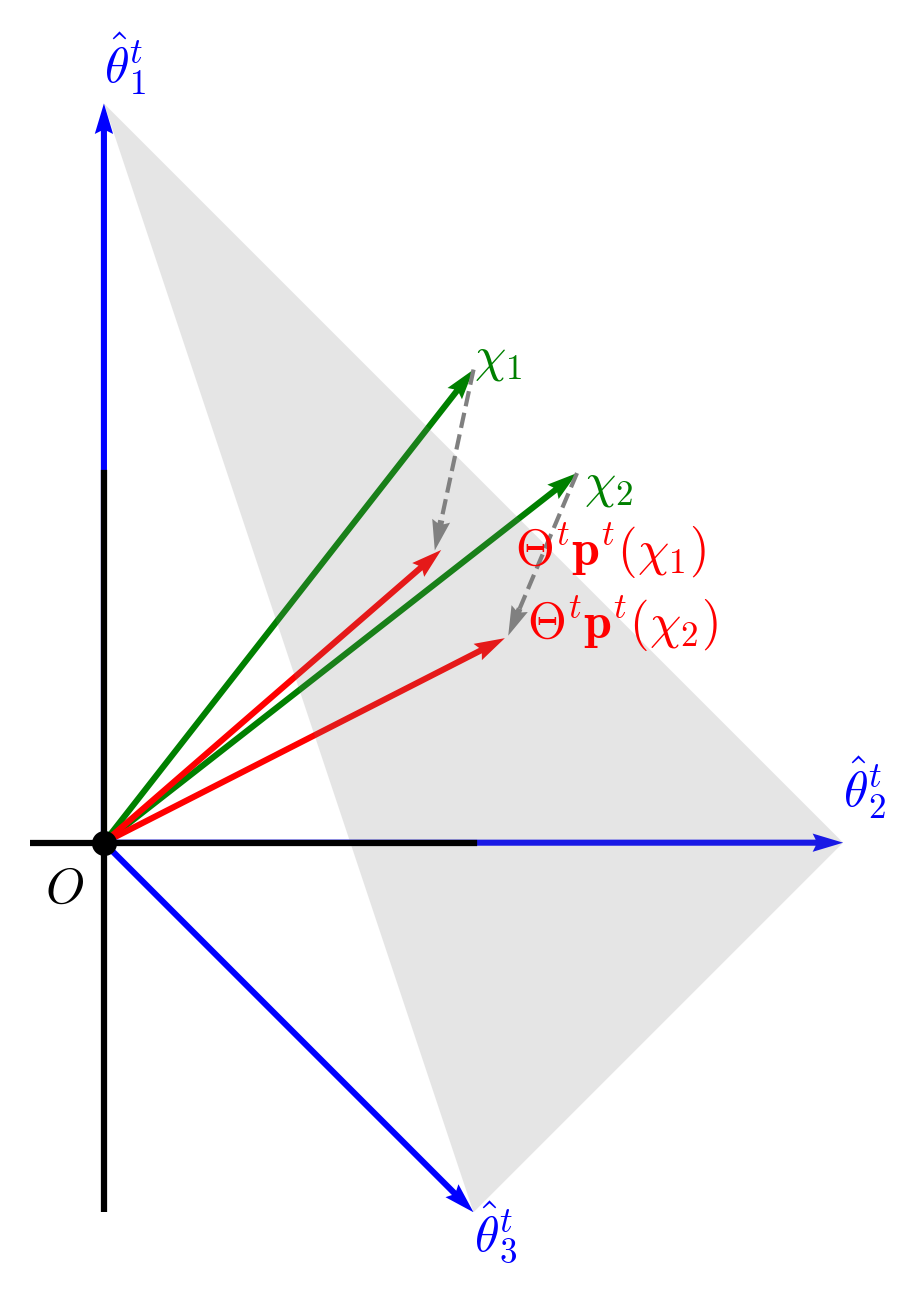}
        \caption{Initial stage for contexts with different optimal arms}
        \label{exam21}
    \end{minipage}
    \hfill
    \begin{minipage}{0.18\textwidth}
        \centering
        \includegraphics[width=\textwidth]{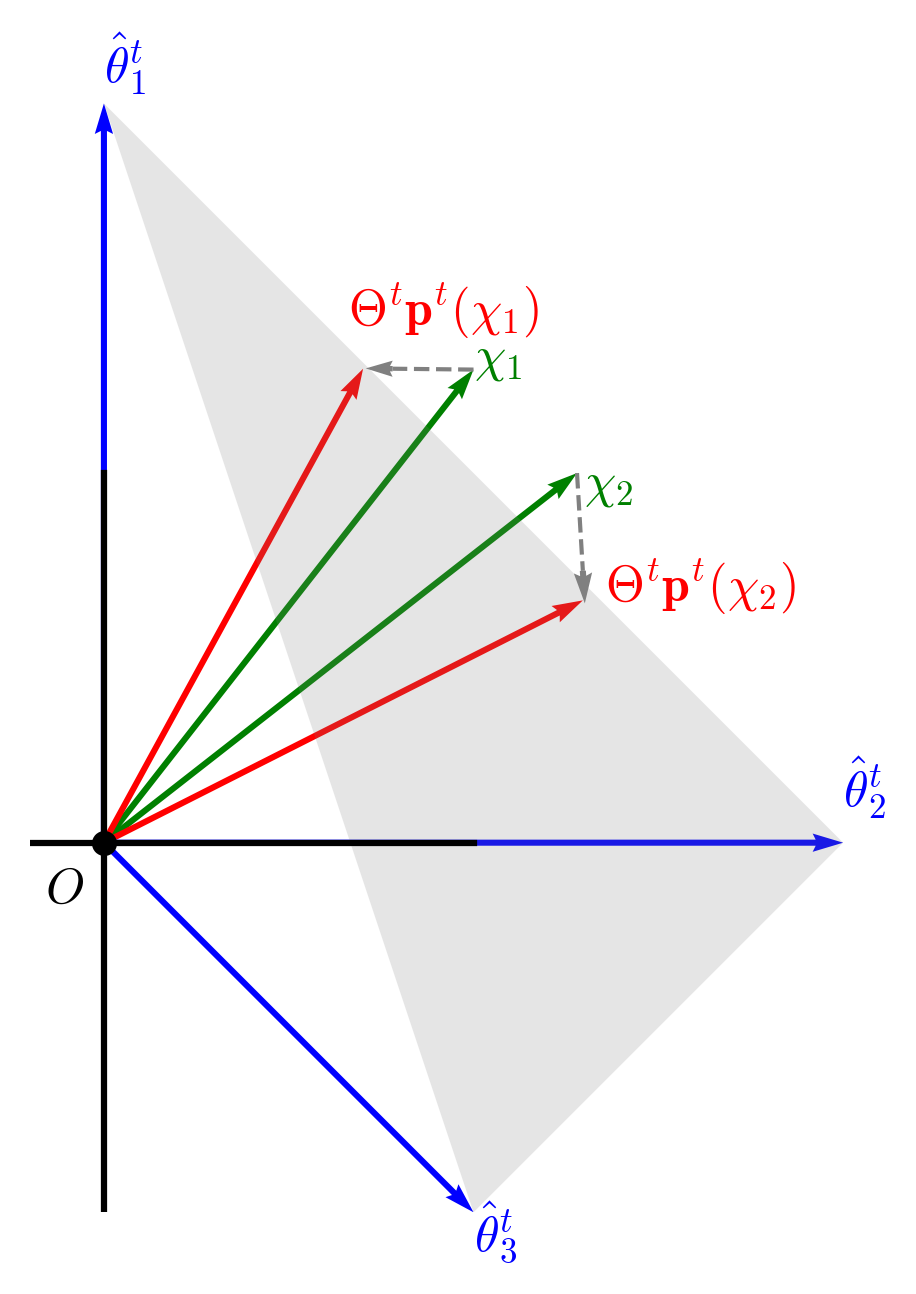}
        \caption{Converging stage for contexts with different optimal arms}
        \label{exam22}
    \end{minipage}
\end{figure}

\begin{theorem}\label{Theorem2}
For the realization of prior, \(\{\theta_k\}_{k \in [K]}\) such that the two contexts \(\chi_1\) and \(\chi_2\) have the different optimal arms \(\alpha_1\) and $\alpha_2$, the frequentist regret of our truthful-TS mechanism in Mechanism \ref{A1} is $O(\ln T)$ to be upper bounded by
\begin{align*}
&\sum_{n=1}^2\bigg( \sum_{j\neq \alpha_1,\alpha_2}\frac{18}{\Delta_{n,j}^2}\ln \frac{T\Delta_{n,j}^2}{36}+C_{n,j}\bigg)\bigg(\frac{\beta_{3-n}}{\beta_{n}}\Delta_{n,\alpha_{3-n}}\\
&+ \bigg(1+\frac{\beta_{3-n}}{\beta_{n}}\bigg)\max_{n=1,2} \Delta_{n,j}\bigg)+\sum_{n=1}^2\bigg(\bigg(2+\frac{\beta_{3-n}}{\beta_{n}}\bigg)\bigg(\frac{18}{\Delta_{n,\alpha_{3-n}}^2}\ln \frac{T\Delta_{n,\alpha_{3-n}}^2}{36}\\
&+C_{n,\alpha_{3-n}}\bigg)+\frac{2\beta_{3-n}}{\beta_{n}}\bigg(\frac{2048}{\Delta_{n}^2}\ln \frac{T\Delta_{n}^2}{2048}+D_n\bigg)\bigg) \Delta_{n,\alpha_{3-n}},
\end{align*}
where \(\Delta_{n,j}=\chi_n^\top \theta_{\alpha_n} - \chi_n^\top \theta_j\) is the reward gap between the optimal arm \(\alpha_n\) and arm \(j \neq \alpha_n\) for context \(\chi_n\). $\Delta_n=\min_{j\neq n}\Delta_{n,j}$ is the minimum reward gap for context $\chi_n$. $C_{n,j}$ is a constant for $n\in\{1,2\}$ and $j\in[K]$, and $D_n$ is the constant part for $n\in\{1,2\}$.
\end{theorem}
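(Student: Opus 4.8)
The plan is to follow the regret decomposition of Theorem~\ref{Theorem1}, but to split the suboptimal arms for each context $\chi_n$ into two groups demanding different treatments: the arms $j\notin\{\alpha_1,\alpha_2\}$ that are suboptimal for \emph{both} contexts, and the single arm $\alpha_{3-n}$ that is \emph{optimal} for the other context $\chi_{3-n}$. Starting from
\[
\mathbb{E}[\mathcal{R}(T)]=\sum_{t=1}^T\sum_{n=1}^2\sum_{j\neq\alpha_n}\mathbb{E}[\mathbf{1}(x_t=\chi_n,a_t=j)]\,\Delta_{n,j},
\]
I would again pass to the alternative implementation of Mechanism~\ref{A1} in which, on each conflict step, a virtual context $\tilde{x}_t\in\{\chi_1,\chi_2\}$ is drawn from $\{\beta_1,\beta_2\}$ and the pulled arm follows the Thompson-sampling distribution of $\tilde{x}_t$ and is independent of the true $x_t$ given $\mathcal{F}_t$, exactly the reduction used in (\ref{eq30})--(\ref{eq17}).

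For the first group $j\notin\{\alpha_1,\alpha_2\}$ the argument of Theorem~\ref{Theorem1} transfers almost verbatim: arm $j$ is suboptimal under \emph{every} virtual context, so each contributing term (no conflict, virtual $\tilde{x}_t=\chi_n$, virtual $\tilde{x}_t=\chi_{3-n}$) is a genuine suboptimal-pull count for standard Thompson sampling and is controlled by Lemmas~\ref{lemmaut1}--\ref{lemmaut3} with $\delta=\Delta_{n,j}/3$, producing the $\frac{18}{\Delta_{n,j}^2}\ln\frac{T\Delta_{n,j}^2}{36}+C_{n,j}$ factor. The only new bookkeeping is that on a cross step where the true context is $\chi_n$ while the virtual context $\chi_{3-n}$ drives the sampler, the pull incurs regret measured by the gap $\Delta_{n,\cdot}$ of the \emph{true} context; accounting for this yields the extra $(\beta_{3-n}/\beta_n)\Delta_{n,\alpha_{3-n}}$ factor beside the $(1+\beta_{3-n}/\beta_n)\max_n\Delta_{n,j}$ factor inherited from Theorem~\ref{Theorem1}.

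The essential novelty is the second group, the arm $\alpha_{3-n}$. I would split its pulls by $\chi_n$ into (a) the no-conflict steps together with the conflict steps on which $\tilde{x}_t=\chi_n$, and (b) the conflict steps on which $\tilde{x}_t=\chi_{3-n}$. On (a) the arm $\alpha_{3-n}$ is suboptimal for the context actually driving the sampler, so Lemmas~\ref{lemmaut1}--\ref{lemmaut3} with $\delta=\Delta_{n,\alpha_{3-n}}/3$ apply and give the $\frac{18}{\Delta_{n,\alpha_{3-n}}^2}\ln\frac{T\Delta_{n,\alpha_{3-n}}^2}{36}+C_{n,\alpha_{3-n}}$ term. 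On (b), however, $\alpha_{3-n}$ is the \emph{optimal} arm of the virtual context, so the sampler pulls it frequently and the suboptimal-pull bound is useless; instead these pulls must be bounded by the total number of conflict steps for context $\chi_n$. To bound that count I would convert the geometric picture of Figs.~\ref{exam21}--\ref{exam22} into a concentration statement: a conflict means $\chi_n^\top\Theta^t\mathbf{p}^t(\chi_{3-n})\ge\chi_n^\top\Theta^t\mathbf{p}^t(\chi_n)$, but since each $\mathbf{p}^t(\chi_m)$ concentrates on $\alpha_m$ as the posterior variances shrink and $\chi_n^\top\theta_{\alpha_n}>\chi_n^\top\theta_{\alpha_{3-n}}$ by optimality, a conflict forces some posterior-mean deviation $\chi_n^\top\hat{\theta}^t_k-\chi_n^\top\theta_k$ or sampling deviation $\tilde{\theta}^t_{n,k}-\chi_n^\top\hat{\theta}^t_k$ to exceed a multiple of the \emph{minimum} gap $\Delta_n=\min_{j\neq n}\Delta_{n,j}$. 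Summing the Gaussian tails over $t$, in the spirit of Lemmas~\ref{lemmaut2}--\ref{lemmaut3} but with the looser min-gap and hence larger constants, gives the $\frac{2048}{\Delta_n^2}\ln\frac{T\Delta_n^2}{2048}+D_n$ bound; scaling by $\beta_{3-n}/\beta_n$ (the change of measure of (\ref{eq31})) and by the per-pull regret $\Delta_{n,\alpha_{3-n}}$, and absorbing constants into $C_{n,j}$ and $D_n$, assembles Theorem~\ref{Theorem2}.

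I expect the conflict-step bound to be the main obstacle. Unlike the suboptimal-pull lemmas, where a single arm's gap governs the tail, the conflict condition couples all arms through the full distributions $\mathbf{p}^t(\chi_n)$ and $\mathbf{p}^t(\chi_{3-n})$, so one must argue uniformly over the simplex that the two Thompson mappings separate once \emph{every} relevant gap for $\chi_n$ is resolved; this is exactly why the minimum gap $\Delta_n$, rather than an individual $\Delta_{n,j}$, sets the rate and inflates the constants in the final term.
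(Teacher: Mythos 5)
Your decomposition is the same as the paper's: split the suboptimal arms into the shared group $j\notin\{\alpha_1,\alpha_2\}$ and the cross arm $\alpha_{3-n}$, pass to the virtual-context implementation, handle the shared group exactly as in Theorem \ref{Theorem1}, and split the pulls of $\alpha_{3-n}$ according to whether a conflict occurs and which virtual context drives the sampler. Up to that point your proposal and the paper coincide, including the bookkeeping that produces the $(\beta_{3-n}/\beta_n)\Delta_{n,\alpha_{3-n}}$ and $(1+\beta_{3-n}/\beta_n)\max_{n}\Delta_{n,j}$ factors in the first sum.

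The gap is in the one step that is genuinely new relative to Theorem \ref{Theorem1}: bounding the pulls of $\alpha_{3-n}$ on conflict steps with $\tilde{x}_t=\chi_{3-n}$. You propose that a conflict ``forces some posterior-mean deviation or sampling deviation to exceed a multiple of $\Delta_n$'' and that one can then ``sum the Gaussian tails over $t$.'' Neither half survives scrutiny. First, the conflict event $I(t)$ is a deterministic function of $\mathcal{F}_t$ through the probability vectors $\mathbf{p}^t(\chi_1),\mathbf{p}^t(\chi_2)$; it occurs generically in early rounds when posterior variances are still large, even if every posterior mean equals its true value and no sample is anomalous (this is exactly the situation of Fig.~\ref{exam21}), so a conflict does not imply any deviation event. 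Second, even after conflict is reduced to non-concentration of the TS probabilities, the Gaussian tails you would sum depend on posterior variances, which shrink with the pull counts of individual arms, not with $t$; summing tails over $t$ is vacuous unless each counted step is tied to a pull that increments the relevant arm's counter. The paper needs two dedicated devices here: Lemma \ref{conflictupper}, which converts $I(t)$ into the event $p^t_{\alpha_1}(\chi_1)+p^t_{\alpha_2}(\chi_2)\leq 2-2\epsilon^t$ with $\epsilon^t$ as in (\ref{trutherror}); and Lemma \ref{opt_less}, which bounds $\sum_{t}\mathbb{E}[\mathbf{1}(x_t=\tilde{x}_t=\chi_n,\bar{a}_t=\alpha_n,\,p^t_{\alpha_n}(\chi_n)<1-\epsilon^t)]$, i.e.\ pulls of the \emph{optimal} arm while its TS probability is unconcentrated --- a quantity no suboptimal-pull lemma controls. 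Its proof handles the ``accurate means, large variances'' case by a ratio argument in the spirit of Lemma \ref{lemmaut111}, converting such optimal-arm pulls into suboptimal-arm pulls, together with the observation that the offending tail probability can exceed $\bar{\epsilon}/(2(K-1))$ only while the pulled arm's count is below an explicit constant threshold; that is where the $2048/\Delta_n^2$ constant actually comes from. You correctly identified this conflict-step bound as the main obstacle, but the mechanism you sketch in its place would fail, so the proposal is incomplete precisely at the theorem's novel content.
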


\begin{proof}
To upper bound the regret when the contexts have different optimal arms, we need to consider the expected number of times that either context has an incentive to misreport, which introduces an additional part on regret analysis compared to Theorem \ref{Theorem1}. Define \(I(t)\) as the event that the two contexts have conflict at time \(t\), meaning:
\begin{align*}
    I(t): \chi_1^\top\Theta^t(\mathbf{p}^t(\chi_2)-\mathbf{p}^t(\chi_1))>0 \text{ or }\chi_2^\top\Theta^t(\mathbf{p}^t(\chi_1)-\mathbf{p}^t(\chi_2))>0.
\end{align*}
    
    Similar to the proof of Theorem \ref{Theorem1}, we first decompose the regret in (\ref{regret}) as follows:
\begin{align}
        &\mathbb{E}[\mathcal{R}(T)]\nonumber\\
        =&\sum_{t=1}^T\sum_{n=1}^2\sum_{j\neq\alpha_n}\mathbb{E}[\mathbf{1}(x_t=\chi_n,a_t=j)]\Delta_{n,j}\nonumber\\
        \leq&\sum_{t=1}^T\sum_{n=1}^2\bigg(\bigg(\sum_{j\neq\alpha_1,\alpha_2}\mathbb{E}[\mathbf{1}(x_t=\chi_n,\bar{a}_t=j)]\max_{n=1,2}\Delta_{n,j}\bigg)+\Delta_{n,\alpha_{3-n}}\nonumber\\
        &\cdot\mathbb{E}[\mathbf{1}(x_t=\chi_n,a_t=\alpha_{3-n},\overline{I(t)})+\mathbf{1}(x_t=\chi_n,{a}_t=\alpha_{3-n},{I(t)})]\bigg).\label{eq9}
        \end{align}
We now decompose each of the first two terms in the last expression, beginning with the first term, the  $\mathbb{E}\bigl[\mathbf{1}(x_t=\chi_n,\bar{a}_t=j)\bigr]$  
can be written as
\begin{align*}
    &\mathbb{E}[\mathbf{1}(x_t=\chi_n,\bar{a}_t=j)]\\
    =&\mathbb{E}[\mathbf{1}(x_t=\chi_n,\bar{a}_t=j,I(t))+\mathbf{1}(x_t=\chi_n,\bar{a}_t=j,\overline{I(t)})]\\
    =&\mathbb{E}[\mathbf{1}(x_t=\chi_n,\tilde{x}_t=\chi_n,\bar{a}_t=j,I(t))\\
    &+\mathbf{1}(x_t=\chi_n,\tilde{x}_t=\chi_{3-n},\bar{a}_t=j,I(t))\\
    &+\mathbf{1}(x_t=\chi_n,\tilde{x}_t=\chi_n,\bar{a}_t=j,\overline{I(t)})]\\
    \leq&\mathbb{E}[\mathbf{1}(x_t=\chi_n,\tilde{x}_t=\chi_n,\bar{a}_t=j)\\
    &+\frac{\beta_{n}}{\beta_{3-n}}\mathbf{1}(x_t=\chi_{3-n},\tilde{x}_t=\chi_{3-n},\bar{a}_t=j)],
\end{align*}
where the final inequality follows from a derivation analogous to (\ref{eq31}). On the event \(\overline{I(t)}\), the algorithm coincides with standard Thompson sampling, so the expectation \(\mathbb{E}[\mathbf{1}(x_t=\chi_n,\,a_t=\alpha_{3-n},\,\overline{I(t)})]\) in the second term of (\ref{eq9}) can be upper bounded by
\begin{align*}
    &\mathbb{E}[\mathbf{1}(x_t=\chi_n,\tilde{x}_t=\chi_{n},\bar{a}_t=\alpha_{3-n})].
\end{align*}
Inserting the two derived expressions above into the final expression of (\ref{eq9}) yields the following upper bound for (\ref{eq9}):
    \begin{align}
        &\sum_{t=1}^T\sum_{n=1}^2\sum_{j\neq\alpha_1,\alpha_2}\bigg(1+\frac{\beta_{3-n}}{\beta_n}\bigg)\mathbb{E}[\mathbf{1}(x_t=\chi_n,\tilde{x}_t=\chi_n,\bar{a}_t=j)]\max_{n=1,2}\Delta_{n,j}\nonumber\\
        &+(\mathbb{E}[\mathbf{1}(x_t=\chi_n,\tilde{x}_t=\chi_{n},\bar{a}_t=\alpha_{3-n})]\nonumber\\
        &+\mathbb{E}[\mathbf{1}(x_t=\chi_{n},a_t=\alpha_{3-n},{I(t)})])\Delta_{n,\alpha_{3-n}}.\label{eq16}
    \end{align}
        The first two lines of Eq. (\ref{eq16}) can be upper bounded using the procedure for bounding the number of suboptimal pulls described in (\ref{eq17}) to (\ref{eq23}) from the last section. To upper bound the last term, we first present Lemma \ref{conflictupper} that transforms the event including \(I(t)\) to an other event involving \(p_{\alpha_1}^t(\chi_1) + p_{\alpha_2}^t(\chi_2)\).
        

\begin{lemma}\label{conflictupper}
Let \(\underline{k_n}\) denote the empirically worst arm for context \(\chi_n\) in any bandit process at any given time. Let $\epsilon^t 
 $ denote
 \begin{align}
     \min_n\frac{\chi_n^\top\hat{\theta}^t_{\alpha_n}-\chi_n^\top\hat{\theta}^t_{\alpha_{3-n}}}{2(\chi_n^\top\hat{\theta}_{\alpha_n}^t-\chi_n^\top\hat{\theta}^t_{{\underline{k_n}}}+\max_{k\neq \alpha_1,\alpha_2}\chi_n^T(\hat{\theta}_{k}^t-\hat{\theta}_{\underline{k_n}}^t))}.\label{trutherror}
 \end{align}
The probability of event \(I(t)\), where either \(\chi_1\) or \(\chi_2\) has an incentive to misreport at time \(t\), can be upper bounded by the probability of the following event:
\begin{align*}
    E[\mathbf{1}({I(t)})] \leq E\bigg[\mathbf{1}(p^t_{\alpha_n}(\chi_{n})+p^t_{\alpha_{3-n}}(\chi_{3-n})
    \leq 2-2\epsilon^t\bigg].
\end{align*}
\end{lemma}        

The complete proof of this lemma can be found in Appendix \ref{APconflictupper}. Using Lemma \ref{conflictupper}, we then decompose the last term in (\ref{eq16}) below:
\begin{align}
    &\mathbb{E}[\mathbf{1}(x_t=\chi_{n},a_t=\alpha_{3-n},{I(t)})]\nonumber\\
    =&\mathbb{E}[\mathbf{1}(x_t=\chi_{n},\tilde{x}_t=\chi_{3-n},\bar{a}_t=\alpha_{3-n},{I(t)})]\nonumber\\
    &+\mathbb{E}[\mathbf{1}(x_t=\chi_{n},\tilde{x}_t=\chi_{n},\bar{a}_t=\alpha_{3-n},{I(t)})]  \nonumber\\
    \leq&\mathbb{E}[\mathbf{1}(x_t=\chi_{n},\tilde{x}_t=\chi_{3-n},\bar{a}_t=\alpha_{3-n},\nonumber\\
    &{p^t_{\alpha_1}(\chi_1)+p^t_{\alpha_2}(\chi_2)<2-2\epsilon^t})]\nonumber\\
    &+\mathbb{E}[\mathbf{1}(x_t=\chi_{n},\tilde{x}_t=\chi_{n},\bar{a}_t=\alpha_{3-n})]\nonumber\\
    \leq&\mathbb{E}[\mathbf{1}(x_t=\chi_{n},\tilde{x}_t=\chi_{3-n},\bar{a}_t=\alpha_{3-n},{p^t_{\alpha_n}(\chi_n)<1-\epsilon^t})]\nonumber\\
    &+\mathbb{E}[\mathbf{1}(x_t=\chi_{n},\tilde{x}_t=\chi_{3-n},\bar{a}_t=\alpha_{3-n},{p^t_{\alpha_{3-n}}(\chi_{3-n})<1-\epsilon^t})]\nonumber\\
    &+\mathbb{E}[\mathbf{1}(x_t=\chi_{n},\tilde{x}_t=\chi_{n},\bar{a}_t=\alpha_{3-n})]\label{eq33}.
    \end{align}
The first line of the expression above can be upper bounded by
 \begin{align*}   
&\mathbb{E}[\mathbf{1}(x_t=\chi_{n},\tilde{x}_t=\chi_{3-n},{p^t_{\alpha_n}(\chi_n)<1-\epsilon^t})]\\
=&\frac{\beta_{3-n}}{\beta_n}\mathbb{E}[\mathbf{1}(x_t=\chi_{n},\tilde{x}_t=\chi_{n},{p^t_{\alpha_n}(\chi_n)<1-\epsilon^t})]\\
    \leq&\frac{\beta_{3-n}}{\beta_n}\sum_{j\neq \alpha_n}\mathbb{E}[\mathbf{1}(x_t=\chi_{n},\tilde{x}_t=\chi_{n},\bar{a}_t=j)]\\
    &+\frac{\beta_{3-n}}{\beta_n}\mathbb{E}[\mathbf{1}(x_t=\chi_{n},\tilde{x}_t=\chi_{n},\bar{a}_t=\alpha_n,{p^t_{\alpha_n}(\chi_n)<1-\epsilon^t})].
\end{align*}
The first equality exploits that \(p^t_{\alpha_n}(\chi_n)\) is determined by the history up to time \(t\) and is therefore independent of the newly sampled context \(\tilde x_t\). The second line of (\ref{eq33}) equals
\begin{align*}
    &\frac{\beta_{n}}{\beta_{3-n}}\mathbb{E}[\mathbf{1}(x_t=\chi_{3-n},\tilde{x}_t=\chi_{3-n},\bar{a}_t=\alpha_{3-n}\\
    &,{p^t_{\alpha_{3-n}}(\chi_{3-n})<1-\epsilon^t})].
\end{align*}
Substitute them to (\ref{eq33}), we have
\begin{align}
    &\mathbb{E}[\mathbf{1}(x_t=\chi_{n},a_t=\alpha_{3-n},{I(t)})]\nonumber\\
    \leq&\bigg(1+\frac{\beta_{3-n}}{\beta_n}\bigg)\mathbb{E}[\mathbf{1}(x_t=\chi_{n},\tilde{x}_t=\chi_{n},\bar{a}_t=\alpha_{3-n})]\nonumber\\
    &+\frac{\beta_{3-n}}{\beta_n}\sum_{j\neq \alpha_1,\alpha_2}\mathbb{E}[\mathbf{1}(x_t=\chi_{n},\tilde{x}_t=\chi_{n},\bar{a}_t=j)]\nonumber\\
    &+\sum_{n=1}^2\frac{\beta_{3-n}}{\beta_n}\mathbb{E}[\mathbf{1}(x_t=\chi_{n},\tilde{x}_t=\chi_{n},\bar{a}_t=\alpha_n,{p^t_{\alpha_n}(\chi_n)<1-\epsilon^t})]\label{eq34}
\end{align}

    When summing from \( t = 1 \) to \( T \), the first two terms in the right-hand side above can be upper bounded using the same method as in (\ref{eq17}) through (\ref{eq23}) from the last section, then applying Lemmas \ref{lemmaut1}, \ref{lemmaut2}, and \ref{lemmaut3}. The upper bound for the summation from \( t = 1 \) to \( T \) of the last term in (\ref{eq34}) is provided in Lemma \ref{opt_less} below.
   \begin{lemma}\label{opt_less}
Let $\epsilon^t$ denote the expression in (\ref{trutherror}). Then the expected number of pulls of $\alpha_n$ by $\chi_n$ together with the occurrence of event ${p^t_{\alpha_{n}}(\chi_{n})<1-\epsilon^t}$ is upper bounded by:
        \begin{align*}
&\sum_{t=1}^T\mathbb{E}[\mathbf{1}(x_t=\chi_{n},\tilde{x}_t=\chi_{n},\bar{a}_t =\alpha_{n},{p^t_{\alpha_{n}}(\chi_{n})<1-\epsilon^t})]\\
\leq&\frac{2048}{\Delta_{n}^2}\ln \frac{T\Delta_{n}^2}{2048}+D_n,
    \end{align*}
    where $D_n$ is a constant for $n\in\{1,2\}$.
\end{lemma}
   
The complete proof of this lemma can be found in Appendix \ref{APopt_less}. By substituting (\ref{eq34}) into (\ref{eq16}) and then substituting Lemmas \ref{lemmaut1}, \ref{lemmaut2}, \ref{lemmaut3}, and \ref{opt_less}, we derive the final result of Theorem \ref{Theorem2}.
\end{proof}


\section{Simulation Experiments}\label{section7}
In this section, we use simulation experiments to evaluate the performance of our truthful-TS mechanism for more than two contexts and non-Gaussian noise distribution.

We first extend Gaussian noise $\eta_t\sim \mathcal{N}(0,1)$ to Laplace noise as a typical sub-Gaussian distribution in Fig. \ref{experiment2}. Since the Laplace noise yields a non-closed form and non-standard posterior distribution, we turn to numerical methods to update the posterior and compute the probabilities \(\mathbf{p}^t(x)\) in (\ref{TSP}). To ensure a small error of inherent approximation in numerical methods, we conduct this experiment for the small scale case with \(N = d = K = 2\) and $T=50$ time steps. For a fair regret comparison, we set the same Gaussian prior for both noises where prior mean $\mu_1 = (0,1)$ and $\mu_2 = (1,0)$ and prior covariance matrix $V_1=V_2 = I$. The variances of both Laplace and Gaussian noises are set to 1 in Fig. \ref{experiment2}. To compare with our mechanism in Mechanism 1, we use the ideal Thompson sampling (always assuming agents' truthful reporting) as the performance upper bound. Fig. \ref{experiment2} implies that our truthful-TS mechanism yields a similar regret order to Thompson sampling. Since Laplace noise has a heavier tail than Gaussian noise, the regret order under Laplace noise remains sublinear but is slightly higher than that under Gaussian noise. As observed, the regrets for both the truthful-TS mechanism and the Thompson sampling algorithm under Laplace noise will exceed those under Gaussian noise after \( t = 50 \). We can have the same conclusion when extending to other sub-Gaussian noises such as uniform and Cauchy distributions.

Similar to the experiment setting in \cite{bastani2021mostly}, we then extend the setting to $N=3,5,9$ contexts and $d=5,9$ dimensions under Gaussian prior and noise. We consider $K=6$ arms for recommendations among these contexts. For each arm $k$, we set its prior distribution $\mathcal{P}_k\sim\mathcal{N}(\mu_k,V_k)$ such that $\mu_k$ is a vector with a single 1 in the \(k\)-th entry and \(V_k\) is the identity matrix. Different contexts are sampled from a multivariate uniform distribution over \([0,1]^d\). For each parameter setting, we run 100 simulations, generating new realizations of \(\{\theta_k\}_{k\in[K]}\) from the prior distribution of $\mathcal{N}(\mu_k,V_k)$ each time, and calculate the average regret. The results are displayed in Fig. \ref{experiment1}. According to Fig. \ref{experiment1}, like the ideal Thompson sampling, our truthful-TS mechanism still exhibits sublinear order for different $N$ and $d$, which is consistent with our Theorems \ref{Theorem1} and \ref{Theorem2}. Though ideal Thompson sampling algorithm's regret grows with $N$ and $d$, our mechanism grows faster. The reason is that as there are more contexts, a context may envy more other contexts with higher convergence rates and our Mechanism \ref{A1} will reduce the exploitation of those contexts, thereby slowing down the overall convergence.



\begin{figure}
    \centering
        \begin{minipage}{0.2\textwidth}
        \centering
        \includegraphics[width=\textwidth]{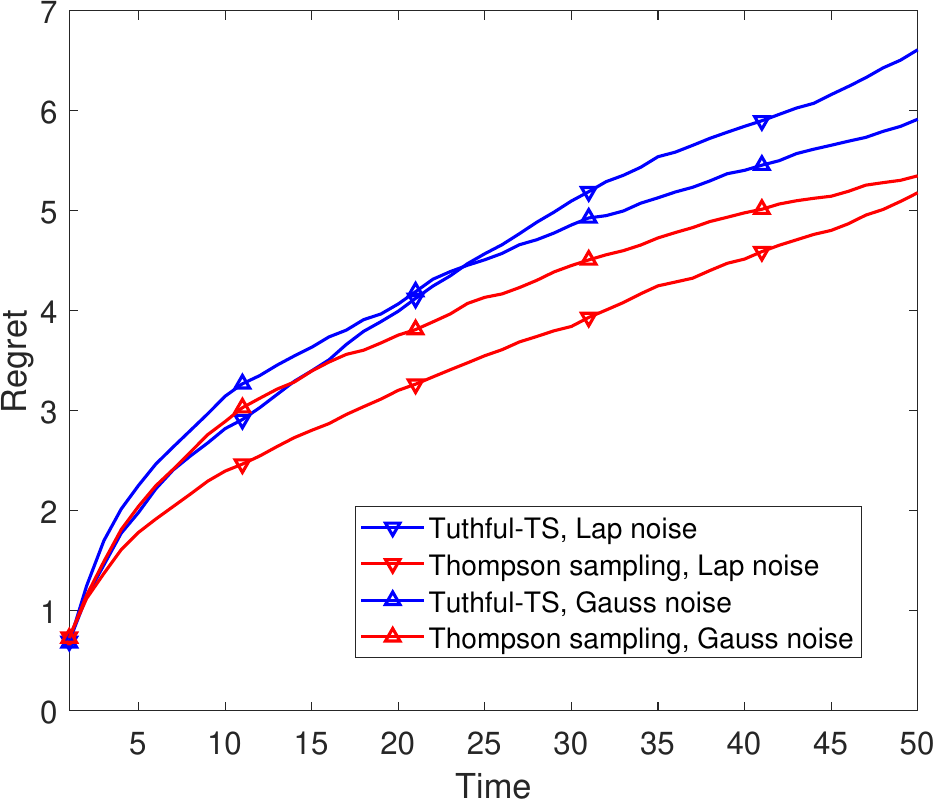}
        \caption{Cumulative regret at each time step under our truthful-TS mechanism in Mechanism \ref{A1} and Thompson sampling algorithm, and Gaussian and Laplace noises with 2 contexts, 2 arms, and 2 dimensions.}
        \label{experiment2}
    \end{minipage}
    \hfill
\begin{minipage}{0.22\textwidth}
    \centering
    \includegraphics[width=\textwidth]{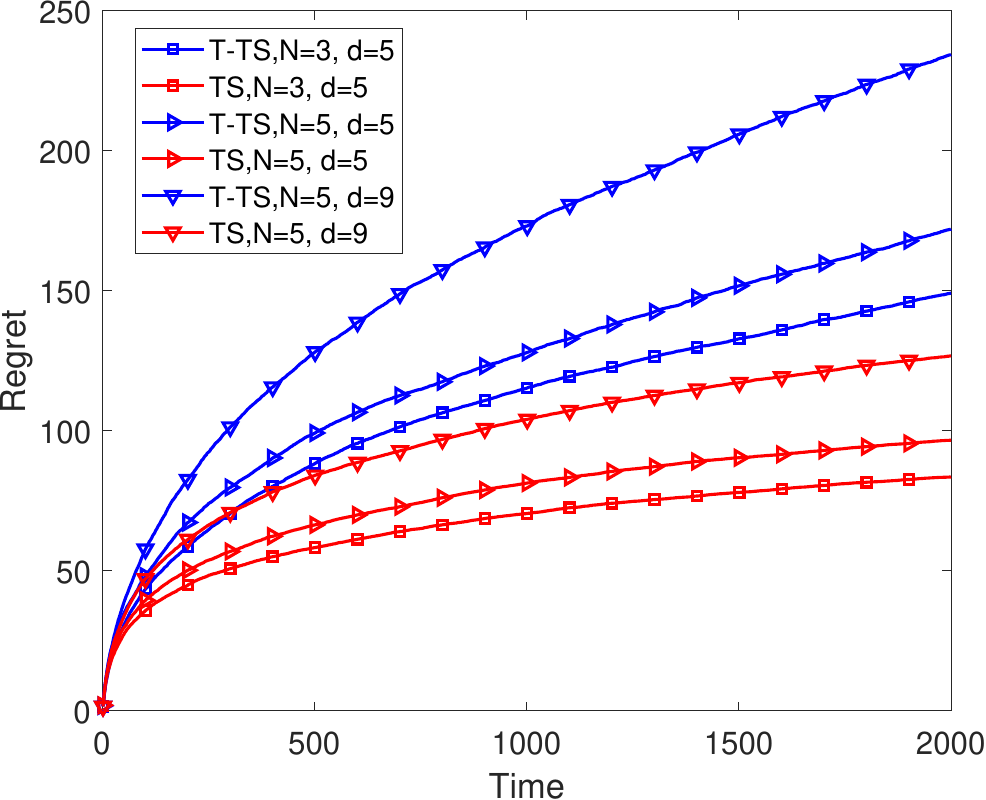}
    \caption{Cumulative regret at each time step under our Mechanism \ref{A1} (T-TS) and Thompson sampling algorithm (TS), where the number of contexts varies between $N\in\{3,5,9\}$ and $d\in\{5,9\}$.}
    \label{experiment1}
    \end{minipage}
\end{figure}


\section{Conclusion}
In this paper, we investigate the problem of strategic misreporting of private contexts by agents within the Bayesian contextual linear bandit framework. We are the first to analyze this issue and demonstrate that existing algorithms fail to perform effectively under such misreporting behavior. To address this, we propose a novel truthful mechanism based on the Thompson sampling algorithm, which solves a LP at each time step to ensure incentive compatibility. We prove that our mechanism achieves a problem-dependent regret bound of \(O(\ln T)\) in the two-context case with Gaussian priors and noise. Furthermore, our numerical results suggest that the proposed mechanism retains a comparable regret order across multiple contexts and under heavier tails of noise. 


\bibliographystyle{ACM-Reference-Format} 
\balance
\bibliography{Arx}

\onecolumn
\appendix
\section{Proof of Lemma \ref{TSL}}\label{APTSL}
Consider a Bayesian contextual linear bandit problem with two contexts: \(\chi_1 = (1,0)\) and \(\chi_2 = (1, \frac{1}{2})\), arriving with probabilities \(\beta_1 = \frac{5}{6}\) and \(\beta_2 = \frac{1}{6}\), respectively. There are two arms drawn from \(\mathcal{N}(\mu_1, V_1)\) and \(\mathcal{N}(\mu_2, V_2)\) with prior means \(\mu_1 = (1,0)\) and \(\mu_2 = (-1,1)\) and prior covariances \(V_1 = V2 = I\). By prior belief, according to the Thompson sampling probabilities in (5), we have \(\mathbf{p}^1(\chi_1) = (0.76, 0.24)\) and \(\mathbf{p}^1(\chi_2) = (0.62, 0.38)\), which means that agent with context $\chi_2$ will have the incentive to misreport as $\chi_1$ to obtain a higher expected reward $\chi_2^\top\Theta^1\mathbf{p}^1(\chi_1)$. Therefore, if the first context \(x_t\) is \( \chi_2 \), this first agent will misreport her context as \(\chi_1\).

We will demonstrate that, under a specific realization of \(\theta_1\) and \(\theta_2\) from the instance above, which satisfy the following conditions:
\begin{align}
    &\frac{1}{2} + \sum_{n=1}^2 \beta_n \chi_n^\top \theta_2 < \sum_{n=1}^2 \beta_n \chi_n^\top \theta_1, \;\chi_2^\top \theta_2 > \chi_2^\top \theta_1 + \epsilon,\; \chi_1^\top \theta_2 < \chi_1^\top \theta_1, \;0 < \sum_{n=1}^2 \beta_n \chi_n^\top \theta_1 < 1 - \delta, \;\delta - 1 < \sum_{n=1}^2 \beta_n \chi_n^\top \theta_2 < 0, \label{realization}
\end{align}
where $0<\epsilon,\;\delta<1$, the Thompson sampling algorithm has a positive probability of failing to correctly identify arm \(\theta_2\) as superior to arm \(\theta_1\) for context \(\chi_2\). Consequently, context \(\chi_2\) will continue misreporting as \(\chi_1\) throughout the learning process, causing the algorithm to converge on selecting arm \(\theta_1\) with probability 1 for both contexts. This outcome will result in linear regret over time. The following proof shows this result by using condition in (\ref{realization}).

To establish this, we first introduce two stochastic processes. Let \(M_k\) denote the total number of times arm \(k\) is pulled over the entire time horizon. Reorder the subset of time steps within \([T]\) during which the Thompson sampling algorithm selects arm \(k\) into a new sequence \(\mathcal{T}_k(T) = \{1, 2, \dots, M_k\}\). For each \(t \in \mathcal{T}_k(T)\), define a stochastic process \(Y^k_t\) as follows:
\begin{align}
    Y^k_t = \left( \chi_2^\top \hat{\theta}^t_k - \chi_2^\top \mu_k + \frac{t-1}{t} \chi_1^\top \mu_k - \sum_{n=1}^2 \beta_n \chi_n^\top \theta_k \right)t, \quad k=1,2.\label{eq1_}
\end{align}
In the following lemma, we prove that \(Y^k_t\) for $k=1,2$ is a martingale under context $\chi_2$'s misreporting.

\begin{lemma}\label{martin}
If context \(\chi_2\) always misreports as \(\chi_1\), the stochastic process \(Y^k_t\)in (\ref{eq1_}) forms a martingale. Moreover, \(Y^k_1 = \mathbb{E}[Y^k_\tau] = -\sum_{n=1}^2 \beta_n \chi_n^\top \theta_k\).
\end{lemma}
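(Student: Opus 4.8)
The plan is to reduce $Y^k_t$ to a centered sum of the rewards collected at the pulls of arm $k$, and then verify the martingale property increment by increment. The starting observation is that under the hypothesis of the lemma every agent reports $\chi_1$: context $\chi_1$ reports truthfully and $\chi_2$ misreports as $\chi_1$, so each pull enters the posterior update (\ref{thetakt}) with the \emph{reported} context $\chi_1=(1,0)$, even though each reward $r_\tau=x_\tau^\top\theta_k+\eta_\tau$ is generated by the \emph{true} context $x_\tau$. This mismatch is precisely what the process $Y^k_t$ is built to track. With $V_k=I$ and reported context $\chi_1$, the update is explicit: after $m$ pulls of arm $k$, $\hat V^{}_k=\mathrm{diag}\!\big(1/(1+m),\,1\big)$ and $\hat\theta_k=\big((\mu_{k,1}+\sum_\tau r_\tau)/(1+m),\,\mu_{k,2}\big)$, with only the first coordinate moving.

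Next I would substitute this into (\ref{eq1_}), working in the reordered sequence $\mathcal{T}_k(T)$ where index $t$ corresponds to the posterior $\hat\theta^t_k$ built from the first $t-1$ pulls, so $m=t-1$ and $1+m=t$. Using $\chi_2=(1,\tfrac12)$, the $\tfrac12\mu_{k,2}$ contributions in $\chi_2^\top\hat\theta^t_k$ and $\chi_2^\top\mu_k$ cancel, leaving $\chi_2^\top\hat\theta^t_k-\chi_2^\top\mu_k=\tfrac1t\sum_{\tau=1}^{t-1}r_\tau-\tfrac{t-1}{t}\mu_{k,1}$; the added term $\tfrac{t-1}{t}\chi_1^\top\mu_k=\tfrac{t-1}{t}\mu_{k,1}$ then exactly cancels the prior contribution, collapsing the bracket to $\tfrac1t\sum_{\tau=1}^{t-1}r_\tau$. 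Multiplying by $t$ gives the clean form $Y^k_t=\sum_{\tau=1}^{t-1}r_\tau-t\sum_{n=1}^2\beta_n\chi_n^\top\theta_k$.

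I would then identify the per-pull conditional mean. Because every agent reports $\chi_1$, the arm selection $a_t$ depends only on the reported context and the independent Thompson-sampling draw, hence is independent of the true context $x_t$, which remains distributed as $\{\beta_n\}$; combined with $\mathbb{E}[\eta_\tau]=0$, each reward at a pull of arm $k$ satisfies $\mathbb{E}[r_\tau\mid\mathcal{G}_t]=\sum_{n=1}^2\beta_n\chi_n^\top\theta_k$, where $\mathcal{G}_t$ is the filtration generated by the first $t-1$ pulls of arm $k$ (under which $Y^k_t$ is measurable). The telescoping increment is $Y^k_{t+1}-Y^k_t=r_t-\sum_{n=1}^2\beta_n\chi_n^\top\theta_k$, so conditioning on $\mathcal{G}_t$ yields $\mathbb{E}[Y^k_{t+1}\mid\mathcal{G}_t]=Y^k_t$, establishing the martingale property. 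Finally, evaluating at $t=1$ (empty sum) gives $Y^k_1=-\sum_{n=1}^2\beta_n\chi_n^\top\theta_k$; since this is deterministic, the martingale property (or optional stopping, as $M_k\le T$ is bounded) propagates it to $\mathbb{E}[Y^k_\tau]=Y^k_1$.

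The main obstacle I expect is the careful bookkeeping of the reordering $\mathcal{T}_k(T)$: aligning the index $t$ with ``$t-1$ prior pulls'' so that the $\tfrac{t-1}{t}$ correction cancels cleanly, and rigorously justifying that conditioning on arm $k$ having been chosen does not bias the true context. Both hinge on the fact that all agents report the same context $\chi_1$, which decouples the arm choice from the reward-generating context and makes each martingale increment centered at the mixture mean $\sum_{n=1}^2\beta_n\chi_n^\top\theta_k$ rather than at $\chi_1^\top\theta_k$.
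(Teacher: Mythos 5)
Your proposal is correct and follows essentially the same route as the paper's proof: both compute the closed-form posterior under all-$\chi_1$ reported updates to collapse $Y^k_t$ into a centered sum of rewards $\sum_{\tau=1}^{t-1} r_\tau - t\sum_{n=1}^2\beta_n\chi_n^\top\theta_k$, use the fact that arm selection depends only on the reported context (hence is independent of the true context, which keeps each reward's conditional mean at the mixture value $\sum_n\beta_n\chi_n^\top\theta_k$), and verify the martingale property increment by increment. Your treatment is if anything slightly more careful than the paper's on the filtration (conditioning on the history of pulls rather than merely on $Y^k_t$) and on why conditioning on arm $k$ being chosen does not bias the true context, but the underlying argument is the same.
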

\begin{proof}
Assume that at each time step, the misreporting condition: \(\chi_2^\top \Theta^t \mathbf{p}^t(\chi_1) > \chi_2^\top \Theta^t \mathbf{p}^t(\chi_2)\) holds. Under a Gaussian prior and noise, according to lemma \ref{lemmaantimean}, the posterior mean \(\chi_2^\top \hat{\theta}^{t+1}_k\) for \(t \in \mathcal{T}_k(T)\) under misreporting is given by:
\begin{align}
    &\chi_2^\top \hat{\theta}_k^{t+1} = \chi_2^\top \left(I + t\chi_1 \chi_1^\top \right)^{-1} \left(\mu_k + \sum_{\tau \in [t]} r_\tau \chi_1\right) = \chi_2^\top \mu_k - \frac{t}{t+1} \chi_1^\top \mu_k + \frac{\sum_{\tau \in  [t]} r_\tau }{t+1} \nonumber\\
&\Leftrightarrow\chi_2^\top \hat{\theta}_k^{t+1}-\chi_2^\top \mu_k+\frac{t}{t+1} \chi_1^\top \mu_k-\sum_{n=1}^2 \beta_n \chi_n^\top \theta_k= \frac{\sum_{\tau \in  [t]} r_\tau }{t+1}-\sum_{n=1}^2 \beta_n \chi_n^\top \theta_k=\frac{Y^k_{t+1}}{t+1}, \label{mistheta}
\end{align}
where $ [t]=\{1,\dots,t\}$. Then the expectation of $Y^k_t$ is calculated as
\begin{align*}
    \mathbb{E}[Y^k_t] = \mathbb{E}\bigg[\sum_{\tau \in [t]} r_\tau -(t+1)\sum_{n=1}^2 \beta_n \chi_n^\top \theta_k\bigg]=\theta_k^\top\mathbb{E}_{x_t}[\sum_{\tau \in [t]}x_t]-(t+1)\sum_{n=1}^2 \beta_n \chi_n^\top \theta_k=t\theta_k^\top\sum_{n=1}^2 \beta_n \chi_n-(t+1)\sum_{n=1}^2 \beta_n \chi_n^\top \theta_k=-\sum_{n=1}^2 \beta_n \chi_n^\top \theta_k.
\end{align*}
Then the following equalities further show that $Y^k_t$ is a martingale:
\begin{align*}
    \mathbb{E}[Y^k_{t+1} \mid Y^k_t]
    = &\mathbb{E} \left[ \frac{\sum_{\tau \in[t]} r_\tau}{t+1} - \sum_{n=1}^2 \beta_n \chi_n^\top \theta_k \mid Y^k_t \right](t+1) \\
    =& \mathbb{E} \left[ \frac{\sum_{\tau \in[t-1]} r_\tau-t\sum_{n=1}^2 \beta_n \chi_n^\top \theta_k+r_t+t\sum_{n=1}^2 \beta_n \chi_n^\top \theta_k}{t+1} - \sum_{n=1}^2 \beta_n \chi_n^\top \theta_k \mid Y^k_t \right](t+1) \\
    = &\mathbb{E} \left[ \frac{Y^k_t + t \sum_{n=1}^2 \beta_n \chi_n^\top \theta_k + r_t}{t+1} - \sum_{n=1}^2 \beta_n \chi_n^\top \theta_k \mid Y^k_t \right](t+1) \\
    = &Y^k_t + t \sum_{n=1}^2 \beta_n \chi_n^\top \theta_k + \mathbb{E}[r_t] - \sum_{n=1}^2 \beta_n \chi_n^\top \theta_k (t+1) \\
    = &Y^k_t,
\end{align*}
where the last equality follows from $\mathbb{E}[r_t] = \sum_{n=1}^2 \beta_n \chi_n^\top \theta_k$.
Then we obtain Lemma \ref{martin} and complete the proof.
\end{proof} 

Under (\ref{realization}), we have $\mathbb{E}[Y^1_t]<0$ and $\mathbb{E}[Y^2_t]>0$. Then, by Doob's inequality for martingales, we have:
\begin{align*}
    &\mathbb{P} \left( \max_{t \in \mathcal{T}_1(T)} -Y^1_t \geq 1 \right) \leq\mathbb{E}[-Y^1_t]= \sum_{n=1}^2 \beta_n \chi_n^\top \theta_1\Leftrightarrow\mathbb{P} \left( \min_{t \in \mathcal{T}_1(T)} Y^1_t > -1 \right) \geq1-\mathbb{E}[-Y^1_t]= 1-\sum_{n=1}^2 \beta_n \chi_n^\top \theta_1, \\
    &\mathbb{P} \left( \max_{t \in \mathcal{T}_2(T)} Y^2_t \geq 1 \right) \leq \mathbb{E}[Y^2_t]= \sum_{n=1}^2 \beta_n \chi_n^\top \theta_2\Leftrightarrow\mathbb{P} \left( \max_{t \in \mathcal{T}_2(T)} Y^2_t < 1 \right) \geq 1-\mathbb{E}[Y^2_t]= 1-\sum_{n=1}^2 \beta_n \chi_n^\top \theta_2.
\end{align*}
Substituting (\ref{eq1_}) for \(n, k = 1, 2\) into the inequalities above, we obtain:
\begin{align*}
    \mathbb{P} \left( \min_{t \in \mathcal{T}_1(T)} Y^1_t > -1 \right) 
    = &\mathbb{P} \left( \min_{t \in \mathcal{T}_1(T)} \left( \chi_2^\top \hat{\theta}^t_1 - \chi_2^\top \mu_1 + \frac{t-1}{t} \chi_1^\top \mu_1 - \sum_{n=1}^2 \beta_n \chi_n^\top \theta_1 \right)t > -1 \right) \\
    = &\mathbb{P} \left( \min_{t \in \mathcal{T}_1(T)} \chi_2^\top \hat{\theta}^t_1 > \sum_{n=1}^2 \beta_n \chi_n^\top \theta_1 \right) > 1 - \sum_{n=1}^2 \beta_n \chi_n^\top \theta_1,
\end{align*}
and
\begin{align*}
    \mathbb{P} \left( \max_{t \in \mathcal{T}_2(T)} Y^2_t < 1 \right) 
    = &\mathbb{P} \left( \max_{t \in \mathcal{T}_2(T)} \left( \chi_2^\top \hat{\theta}^t_2 - \chi_2^\top \mu_2 + \frac{t-1}{t} \chi_1^\top \mu_2 - \sum_{n=1}^2 \beta_n \chi_n^\top \theta_2 \right)t < 1 \right) \\
    = &\mathbb{P} \left( \max_{t \in \mathcal{T}_1(T)} \chi_2^\top \hat{\theta}^t_2 < \frac{1}{2} + \sum_{n=1}^2 \beta_n \chi_n^\top \theta_2 \right) > 1 - \sum_{n=1}^2 \beta_n \chi_n^\top \theta_2.
\end{align*}
Recall that under the realization of \(\theta_1\) and \(\theta_2\) in (\ref{realization}), we have \(\sum_{n=1}^2 \beta_n \chi_n^\top \theta_1 > \frac{1}{2} + \sum_{n=1}^2 \beta_n \chi_n^\top \theta_2\). By combining this with the two inequalities above, we have:
\begin{align*}
   \mathbb{P}\bigg(\min_{t \in \mathcal{T}_1(T)} \chi_2^\top \hat{\theta}^t_1 > \max_{t \in \mathcal{T}_1(T)} \chi_2^\top \hat{\theta}^t_2\bigg)>\bigg(1 - \sum_{n=1}^2 \beta_n \chi_n^\top \theta_2\bigg)\bigg(1 - \sum_{n=1}^2 \beta_n \chi_n^\top \theta_2\bigg)>\delta^2.
\end{align*}
This result implies that, starting from time \(t = 1\), where context \(\chi_2\) begins to misreport, there is a probability at least \(\delta^2\) that the Thompson sampling algorithm will misidentify arm \(\theta_1\) as the optimal arm for context \(\chi_2\). Simultaneously, \(\chi_2\) will persist in misreporting at each subsequent time \(t\), upon observing \(\hat{\theta}_k^t\) for \(k = 1, 2\). As a result, its posterior estimate \(\chi_2^\top \hat{\theta}_k^t\) will follow the martingale \(Y_k^t\), forming a logical loop with the event \(\min_{t \in \mathcal{T}_1(T)} \chi_2^\top \hat{\theta}^t_1 > \max_{t \in \mathcal{T}_1(T)} \chi_2^\top \hat{\theta}^t_2\) to happen. Additionally, when \(\chi_2\) misidentifies arm 1 as the optimal arm, context \(\chi_1\) must correctly identify arm 1 as the optimal arm by calculating \(\chi_1^\top \hat{\theta}^t_k\) similarly to Eq. (\ref{mistheta}), because:
\begin{align*}
    \chi_2^\top \hat{\theta}_1^{t+1} &> \chi_2^\top \hat{\theta}_2^{t+1} \\
    1 - \frac{t}{t+1}  + \frac{\sum_{\tau \in \mathcal{T}_{1,1}(t+1)} r_\tau + \sum_{\tau \in \mathcal{T}_{2,1}(t+1)} r_\tau}{t+1}&> -\frac{1}{2} + \frac{t}{t+1}  + \frac{\sum_{\tau \in \mathcal{T}_{1,2}(t+1)} r_\tau + \sum_{\tau \in \mathcal{T}_{2,2}(t+1)} r_\tau}{t+1}\Rightarrow\\
    1 - \frac{t}{t+1}  + \frac{\sum_{\tau \in \mathcal{T}_{1,1}(t+1)} r_\tau + \sum_{\tau \in \mathcal{T}_{2,1}(t+1)} r_\tau}{t+1}&> -1 + \frac{t}{t+1}  + \frac{\sum_{\tau \in \mathcal{T}_{1,2}(t+1)} r_\tau + \sum_{\tau \in \mathcal{T}_{2,2}(t+1)} r_\tau}{t+1},
\end{align*}
where the two sides in the last line are $\chi_1^\top \hat{\theta}^t_1$ and $\chi_1^\top \hat{\theta}^t_2$, respectively. Since arm 1 is always the posterior optimal for both contexts, the probability of choosing arm 1 will always be higher than \(1/2\). The regret, therefore, is at least
\begin{align*}
    \frac{1}{2}\delta^2\beta_2T(\chi_2^\top\theta_2-\chi_2^\top\theta_1)\geq\beta_2 \frac{1}{2} \delta^2 \epsilon T
\end{align*}
under the condition in Eq. (\ref{realization}). Furthermore, the region of \(\theta_1\) and \(\theta_2\) in Eq. (\ref{realization}) has a non-zero measure, implying that this realization has a positive probability of occurring. This completes the proof.

\section{Proof of Lemma \ref{feasible}}\label{APfeasible}

Based on the second constraint of LP (\ref{LP1}), any feasible solution $\{\mathbf{q}^t(\chi_n)\}_{n\in[N]}$ must take the form \(\sum_{i\in [N]}\beta_i\mathbf{p}^t(\chi_i) + \xi_n, \forall n\in[N]\), where \(\xi_n = (\xi_{n,1}, \xi_{n,2}, \dots, \xi_{n,K})\) may be positive or negative, and \(\sum_{n=1}^N\beta_n\xi_n = 0\). We can thus rewrite problem (\ref{LP1}) as follows:
\begin{align}
    \min \max_{n\in[N]}&\left\| \sum_{i\in [N]}\beta_i\mathbf{p}^t(\chi_i)-\mathbf{p}^t(\chi_n)  + \xi_n\right\|_{\infty} \nonumber \\
    \text{s.t.} &\quad \chi_i^\top \Theta^t (\xi_i - \xi_j) \geq 0, \quad \forall i\neq j, \ i,j\in[N] \nonumber \\
    &\quad \xi_{n,1} + \dots + \xi_{n,K} = 0, \quad \forall n\in[N] \nonumber \\
    &\quad \beta_1\xi_{1,k} + \dots + \beta_N\xi_{N,k} = 0, \quad \forall k\in[K] \nonumber \\
    &\quad 0 \leq {\sum_{n\in[N]}\beta_n p_k^t(\chi_n)} + \xi_{n,k} \leq 1, \quad \forall n\in[N], k\in[K]. \label{LP2}
\end{align}

Using the second and third equality constraints, we can set \(\xi_{n,K} = -\xi_{n,1} - \dots - \xi_{n,K-1} \ \forall, n \in [N-1]\) and \(\xi_{N,k} = -\frac{\beta_1}{\beta_N}\xi_{1,k} - \dots - \frac{\beta_{N-1}}{\beta_N}\xi_{N-1,k}, \ \forall k \in [K-1]\). We will then use this to reformulate the first three constraints in (\ref{LP2}) using only variables of \(\xi_{i,1}, \dots, \xi_{i,K-1}\) for \(i = 1, \dots, N-1\). Define $\Theta^t_{1:K-1}$ as $\Theta^t(\cdot,1:K-1)$, which is the submatrix of the first $K-1$ columns in $\Theta^t$. Substituting this $\xi_{n,K}$ and $\xi_{N,k}$ into the first constraint, we get:
\begin{align}
    \chi_i^\top \Theta^t (\xi_i - \xi_j) &= \chi_i^\top \Theta^t \begin{pmatrix}
        \xi_{i,1} - \xi_{j,1} \\
        \vdots \\
        \xi_{i,K-1} - \xi_{j,K-1} \\
        -(\xi_{i,1} - \xi_{j,1}) - \dots - (\xi_{i,K-1} - \xi_{j,K-1})
    \end{pmatrix} = \chi_i^\top (\Theta^t_{1:K-1} - \hat{\theta}^t_K\mathbf{1}^\top) \begin{pmatrix}
        \xi_{i,1} - \xi_{j,1} \\
        \vdots \\
        \xi_{i,K-1} - \xi_{j,K-1}
    \end{pmatrix}.\label{ij}
\end{align}
The above equation holds for \(i \neq N\) and \(j \neq N\). When \(i \neq N\) and \(j = N\), the constraint should be expressed as:
\begin{align}
    &\chi_i^\top \Theta^t (\xi_i - \xi_N)=\chi_i^\top (\Theta^t_{1:K-1} - \hat{\theta}^t_K\mathbf{1}^\top) \begin{pmatrix}
        (1+\frac{\beta_i}{\beta_N})\xi_{i,1} + \sum_{j\neq i,N}\frac{\beta_j}{\beta_N}\xi_{j,1} \\
        \vdots \\
        (1+\frac{\beta_i}{\beta_N})\xi_{i,K-1} + \sum_{j\neq i,N}\frac{\beta_j}{\beta_N}\xi_{j,K-1}
    \end{pmatrix}.\label{iN}
\end{align}
When \(i = N\) and \(j \neq N\), the constraint should be expressed as:
\begin{align*}
    &\chi_N^\top \Theta^t (\xi_N - \xi_j)
    = \chi_N^\top (\Theta^t_{1:K-1} - \hat{\theta}^t_K\mathbf{1}^\top) \begin{pmatrix}
        -(1+\frac{\beta_j}{\beta_N})\xi_{j,1} - \sum_{i\neq j,N}\frac{\beta_i}{\beta_N}\xi_{i,1} \\
        \vdots \\
        -(1+\frac{\beta_j}{\beta_N})\xi_{j,K-1} - \sum_{i\neq j,N}\frac{\beta_i}{\beta_N}\xi_{i,K-1}
    \end{pmatrix}.
\end{align*}

Let \(\mathcal{E}\) denote the matrix formed by the first \(K-1\) rows and \(N-1\) columns of matrix \((\xi_1, \dots, \xi_N)\). Combining Eqs. (\ref{ij}) and (\ref{iN}), the constraint for truthful reporting of \(\chi_i\) with \(i \neq N\), \(\chi_i^\top \Theta^t (\xi_i - \xi_j) \geq 0\) for any \(j \neq i\), can be rewritten as:
\begin{align}
    &\text{vec}((\chi_i^\top (\Theta^t_{1:K-1} - \hat{\theta}^t_K\mathbf{1}^\top)\mathcal{E}A_i)^\top)
    = (\chi_i^\top (\Theta^t_{1:K-1} - \hat{\theta}^t_K\mathbf{1}^\top) \otimes A_i^\top )\text{vec}(\mathcal{E}^\top) \geq 0, \text{ where}\label{Ai}\\
    &A_i = \begin{pmatrix}
        -1 & 0 & \dots & 0 & \frac{\beta_1}{\beta_N} \\
        0 & -1 & \dots & 0 & \frac{\beta_2}{\beta_N} \\
        \vdots & \vdots & \dots & \vdots & \vdots \\
        1 & 1 & \dots & 1 & 1+\frac{\beta_i}{\beta_N} \\
        \vdots & \vdots & \dots & \vdots & \vdots \\
        0 & 0 & \dots & -1 & \frac{\beta_{N-1}}{\beta_N}
    \end{pmatrix} \nonumber
\end{align}
and the row \([1, \dots, 1, 2]\) appears in the \(i\)-th row of the matrix on the left-hand side.

Similarly, the constraint for truthful reporting with \(\chi_N\) can be expressed as:
\begin{align}
    &\text{vec}((\chi_N^\top (\Theta^t_{1:K-1} - \hat{\theta}^t_K\mathbf{1}^\top)\mathcal{E}A_N)^\top) 
    = (\chi_N^\top (\Theta^t_{1:K-1} - \hat{\theta}^t_K\mathbf{1}^\top) \otimes A_N^\top) \text{vec}(\mathcal{E}^\top) \geq 0,\text{ where}\label{AN}\\
    &A_N = \begin{pmatrix}
        -1-\frac{\beta_1}{\beta_N} & -\frac{\beta_1}{\beta_N} & \dots & -\frac{\beta_{1}}{\beta_N} & -\frac{\beta_{1}}{\beta_N} \\
        -\frac{\beta_2}{\beta_N} & -1-\frac{\beta_2}{\beta_N} & \dots & -\frac{\beta_{2}}{\beta_N} & -\frac{\beta_{2}}{\beta_N} \\
        \vdots & \vdots & \dots & \vdots & \vdots \\
        -\frac{\beta_{N-1}}{\beta_N} & -\frac{\beta_{N-1}}{\beta_N} & \dots & -\frac{\beta_{N-1}}{\beta_N} & -1-\frac{\beta_{N-1}}{\beta_N}
    \end{pmatrix} \nonumber.
\end{align}

Stacking the inequalities in Eqs. (\ref{Ai}) and (\ref{AN}) for all \(i \in [N]\) forms the following convex cone:
\begin{align}
\begin{pmatrix}
    \chi_1^\top (\Theta^t_{1:K-1} - \hat{\theta}^t_K\mathbf{1}^\top) \otimes A_1^\top \\
    \vdots \\
    \chi_N^\top (\Theta^t_{1:K-1} - \hat{\theta}^t_K\mathbf{1}^\top) \otimes A_N^\top
\end{pmatrix}
\text{vec}(\mathcal{E}^\top) \geq 0,\label{halfspace}
\end{align}
where the convex cone has a non-zero measure when there is no group of rows in the matrix of the left-hand side pointing in the opposite direction. Formally, let the rows of this matrix be denoted as \(b_1, \dots, b_{N(N-1)}\), and define \(C = [N(N-1)]\). If the following condition holds for all nonnegative \(\lambda_n\) (not all zero):
\[
-\sum_{n \in D} \lambda_n b_n \neq \sum_{n \in C \setminus D} \lambda_n b_n, \quad \forall D \subset C,
\]
then the set of inequalities in (\ref{halfspace}) will have a non-zero measure. This condition is equivalent to stating that the origin 0 does not lie within the convex hull of \(b_1, \dots, b_{N(N-1)}\). A sufficient condition for the origin not to lie within the convex hull of \(b_1, \dots, b_{N(N-1)}\) is that the vectors \(b_1, \dots, b_{N(N-1)}\) are linearly independent. When the vectors are linearly independent, the origin cannot be expressed as a convex combination of them, thereby ensuring that the condition is satisfied. Then we further give a sufficient condition for \(b_1, \dots, b_{N(N-1)}\) being linearly independent in the following lemma.

\begin{lemma}\label{kron}
    If $ \chi_1^\top (\Theta^t_{1:K-1} - \hat{\theta}^t_K\mathbf{1}^\top),\dots, \chi_N^\top (\Theta^t_{1:K-1} - \hat{\theta}^t_K\mathbf{1}^\top)$ are linearly independent, all the rows in (\ref{halfspace}) must also be linearly independent.
\end{lemma}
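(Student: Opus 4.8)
The plan is to prove the statement directly: assume a vanishing linear combination of the rows of the stacked matrix in (\ref{halfspace}) and show every coefficient must be zero. Write \(v_n := \chi_n^\top(\Theta^t_{1:K-1} - \hat{\theta}^t_K\mathbf{1}^\top)\) for the row vectors assumed linearly independent, and let \(a_{n,1},\dots,a_{n,N-1}\) denote the rows of \(A_n^\top\). By the definition of the Kronecker product, the \(m\)-th row of the \(n\)-th block \(v_n\otimes A_n^\top\) is exactly \(v_n\otimes a_{n,m}\). Hence a vanishing combination \(\sum_{n\in[N]}\sum_{m\in[N-1]}c_{n,m}\,(v_n\otimes a_{n,m})=\mathbf{0}\) can be regrouped, by bilinearity of \(\otimes\), into \(\sum_{n\in[N]} v_n\otimes w_n=\mathbf{0}\), where \(w_n:=\sum_{m}c_{n,m}\,a_{n,m}\in\mathbb{R}^{1\times(N-1)}\).

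The crux is the following independence-propagation step, which I expect to be the conceptual heart of the proof: if \(v_1,\dots,v_N\) are linearly independent, then \(\sum_n v_n\otimes w_n=\mathbf{0}\) forces every \(w_n=\mathbf{0}\). I would partition the length-\((K-1)(N-1)\) vector \(\sum_n v_n\otimes w_n\) into \(K-1\) consecutive blocks of length \(N-1\); its \(\ell\)-th block equals \(\sum_n (v_n)_\ell\, w_n\). Stacking the \(v_n\) as the rows of \(V\in\mathbb{R}^{N\times(K-1)}\) and the \(w_n\) as the rows of \(W\in\mathbb{R}^{N\times(N-1)}\), the vanishing of all blocks is precisely \(V^\top W=0\). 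Since the rows \(v_n\) are independent, \(V\) has rank \(N\), so the map \(x\mapsto V^\top x\) is injective and every column of \(W\) lies in \(\ker V^\top=\{\mathbf{0}\}\); thus \(W=0\), i.e.\ each \(w_n=\mathbf{0}\).

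It then remains to show each square matrix \(A_n\) is nonsingular, so that \(w_n=\sum_m c_{n,m}a_{n,m}=\mathbf{0}\) implies \(c_{n,m}=0\) for all \(m\). For \(n=i\neq N\), I would check directly that the columns \(\{e_i-e_j\}_{j\neq i}\) together with the extra column \((1+\beta_i/\beta_N)e_i+\sum_{j\neq i}(\beta_j/\beta_N)e_j\) are independent: balancing coordinates in each row \(k\neq i\) gives \(\gamma_k=\gamma_*\beta_k/\beta_N\), and substituting into the row-\(i\) equation leaves the coefficient \(\tfrac{1}{\beta_N}\sum_{j\leq N-1}\beta_j+1=\tfrac{1}{\beta_N}>0\) on \(\gamma_*\), forcing \(\gamma_*=0\) and hence all coefficients zero. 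For \(n=N\), I would recognize \(A_N=-\bigl(I+\tfrac{1}{\beta_N}\mathbf{b}\mathbf{1}^\top\bigr)\) with \(\mathbf{b}=(\beta_1,\dots,\beta_{N-1})^\top\) and apply the matrix-determinant lemma: \(\det A_N=(-1)^{N-1}\bigl(1+\tfrac{1}{\beta_N}\sum_{k\leq N-1}\beta_k\bigr)=(-1)^{N-1}/\beta_N\neq0\). Combining the two nonsingularity facts with the independence-propagation step yields \(c_{n,m}=0\) for all \(n,m\), so the rows of (\ref{halfspace}) are linearly independent. The only obstacle beyond bookkeeping is confirming invertibility of the \(A_n\) uniformly in \(n\), which the normalization \(\sum_n\beta_n=1\) makes clean.
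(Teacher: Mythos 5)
Your proof is correct and follows essentially the same route as the paper's: regroup the vanishing Kronecker combination as $\sum_{n} X_n \otimes w_n = 0$ with $w_n = \sum_m c_{n,m} a_{n,m}$, use linear independence of the $X_n$ to force each $w_n = 0$, then use linear independence of the columns of each $A_n$ to conclude all coefficients vanish. The only difference is that you explicitly verify the two steps the paper merely asserts --- the independence-propagation for Kronecker products (via your rank argument $V^\top W = 0$, $\ker V^\top = \{\mathbf{0}\}$) and the nonsingularity of each $A_n$ (via coordinate balancing for $n \neq N$ and the matrix determinant lemma giving $\det A_N = (-1)^{N-1}/\beta_N$ for $n = N$) --- which fills in rather than changes the argument.
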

\begin{proof}
We prove the lemma by contradiction. For simplification, let \( X_n = \chi_n^\top (\Theta^t_{1:K-1} - \hat{\theta}^t_K \mathbf{1}^\top) \). Define \( A_n^\top = (a_{n,1}, \dots, a_{n,N-1})^\top \), where \( a_{n,i} \) represents the \(i\)-th column of \( A_n \). It follows directly from (\ref{Ai}) and (\ref{AN}) that \( a_{n,1}, \dots, a_{n,N-1} \) must be linearly independent.

Assume that the vectors \( X_n \) for all \( n \in [N] \) are linearly independent. Suppose there exists a set of scalars \( \lambda_{i,j} \) for \( i \in [N], j \in [N-1] \), not all zero, such that:
\[
\sum_{i \in [N]} \sum_{j \in [N-1]} \lambda_{i,j} X_i \otimes a_{i,j}^\top = 0.
\]
We can rewrite this expression as:
\[
\sum_{i \in [N]} \sum_{j \in [N-1]} \lambda_{i,j} X_i \otimes a_{i,j}^\top = \sum_{i \in [N]} X_i \otimes \left( \sum_{j \in [N-1]} \lambda_{i,j} a_{i,j}^\top \right).
\]

Since \( X_1, \dots, X_N \) are linearly independent, the only way for this equation to hold is:
\[
\sum_{j \in [N-1]} \lambda_{i,j} a_{i,j}^\top = 0 \quad \text{for each } i \in [N].
\]
However, because \( a_{i,1}, \dots, a_{i,N-1} \) are linearly independent, it follows that all \( \lambda_{i,j} = 0 \). This contradicts our initial assumption that not all \( \lambda_{i,j} \) are zero. Therefore, we reach a contradiction, completing the proof.

\end{proof}
Given the conclusion that the first three constraints of (\ref{LP2}) define a non-zero measure space under the condition in Lemma \ref{kron}, we now turn our attention to the fourth constraint in problem (\ref{LP2}). In the Thompson sampling algorithm, the probability of arm choice, \(\mathbf{p}^t(\chi_n)\), ensures that each arm has a strictly positive probability of being selected at every time step. This condition guarantees that the point \(\sum_{n \in [N]} \beta_n p_k^t(\chi_n)\) lies within the interior of the simplex \(\Delta^K\), formed by all possible distributions over \(K\) arms.

As a result, there exists a positive \(\delta\) such that a \(\delta\)-ball centered at \(\sum_{n \in [N]} \beta_n p_k^t(\chi_n)\) is entirely contained within the simplex. This implies when the condition in Lemma 4.1 holds, the intersection between a \(\delta\)-ball around the origin and the convex cone defined by (\ref{halfspace}) constitutes a feasible space for problem (\ref{LP2}) with non-zero measure. Then we complete the proof.


\section{Proof of Lemma \ref{lemmaut1}}\label{APlemmaut1}
We first present the concentration and anti-concentration inequalities for posterior estimate $\chi_n^\top \hat{\theta}^t_k$ and Thompson sampling value $\tilde{\theta}^t_{n,k}$ by context $\chi_n$ for all $k\in[K]$ in Lemmas \ref{lemmaantimean} and \ref{fact1} used throughout the remaining proofs.

\begin{lemma}\label{fact1}
The following concentration and anti-concentration inequalities are from [3].
If $x\geq 0$ and $Z\sim \mathcal{N}(\mu,\sigma^2)$,
    \begin{align*}
       \sqrt{\frac{2}{\pi}}\frac{e^{-x^2/2}}{x+\sqrt{x^2+8/\pi}} \geq Pr(Z>\mu+x\sigma)\geq \sqrt{\frac{2}{\pi}}\frac{e^{-x^2/2}}{(x+\sqrt{x^2+4})}
    \end{align*}
\end{lemma}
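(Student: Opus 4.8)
The plan is to reduce the statement to a pair of bounds on the standard Gaussian tail and then prove both by a single ODE-comparison argument. Writing $W=(Z-\mu)/\sigma\sim\mathcal{N}(0,1)$, I have $\Pr(Z>\mu+x\sigma)=\Pr(W>x)=Q(x)$ with $Q(x)=\frac{1}{\sqrt{2\pi}}\int_x^\infty e^{-t^2/2}\,dt$, so it suffices to treat the standard normal. Introducing the normalized Mills ratio $M(x)=\sqrt{2\pi}\,e^{x^2/2}Q(x)=e^{x^2/2}\int_x^\infty e^{-t^2/2}\,dt$ and the candidate envelopes $B_c(x)=\frac{2}{x+\sqrt{x^2+c}}$, a short rearrangement shows that the two claimed inequalities are exactly $B_4(x)\leq M(x)\leq B_{8/\pi}(x)$ for all $x\geq 0$.

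Next I would record the ingredients that drive the comparison. Differentiating the integral form of $M$ gives the linear ODE $M'(x)=xM(x)-1$, with boundary data $M(0)=\sqrt{\pi/2}$ and $M(x)\to 0$ (indeed $xM(x)\to 1$) as $x\to\infty$. On the envelope side, $B_c$ satisfies the algebraic identity $\frac{c}{4}B_c^2+xB_c-1=0$ and hence $B_c'(x)=-\frac{2B_c}{cB_c+2x}$. The crux is then to study $w_c(x)=\bigl(M(x)-B_c(x)\bigr)e^{-x^2/2}$: substituting the ODE and these identities, a direct calculation collapses its derivative to $w_c'(x)=B_c(x)\,\frac{(8-2c)-\frac{c^2}{2}B_c(x)^2}{4\,(cB_c(x)+2x)}\,e^{-x^2/2}$, so the sign of $w_c'$ is governed entirely by the factor $(8-2c)-\frac{c^2}{2}B_c(x)^2$.

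For the lower bound I take $c=4$, where $8-2c=0$ forces $w_4'\leq 0$ everywhere; since $w_4(x)\to 0$ as $x\to\infty$, a non-increasing function with zero limit is non-negative, yielding $M\geq B_4$. For the upper bound I take $c=8/\pi$, a constant pinned down precisely by requiring tightness at the origin: $B_{8/\pi}(0)=\frac{2}{\sqrt{8/\pi}}=\sqrt{\pi/2}=M(0)$, so $w_{8/\pi}(0)=0$, and also $w_{8/\pi}(\infty)=0$. Now $8-2c>0$, and since $B_c$ is strictly decreasing in $x$, the factor $(8-2c)-\frac{c^2}{2}B_c^2$ is negative for small $x$ and positive for large $x$; thus $w_{8/\pi}$ first decreases away from $0$ and then increases back to $0$. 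A function vanishing at both ends whose derivative changes sign only once (from $-$ to $+$) is non-positive, giving $M\leq B_{8/\pi}$.

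I expect the upper bound to be the main obstacle. The naive comparison (exhibiting one envelope as a global sub- or super-solution of the linear ODE) works for $c=4$ but fails for $c=8/\pi$ near $x=0$, so the monotonicity argument must be replaced by the more delicate sign-change/unimodality argument above, whose validity hinges on the exact matching $B_{8/\pi}(0)=M(0)$; this is exactly why the constant $8/\pi$ arises. I would also verify carefully that $w_c(x)\to 0$ rather than merely staying bounded as $x\to\infty$, which follows from the asymptotics $M(x)\sim 1/x\sim B_c(x)$. Since these are classical tail estimates, the entire argument may alternatively be replaced by a direct citation of the handbook formula in [3].
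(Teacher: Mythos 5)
Your proof is correct, but it takes a genuinely different route from the paper: the paper offers no derivation at all, simply attributing the two bounds to its reference [3] (the Abramowitz--Stegun handbook; the statement is formula 7.1.13 there, rescaled from $e^{x^2}\int_x^\infty e^{-t^2}\,dt$ to the standard normal tail). You instead give a self-contained argument, and I verified its key steps: the reduction to $B_4(x)\le M(x)\le B_{8/\pi}(x)$ for the Mills ratio $M(x)=e^{x^2/2}\int_x^\infty e^{-t^2/2}\,dt$ is an exact rewriting of the lemma; the ODE $M'=xM-1$, the identity $\tfrac{c}{4}B_c^2+xB_c-1=0$, and your collapsed expression
\begin{align*}
w_c'(x)=\frac{B_c(x)\bigl((8-2c)-\tfrac{c^2}{2}B_c(x)^2\bigr)}{4\,(cB_c(x)+2x)}\,e^{-x^2/2}
\end{align*}
all check out by direct computation. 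The case $c=4$ then gives $w_4'\le 0$ with $w_4(\infty)=0$, hence $M\ge B_4$; and for $c=8/\pi$ the factor $(8-2c)-\tfrac{c^2}{2}B_c^2$ is strictly increasing (since $B_c$ decreases), equals $8-32/\pi<0$ at $x=0$ and tends to $8-16/\pi>0$, so $w_{8/\pi}$ vanishes at both ends (the matching $B_{8/\pi}(0)=\sqrt{\pi/2}=M(0)$ is exact) and decreases then increases, forcing $w_{8/\pi}\le 0$. One minor simplification: you need not invoke the asymptotics $M(x)\sim 1/x$ to get $w_c(\infty)=0$; boundedness of $M-B_c$ together with the factor $e^{-x^2/2}$ already suffices. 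What your approach buys is a self-contained lemma that also explains where the two constants come from ($4$ from the sub-solution property at infinity, $8/\pi$ pinned by tightness at the origin); what the paper's approach buys is brevity, which is reasonable for a classical estimate that the Thompson-sampling literature it builds on (e.g., Agrawal--Goyal) also uses by citation.
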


\begin{lemma}\label{lemmaantimean}
For any arm \(i \in [K]\), when an algorithm collects \(t-1\) samples which are all from context \(\chi_n\) at time \(t\), the posterior estimate \(\chi_n^\top \hat{\theta}_i^t\) follows a Gaussian distribution:
\begin{align}
\mathcal{N}\left(\chi_n^\top \theta_i + \frac{\chi_n^\top(\mu_i - \theta_i)}{\chi_n^\top V_i \chi_n} \|\chi_n\|^2_{\hat{V}_i^t}, \|\chi_n\|^2_{\hat{\Sigma}_i^t} \right),
\end{align}
where \( \hat{\Sigma}_i^t = (t-1) \hat{V}_{i}^t \chi_n \chi_n ^\top\hat{V}_{i}^t \), and \( \|\chi_n\|^2_{\hat{V}_i^t} = \chi_n^\top \hat{V}_{i}^t \chi_n \). The matrix \( \hat{V}_{i}^t \) has the closed-form expression:
\begin{align}
\|\chi_n\|^2_{\hat{V}_i^t} = \frac{\chi_n^\top \chi_n}{1 + (t-1) \chi_n^\top \chi_n}.
\end{align}
Furthermore, for any $\delta>0$, the posterior estimate \( \chi_n^\top \hat{\theta}_i^{t} \) satisfies the following concentration inequality:
\begin{align}
&\mathbb{P}\left( \left| \chi_n^\top \theta_i - \chi_n^\top \hat{\theta}_i^{t} \right| \leq \left( \left|\frac{\chi_n^\top (\mu_i - \theta_i)}{\chi_n^\top V_i \chi_n} \right|\|\chi_n\|_{\hat{V}_i^t} + \delta \right) \|\chi_n\|_{\hat{V}_{i}^t} \right)
\geq 1 - \frac{2 \exp\left(-\delta^2 / 2\right)}{\sqrt{\pi} \left( \delta/\sqrt{2} + \sqrt{\delta^2/2 + 4/\pi} \right)}.\label{eq7}
\end{align}
\end{lemma}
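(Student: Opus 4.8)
The plan is to exploit Gaussian conjugacy. Conditioned on the fixed realization $\theta_i$, each observed reward is $r_\tau=\chi_n^\top\theta_i+\eta_\tau$ with $\eta_\tau\sim\mathcal{N}(0,1)$, so all randomness in $\hat\theta_i^t$ enters only through the scalar $S=\sum_{\tau}r_\tau$. Since all $t-1$ pulls come from $\chi_n$, the update rule (\ref{thetakt}) gives $\hat V_i^t=(V_i^{-1}+(t-1)\chi_n\chi_n^\top)^{-1}$ and $\hat\theta_i^t=\hat V_i^t(V_i^{-1}\mu_i+\chi_n S)$. Because $S\sim\mathcal{N}((t-1)\chi_n^\top\theta_i,\,t-1)$ and $\chi_n^\top\hat\theta_i^t$ is an affine function of $S$, it is immediately Gaussian; it then remains only to identify its mean and variance.

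For the mean, I would write $(\hat V_i^t)^{-1}=V_i^{-1}+(t-1)\chi_n\chi_n^\top$ and use it to collapse the prior term: $\chi_n^\top\hat V_i^t V_i^{-1}=\chi_n^\top\hat V_i^t\bigl((\hat V_i^t)^{-1}-(t-1)\chi_n\chi_n^\top\bigr)=(1-(t-1)v)\chi_n^\top$, where $v:=\|\chi_n\|_{\hat V_i^t}^2$. Combined with $\mathbb{E}[S]=(t-1)\chi_n^\top\theta_i$, this yields $\mathbb{E}[\chi_n^\top\hat\theta_i^t]=\chi_n^\top\theta_i+(1-(t-1)v)\chi_n^\top(\mu_i-\theta_i)$. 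The last algebraic step is the identity $1-(t-1)v=v/(\chi_n^\top V_i\chi_n)$, which follows from the Sherman--Morrison evaluation $v=\chi_n^\top V_i\chi_n/(1+(t-1)\chi_n^\top V_i\chi_n)$; this reproduces the claimed mean and, specializing to $V_i=I$, the stated closed form $v=\chi_n^\top\chi_n/(1+(t-1)\chi_n^\top\chi_n)$. The variance is $\mathrm{Var}(\chi_n^\top\hat\theta_i^t)=v^2\,\mathrm{Var}(S)=(t-1)v^2$, which equals $\chi_n^\top\hat\Sigma_i^t\chi_n=\|\chi_n\|_{\hat\Sigma_i^t}^2$, confirming the stated covariance.

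For the concentration bound I would decompose $\chi_n^\top\hat\theta_i^t-\chi_n^\top\theta_i$ into the deterministic bias $\tfrac{\chi_n^\top(\mu_i-\theta_i)}{\chi_n^\top V_i\chi_n}v$ and a centered Gaussian of standard deviation $\sigma=\sqrt{t-1}\,v$, so by the triangle inequality the target event fails only if the centered part exceeds $\delta\sqrt{v}$. The key observation, and the step I expect to be the crux, is that $\sigma\le\sqrt v=\|\chi_n\|_{\hat V_i^t}$: indeed $\sigma^2=(t-1)v^2\le v$ is equivalent to $(t-1)v\le1$, which always holds since $(t-1)v=(t-1)\chi_n^\top V_i\chi_n/(1+(t-1)\chi_n^\top V_i\chi_n)<1$. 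This is precisely what makes the tail $t$-independent: writing the centered part as $\sigma Z$ with $Z\sim\mathcal N(0,1)$, we get $\mathbb P(|\sigma Z|>\delta\sqrt v)=\mathbb P(|Z|>\delta\sqrt v/\sigma)\le\mathbb P(|Z|>\delta)=2\,\mathbb P(Z>\delta)$. Applying the upper tail estimate of Lemma \ref{fact1} with $x=\delta$ and simplifying (multiplying numerator and denominator by $\sqrt2$) gives exactly $\tfrac{2e^{-\delta^2/2}}{\sqrt\pi(\delta/\sqrt2+\sqrt{\delta^2/2+4/\pi})}$, completing the bound. The only real subtlety is keeping the bias term aligned with the $\|\chi_n\|_{\hat V_i^t}$ scaling used in the statement; everything else is conjugacy and Sherman--Morrison bookkeeping.
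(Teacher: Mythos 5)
Your proof is correct and takes essentially the same route as the paper's: Gaussian conjugacy to identify $\chi_n^\top\hat{\theta}_i^t$ as an affine function of Gaussian noise, Sherman--Morrison to produce both the closed form of $\|\chi_n\|^2_{\hat{V}_i^t}$ and the bias term $\frac{\chi_n^\top(\mu_i-\theta_i)}{\chi_n^\top V_i\chi_n}\|\chi_n\|^2_{\hat{V}_i^t}$, and then the triangle inequality plus the variance domination $\|\chi_n\|_{\hat{\Sigma}_i^t}\leq\|\chi_n\|_{\hat{V}_i^t}$ together with Lemma \ref{fact1} for the tail bound. The only differences are cosmetic bookkeeping (you route the randomness through the sufficient statistic $S=\sum_\tau r_\tau$ and the precision identity $(\hat{V}_i^t)^{-1}=V_i^{-1}+(t-1)\chi_n\chi_n^\top$ rather than expanding $\chi_n^\top\hat{V}_i^tV_i^{-1}(\mu_i-\theta_i)$ directly), and you explicitly verify the inequality $(t-1)\|\chi_n\|^2_{\hat V_i^t}\leq 1$ that the paper merely asserts.
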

\begin{proof}

We begin by writing the difference between the posterior estimate and the true parameter \(\chi_n^\top \theta_i \) as follows:
\begin{align*}
    \chi_n^\top(\hat{\theta}_{i}^t - \theta_i) = \chi_n^\top \left( \hat{V}_{i}^t \left( V_i^{-1} (\mu_i - \theta_i) + \sum_{\tau \in \mathcal{T}_i^t} x_\tau \eta_\tau \right) \right),
\end{align*}
where \( \mathcal{T}_i^t \) represents the set of time steps when the algorithm chose arm \( i \) before time $t$. Therefore, the posterior mean of \( \chi_n^\top \hat{\theta}_{i}^t \) follows a Gaussian distribution with mean \( \chi_n^\top \theta_i + \chi_n^\top \hat{V}_i^t V_i^{-1} (\mu_i - \theta_i) \) and variance \( \|\chi_n\|_{\hat{\Sigma}_i^t}^2 = \chi_n^\top \hat{V}_{i}^t \left( \sum_{\tau \in \mathcal{T}_i} x_{\tau} x_\tau^\top \right) \hat{V}_{i}^t \chi_n < \|\chi_n\|_{\hat{V}_i^t}^2 \).

When all the collected \( t-1 \) samples are from the context \( \chi_n \), we apply the Sherman–Morrison–Woodbury formula to obtain:
\begin{align*}
    \chi_n^\top \hat{V}^t_k \chi_n =& \chi_n^\top \left( V_i^{-1} + (t-1) \chi_n \chi_n^\top \right)^{-1} \chi_n
    =\chi_n^\top V_i \chi_n - \frac{(t-1) (\chi_n^\top V_i \chi_n)^2}{1 + s \chi_n^\top V_i \chi_n}
    = \frac{\chi_n^\top V_i \chi_n}{1 + (t-1) \chi_n^\top V_i \chi_n}.
\end{align*}
Also by Sherman–Morrison–Woodbury formula, we further expand the term \( \chi_n^\top \hat{V}_i^t V_i^{-1} (\mu_i - \theta_i) \):
\begin{align*}
    \chi_n^\top \hat{V}_i^t V_i^{-1} (\mu_i - \theta_i)
    =& \chi_n^\top \left( V_i - \frac{(t-1) V_i \chi_n \chi_n^\top V_i}{1 + (t-1) \chi_n^\top V_i \chi_n} \right) V_i^{-1} (\mu_i - \theta_i)\\
    =& \frac{\chi_n^\top (\mu_i - \theta_i)}{1 + (t-1) \chi_n^\top V_i \chi_n}
    = \frac{\chi_n^\top (\mu_i - \theta_i)}{\chi_n^\top V_i \chi_n} \cdot \frac{\chi_n^\top V_i \chi_n}{1 + (t-1) \chi_n^\top V_i \chi_n}
    = \frac{\chi_n^\top (\mu_i - \theta_i)}{\chi_n^\top V_i \chi_n} \|\chi_n\|^2_{\hat{V}_i^t}.
\end{align*}

Finally, applying the concentration inequality for the Normal distribution (Lemma \ref{fact1}), we get:
\begin{align*}
    \mathbb{P}\left( \left| \chi_n^\top \theta_i - \chi_n^\top \hat{\theta}_i^{t} \right| \leq \left( \left|\frac{\chi_n^\top (\mu_i - \theta_i)}{\chi_n^\top V_i \chi_n} \right|\|\chi_n\|_{\hat{V}_i^t} + \delta \right) \|\chi_n\|_{\hat{V}_i^t} \right)
    \geq &\mathbb{P}\left( \left| \chi_n^\top \hat{\theta}_i^{t} - \chi_n^\top \theta_i - \frac{\chi_n^\top (\mu_i - \theta_i)}{\chi_n^\top V_i \chi_n} \|\chi_n\|^2_{\hat{V}_i^t} \right| \leq \delta \|\chi_n\|_{\hat{\Sigma}_i^t} \right)\\
    \geq& 1 - \frac{2 \exp\left(-\delta^2 / 2\right)}{\sqrt{\pi} \left( \delta / \sqrt{2} + \sqrt{\delta^2 / 2 + 4 / \pi} \right)}.
\end{align*}

\end{proof}

To prove Lemma \ref{lemmaut1}, we present another key lemma below as in \cite{agrawal2017near}.
\begin{lemma}\label{lemmaut11}
In Thompson sampling, for any suboptimal arm \( j \neq \alpha \) and any history \( \mathcal{F}_t \), the probability of selecting arm \( j \) at time \( t \), together with the occurrence of events \( E^{\mu}_{n,j}(t) \) and \( E^{\theta}_{n,j}(t) \), can be upper-bounded as follows:
\begin{align}
    &\mathbb{P}(\bar{a}_t = j, E^{\mu}_{n,j}(t), E^{\theta}_{n_j}(t) \mid \mathcal{F}_t, \tilde{x}_t = \chi_n) 
    \leq \frac{\mathbb{P}(\tilde{\theta}_{n,\alpha} \leq \chi_n^\top \theta_j + \frac{2\Delta_{n,j}}{3}\mid\mathcal{F}_t, \tilde{x}_t = \chi_n)}{\mathbb{P}(\tilde{\theta}_{n,\alpha} >\chi_n^\top \theta_j + \frac{2\Delta_{n,j}}{3}\mid\mathcal{F}_t, \tilde{x}_t = \chi_n)} \cdot \mathbb{P}(\bar{a}_t = \alpha \mid E^{\mu}_{n,j}(t), E^{\theta}_{n_j}(t),\mathcal{F}_t, \tilde{x}_t = \chi_n). \label{eq2}
\end{align}
\end{lemma}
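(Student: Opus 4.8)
The plan is to follow the sample-decoupling argument of Agrawal and Goyal \cite{agrawal2017near}, isolating the randomness of the optimal arm's Thompson sample. Throughout I condition on $\mathcal{F}_t$ and $\tilde{x}_t=\chi_n$, and abbreviate $y_j:=\chi_n^\top\theta_j+\frac{2\Delta_{n,j}}{3}$. First I would record the elementary but crucial consequence of the two good events: on $E^{\mu}_{n,j}(t)\cap E^{\theta}_{n,j}(t)$ the defining inequalities chain together to give $\tilde{\theta}^t_{n,j}\le \chi_n^\top\hat{\theta}^t_j+\frac{\Delta_{n,j}}{3}\le \chi_n^\top\theta_j+\frac{2\Delta_{n,j}}{3}=y_j$, so under these events the suboptimal sample is pinned below the threshold $y_j$. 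I would also note that $E^{\mu}_{n,j}(t)$ is $\mathcal{F}_t$-measurable (it depends only on the posterior mean) while $E^{\theta}_{n,j}(t)$ depends only on arm $j$'s own sample.

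Next I would enlarge the conditioning to $\mathcal{G}:=\sigma\big(\mathcal{F}_t,\{\tilde{\theta}^t_{n,k}\}_{k\neq\alpha}\big)$, i.e.\ reveal every arm's sample except the optimal arm's. Given $\mathcal{G}$, the only remaining randomness is $\tilde{\theta}_{n,\alpha}$, which is independent of the revealed samples conditionally on $\mathcal{F}_t$; moreover $E^{\mu}_{n,j}(t)$, $E^{\theta}_{n,j}(t)$ and $M:=\max_{k\neq\alpha}\tilde{\theta}^t_{n,k}$ are all $\mathcal{G}$-measurable. Since $\{\bar{a}_t=j\}$ forces $\tilde{\theta}^t_{n,j}=M$ together with $\tilde{\theta}_{n,\alpha}\le M$, while $\{\bar{a}_t=\alpha\}$ is (up to a null set) exactly $\{\tilde{\theta}_{n,\alpha}>M\}$, writing $G_\alpha$ for the Gaussian conditional CDF of $\tilde{\theta}_{n,\alpha}$ gives, on $E^{\mu}_{n,j}(t)\cap E^{\theta}_{n,j}(t)\cap\{\tilde{\theta}^t_{n,j}=M\}$,
\begin{align*}
\mathbb{P}(\bar{a}_t=j\mid\mathcal{G})=G_\alpha(M),\qquad \mathbb{P}(\bar{a}_t=\alpha\mid\mathcal{G})=1-G_\alpha(M),
\end{align*}
and when $\tilde{\theta}^t_{n,j}\neq M$ the first probability is $0$, so the bound below is trivial there.

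The crux is the monotonicity of the odds map $x\mapsto G_\alpha(x)/(1-G_\alpha(x))$, which is nondecreasing because $G_\alpha$ is. Since $M=\tilde{\theta}^t_{n,j}\le y_j$ on the good events, this yields $G_\alpha(M)\le \frac{G_\alpha(y_j)}{1-G_\alpha(y_j)}\,(1-G_\alpha(M))$, hence
\begin{align*}
\mathbf{1}(E^{\mu}_{n,j}(t),E^{\theta}_{n,j}(t))\,\mathbb{P}(\bar{a}_t=j\mid\mathcal{G})\le \frac{\mathbb{P}(\tilde{\theta}_{n,\alpha}\le y_j\mid\mathcal{F}_t)}{\mathbb{P}(\tilde{\theta}_{n,\alpha}> y_j\mid\mathcal{F}_t)}\,\mathbf{1}(E^{\mu}_{n,j}(t),E^{\theta}_{n,j}(t))\,\mathbb{P}(\bar{a}_t=\alpha\mid\mathcal{G}),
\end{align*}
where the ratio is $\mathcal{F}_t$-measurable precisely because $\tilde{\theta}_{n,\alpha}$ has the same conditional law under $\mathcal{G}$ as under $\mathcal{F}_t$. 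I would then take $\mathbb{E}[\,\cdot\mid\mathcal{F}_t]$ of both sides: by the tower property the left side collapses to $\mathbb{P}(\bar{a}_t=j,E^{\mu}_{n,j}(t),E^{\theta}_{n,j}(t)\mid\mathcal{F}_t)$ and the right side to the same ratio times $\mathbb{P}(\bar{a}_t=\alpha,E^{\mu}_{n,j}(t),E^{\theta}_{n,j}(t)\mid\mathcal{F}_t)$; bounding this last joint probability by the conditional probability $\mathbb{P}(\bar{a}_t=\alpha\mid E^{\mu}_{n,j}(t),E^{\theta}_{n,j}(t),\mathcal{F}_t)$ (dropping the factor $\mathbb{P}(E^{\mu}_{n,j}(t),E^{\theta}_{n,j}(t)\mid\mathcal{F}_t)\le 1$) reproduces (\ref{eq2}) exactly. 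I expect the main obstacle to be the measurability bookkeeping in the conditioning step: one must verify that revealing $\{\tilde{\theta}^t_{n,k}\}_{k\neq\alpha}$ leaves $\tilde{\theta}_{n,\alpha}$ with its original conditional distribution, so that the odds ratio genuinely factors out of the expectation, and one must handle the $\tilde{\theta}^t_{n,j}\neq M$ case so that the per-$\mathcal{G}$ inequality holds pointwise.
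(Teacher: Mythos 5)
Your proof is correct, but it runs along a genuinely different route than the paper's. The paper (following Agrawal and Goyal) never conditions on the other arms' samples: it sandwiches the two quantities between product factorizations. Concretely, it bounds $\mathbb{P}(\bar{a}_t = j, E^{\mu}_{n,j}(t), E^{\theta}_{n,j}(t) \mid \mathcal{F}_t, \tilde{x}_t=\chi_n)$ above by $\mathbb{P}(\tilde{\theta}_{n,\alpha} \leq y_j \mid \mathcal{F}_t)\cdot \mathbb{P}(\tilde{\theta}_{n,i} \leq y_j,\ \forall i \neq j,\alpha \mid \mathcal{F}_t)$, bounds $\mathbb{P}(\bar{a}_t = \alpha \mid E^{\mu}_{n,j}(t), E^{\theta}_{n,j}(t), \mathcal{F}_t, \tilde{x}_t=\chi_n)$ below by $\mathbb{P}(\tilde{\theta}_{n,\alpha} > y_j \mid \mathcal{F}_t)\cdot \mathbb{P}(\tilde{\theta}_{n,i} \leq y_j,\ \forall i \neq j,\alpha \mid \mathcal{F}_t)$, and then cancels the common factor over the arms $i \neq j,\alpha$. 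Your disintegration argument — revealing $\mathcal{G}=\sigma(\mathcal{F}_t,\{\tilde{\theta}^t_{n,k}\}_{k\neq\alpha})$, reducing both selection probabilities to $G_\alpha(M)$ and $1-G_\alpha(M)$, invoking monotonicity of the odds map, and integrating back via the tower property — reaches the same inequality pointwise rather than through event containment and factorization. What your version buys: it only requires $\tilde{\theta}_{n,\alpha}$ to be conditionally independent of the remaining samples (rather than full mutual independence across all arms, which the paper's product factorization uses), and your intermediate bound carries the joint probability $\mathbb{P}(\bar{a}_t=\alpha, E^{\mu}_{n,j}(t), E^{\theta}_{n,j}(t)\mid\mathcal{F}_t)$ on the right-hand side, which is slightly stronger than the conditional form before you relax it. What the paper's version buys: it is shorter, requires no auxiliary $\sigma$-algebra or measurability bookkeeping, and sidesteps the tie-handling and odds-ratio-degeneracy caveats entirely (your cross-multiplied form $G_\alpha(M)\le G_\alpha(y_j)$ disposes of the division concern, and ties are null for Gaussian posteriors, so these are cosmetic, not gaps).
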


\begin{proof}
When the Thompson sampling algorithm chooses arm \( j \) conditioned on events \( E^{\mu}_{n,j}(t) \) and \( E^{\theta}_{n,j}(t) \), since the algorithm bases this choice on the sampled value \( \tilde{\theta}_{n,j}^t \), it must follow that the sampled values for all other arms are less than or equal to \( \chi_n^\top \theta_j + \frac{2\Delta_{n,j}}{3} \). Therefore, we have:
\begin{align*}
\mathbb{P}(\bar{a}_t = j, E^{\mu}_{n,j}(t), E^{\theta}_{n_j}(t) \mid \mathcal{F}_t, \tilde{x}_t = \chi_n)
    \leq&\mathbb{P}(\bar{a}_t = j \mid E^{\mu}_{n_j}(t), E^{\theta}_{n_j}(t), \mathcal{F}_t, \tilde{x}_t = \chi_n)\\
    \leq &\mathbb{P}(\tilde{\theta}_{n,i} \leq \chi_n^\top \theta_j + \frac{2\Delta_{n,j}}{3}, \forall i \neq j\mid \mathcal{F}_t, \tilde{x}_t = \chi_n)\\
=&\mathbb{P}(\tilde{\theta}_{n,\alpha} \leq \chi_n^\top \theta_j + \frac{2\Delta_{n,j}}{3}\mid\mathcal{F}_t, \tilde{x}_t = \chi_n) \cdot \mathbb{P}(\tilde{\theta}_{n,i} \leq \chi_n^\top \theta_j + \frac{2\Delta_{n_j}}{3}, \forall i \neq j, \alpha\mid \mathcal{F}_t, \tilde{x}_t = \chi_n).
\end{align*}

On the other hand, if the sampling value for the optimal arm \( \tilde{\theta}_{n,\alpha} \) is higher than \( \chi_n^\top \theta_j + \frac{2\Delta_{n,j}}{3} \), the algorithm must select arm \( \alpha \). Therefore, we have:
\begin{align*}
&\mathbb{P}(\bar{a}_t=\alpha \mid E^{\mu}_{n_j}(t), E^{\theta}_{n_j} (t),\mathcal{F}_t, \tilde{x}_t = \chi_n)
\geq \mathbb{P}(\tilde{\theta}_{n,\alpha} > \chi_n^\top \theta_j + \frac{2\Delta_{n_j}}{3}\mid\mathcal{F}_t, \tilde{x}_t = \chi_n) \cdot \mathbb{P}(\tilde{\theta}_{n,i} \leq \chi_n^\top \theta_j + \frac{2\Delta_{n_j}}{3}, \forall i \neq j, \alpha\mid\mathcal{F}_t,\tilde{x}_t = \chi_n).
\end{align*}

Finally, combining these two bounds, we arrive at:
\begin{align*}
&\mathbb{P}(\bar{a}_t = j, E^{\mu}_{n_j}(t), E^{\theta}_{n_j}(t) \mid \mathcal{F}_t, \tilde{x}_t = \chi_n)
\leq \frac{\mathbb{P}(\tilde{\theta}_{n,\alpha} \leq \chi_n^\top \theta_j + \frac{2\Delta_{n,j}}{3}\mid\mathcal{F}_t, \tilde{x}_t = \chi_n)}{\mathbb{P}(\tilde{\theta}_{n,\alpha} > \chi_n^\top \theta_j + \frac{2\Delta_{n_j}}{3}\mid\mathcal{F}_t, \tilde{x}_t = \chi_n)} \cdot \mathbb{P}(\bar{a}_t = \alpha \mid  E^{\mu}_{n_j}(t), E^{\theta}_{n_j}(t),\mathcal{F}_t, \tilde{x}_t = \chi_n).
\end{align*}

This completes the proof.
\end{proof}
By Lemma \ref{lemmaut11}, our objective of $\sum_{t=1}^T\mathbb{E}\left[\mathbf{1}(\bar{a}_t = j, E^{\mu}_{n,j}(t), E^{\theta}_{n,j}(t))|\tilde{x}_t=\chi_n,M_n(t)=t-1\right]$ can be upper bounded by:
   \begin{align*}
    &\sum_{t=1}^T\mathbb{E}\left[\mathbf{1}(\bar{a}_t = j, E^{\mu}_{n,j}(t), E^{\theta}_{n,j}(t))|\tilde{x}_t=\chi_n,M_n(t)=t-1\right]\nonumber\\
    =&\sum_{t=1}^T\mathbb{E}\left[\mathbb{P}(\bar{a}_t = j, E^{\mu}_{n,j}(t), E^{\theta}_{n,j}(t)|\tilde{x}_t=\chi_n,\mathcal{F}_t)|M_n(t)=t-1\right]\nonumber\\
    \leq&\sum_{t=1}^T\mathbb{E}\left[\frac{\mathbb{P}(\tilde{\theta}_{n,\alpha} \leq \chi_n^\top \theta_j + \frac{2\Delta_{n,j}}{3}\mid\tilde{x}_t=\chi_n,\mathcal{F}_t)}{\mathbb{P}(\tilde{\theta}_{n,\alpha} > \chi_n^\top \theta_j + \frac{2\Delta_{n,j}}{3}\mid\tilde{x}_t=\chi_n,\mathcal{F}_t)}\right.
    \cdot \left.\mathbb{P}(\bar{a}_t = \alpha \mid E^{\mu}_{n,j}(t), E^{\theta}_{n_j}(t),\tilde{x}_t=\chi_n,\mathcal{F}_t)|M_n(t)=t-1\right].\nonumber\\
    \end{align*}
The last expression can then be rewritten as
    \begin{align}
    &\sum_{t=1}^T\mathbb{E}\left[\mathbb{E}\left[\frac{\mathbb{P}(\tilde{\theta}_{n,\alpha} \leq \chi_n^\top \theta_j + \frac{2\Delta_{n,j}}{3}\mid\mathcal{F}_t, \tilde{x}_t = \chi_n)}{\mathbb{P}(\tilde{\theta}_{n,\alpha} > \chi_n^\top \theta_j + \frac{2\Delta_{n,j}}{3}\mid\mathcal{F}_t, \tilde{x}_t = \chi_n)}\right.\right. \cdot \left.\left.\mathbf{1}(\bar{a}_t = \alpha) \mid E^{\mu}_{n,j}(t), E^{\theta}_{n_j}(t),\mathcal{F}_t\right]|M_n(t)=t-1, \tilde{x}_t = \chi_n\right]\nonumber\\
    =&\sum_{t=1}^T\mathbb{E}\left[\frac{\mathbb{P}(\tilde{\theta}_{n,\alpha} \leq \chi_n^\top \theta_j + \frac{2\Delta_{n,j}}{3}\mid\mathcal{F}_t, \tilde{x}_t = \chi_n)}{\mathbb{P}(\tilde{\theta}_{n,\alpha} > \chi_n^\top \theta_j + \frac{2\Delta_{n,j}}{3}\mid\mathcal{F}_t, \tilde{x}_t = \chi_n)}\right.\cdot \left.\mathbf{1}(\bar{a}_t = \alpha )\mid E^{\mu}_{n,j}(t), E^{\theta}_{n_j}(t),M_n(t)=t-1, \tilde{x}_t = \chi_n\right]\nonumber\\
    \leq &\sum_{t=1}^T\mathbb{E}\left[\frac{\mathbb{P}(\tilde{\theta}_{n,\alpha} \leq \chi_n^\top \theta_j + \frac{2\Delta_{n,j}}{3}\mid\mathcal{F}_t, M_{n,\alpha}(t)=t-1)}{\mathbb{P}(\tilde{\theta}_{n,\alpha} >\chi_n^\top \theta_j + \frac{2\Delta_{n,j}}{3}\mid\mathcal{F}_t, M_{n,\alpha}(t)=t-1)}\right],\label{eq3}
\end{align} 
where $M_{n,j}(t)$ is the number of pulls of any arm $j\in[K]$ by context $\chi_n$ before time $t$. Recall that $\tilde{\theta}_{n,\alpha}$ is a sample from porsteior distribution $\mathcal{N}(\chi_n^\top\hat{\theta}^t_{\alpha}, ||\chi_n||_{\hat{V}^t_{\alpha}}^2)$. Then $\mathbb{P}(\tilde{\theta}_{n,\alpha} \leq \chi_n^\top \theta_j + \frac{2\Delta_{n,j}}{3}\mid\mathcal{F}_t, \tilde{x}_t = \chi_n,M_{n,\alpha}(t)=t-1)$ can be rewritten as
\begin{align*}
    \mathbb{P}\bigg(\tilde{\theta}_{n,\alpha} \leq \chi_n^\top \theta_j + \frac{2\Delta_{n,j}}{3}\mid\mathcal{F}_t,M_{n,\alpha}(t)=t-1\bigg)
    =&\mathbb{P}\bigg(\frac{\tilde{\theta}_{n,\alpha}-\chi_n^\top\hat{\theta}_\alpha}{||\chi_n||_{\hat{V}^t_\alpha}} \leq \frac{\chi_n^\top \theta_j-\chi_n^\top\hat{\theta}_\alpha+ \frac{2\Delta_{n,j}}{3}}{||\chi_n||_{\hat{V}^t_\alpha}}\mid\mathcal{F}_t, M_{n,\alpha}(t)=t-1\bigg)\\
    =&\Phi\bigg( \frac{\chi_n^\top \theta_j-\chi_n^\top\hat{\theta}_\alpha+ \frac{2\Delta_{n,j}}{3}}{||\chi_n||_{\hat{V}^t_\alpha}}\bigg),
\end{align*}
where the $\Phi(\cdot)$ represents the cumulative density function of standard normal distribution and $||\chi_n||_{\hat{V}^t_\alpha}=\sqrt{\frac{\chi_n^\top V_\alpha\chi_n}{1+(t-1)\chi_n^\top V_\alpha\chi_n}}$ when $M_{n,\alpha}(t)=t-1$ by Lemma \ref{lemmaantimean}. Substituting the final expression and the probability density function of \( \chi_n^\top \hat{\theta}^t_{\alpha} \) from Lemma \ref{lemmaantimean} into Eq. (\ref{eq3}), we obtain:
\begin{align}
    &\mathbb{E}\left[\frac{\mathbb{P}(\tilde{\theta}_{n,\alpha} \leq \chi_n^\top \theta_j + \frac{2\Delta_{n,j}}{3}\mid\mathcal{F}_t, x_t = \chi_n,M_{n,\alpha}(t)=t-1)}{\mathbb{P}(\tilde{\theta}_{n,\alpha} > \chi_n^\top \theta_j + \frac{2\Delta_{n,j}}{3}\mid\mathcal{F}_t, x_t = \chi_n,M_{n,\alpha}(t)=t-1)}\right]\nonumber\\
    =&\int^\infty_{-\infty} \frac{1}{\sqrt{2\pi}||\chi_n||_{\hat{\Sigma}^t_\alpha}}\exp\bigg(-\frac{\bigg(y-\chi_n^\top\theta_\alpha-\frac{\chi_n^\top(\mu_\alpha - \theta_\alpha)}{\chi_n^\top V_\alpha \chi_n} \|\chi_n\|^2_{\hat{V}_\alpha^t}\bigg)^2}{2||\chi_n||^2_{\hat{\Sigma}_\alpha^t}}\bigg)
    \cdot\frac{\Phi( \frac{\chi_n^\top \theta_j-y+ {2\Delta_{n,j}}/{3}}{||\chi_n||_{\hat{V}^t_\alpha}})}{1-\Phi( \frac{\chi_n^\top \theta_j-y+ {2\Delta_{n,j}}/{3}}{||\chi_n||_{\hat{V}^t_\alpha}})}dy\nonumber\\
    =&\int^\infty_{-\infty} \frac{||\chi_n||_{\hat{V}_\alpha^t}}{\sqrt{2\pi}||\chi_n||_{\hat{\Sigma}^t_\alpha}}\exp\bigg(-\frac{(||\chi_n||_{\hat{V}_\alpha^t}y+\Delta_{n,j}/3+v||\chi_n||^2_{\hat{V}_\alpha^t})^2}{2||\chi_n||^2_{\hat{\Sigma}_\alpha^t}}\bigg)
    \cdot\frac{\Phi(y)}{1-\Phi(y)}dy\nonumber\\
    \leq &\int^\infty_{0} \frac{||\chi_n||_{\hat{V}_\alpha^t}}{\sqrt{2\pi}||\chi_n||_{\hat{\Sigma}^t_\alpha}}\exp\bigg(-\frac{(||\chi_n||_{\hat{V}_\alpha^t}y+\Delta_{n,j}/3+v||\chi_n||^2_{\hat{V}_\alpha^t})^2}{2||\chi_n||^2_{\hat{\Sigma}_\alpha^t}}\bigg)
    \frac{1}{1-\Phi(y)}dy\nonumber\\
    &+2\int_{-\infty}^0 \frac{||\chi_n||_{\hat{V}_\alpha^t}}{\sqrt{2\pi}||\chi_n||_{\hat{\Sigma}^t_\alpha}}
    \exp\bigg(-\frac{(||\chi_n||_{\hat{V}_\alpha^t}y+\Delta_{n,j}/3+v||\chi_n||^2_{\hat{V}_\alpha^t})^2}{2||\chi_n||^2_{\hat{\Sigma}_\alpha^t}}\bigg)\Phi(y)dy,\label{eq4}
\end{align}
where we denote $v=\frac{\chi_n^\top(\mu_\alpha - \theta_\alpha)}{\chi_n^\top V_\alpha \chi_n} $. Next, we first examine the first term in the final expression of (\ref{eq4}):
\begin{align*}
    &\int^\infty_{0} \frac{||\chi_n||_{\hat{V}_\alpha^t}}{\sqrt{2\pi}||\chi_n||_{\hat{\Sigma}^t_\alpha}}\exp\bigg(-\frac{(||\chi_n||_{\hat{V}_\alpha^t}y+\Delta_{n,j}/3+v||\chi_n||^2_{\hat{V}_\alpha^t})^2}{2||\chi_n||^2_{\hat{\Sigma}_\alpha^t}}\bigg)\frac{1}{1-\Phi(y)}dy\\
    \leq &\int^\infty_{0} \frac{||\chi_n||_{\hat{V}_\alpha^t}}{\sqrt{2\pi}||\chi_n||_{\hat{\Sigma}^t_\alpha}}\exp\bigg(-\frac{(||\chi_n||_{\hat{V}_\alpha^t}y+\Delta_{n,j}/3+v||\chi_n||^2_{\hat{V}_\alpha^t})^2}{2||\chi_n||^2_{\hat{\Sigma}_\alpha^t}}\bigg) \sqrt{\frac{\pi}{2}}\exp(y^2/2) \bigg(y+\sqrt{y^2+8/\pi}\bigg)    dy\\
    \leq&\int^\infty_{0} \frac{||\chi_n||_{\hat{V}_\alpha^t}}{||\chi_n||_{\hat{\Sigma}^t_\alpha}}\exp\bigg(-\frac{(||\chi_n||_{\hat{V}_\alpha^t}y+\Delta_{n,j}/3+v||\chi_n||^2_{\hat{V}_\alpha^t})^2}{2||\chi_n||^2_{\hat{\Sigma}_\alpha^t}}\bigg)\exp(y^2/2) \bigg(y+1\bigg)    dy,
\end{align*}
where the second inequality follows from Lemma \ref{fact1}. With some algebraic manipulation, the final integral can be derived as:
\begin{align*}    
    &\frac{||\chi_n||_{\hat{V}_\alpha^t}||\chi_n||_{\hat{\Sigma}_\alpha^t}}{||\chi_n||_{\hat{V}_\alpha^t}^2-||\chi_n||^2_{\hat{\Sigma}_\alpha^t}}\exp\bigg(-\frac{(\Delta_{n,j}/3+||\chi_n||^2_{\hat{V}_\alpha^t}v)^2}{2||\chi_n||^2_{\hat{\Sigma}_\alpha^t}}\bigg)\\
    &+\int^\infty_{\frac{||\chi_n||_{\hat{V}_\alpha^t}(\Delta_{n,j}/3+||\chi_n||^2_{\hat{V}_\alpha^t}v)}{||\chi_n||_{\hat{\Sigma}_\alpha^t}\sqrt{||\chi_n||^2_{\hat{V}_\alpha^t}-||\chi_n||^2_{\hat{\Sigma}_\alpha^t}}}}\exp\bigg(-\frac{y^2}{2}+\frac{(\Delta_{n,j}/3+||\chi_n||^2_{\hat{V}_\alpha^t}v)^2}{2(||\chi_n||^2_{\hat{V}_\alpha^t}-||\chi_n||^2_{\hat{\Sigma}_\alpha^t})}\bigg)dy\bigg(1-\frac{||\chi_n||_{\hat{V}_\alpha^t}(\Delta_{n,j}/3+||\chi_n||^2_{\hat{V}_\alpha^t}v)}{||\chi_n||^2_{\hat{V}_\alpha^t}-||\chi_n||^2_{\hat{\Sigma}_\alpha^t}}\bigg)\frac{||\chi_n||_{\hat{V}_\alpha^t}}{\sqrt{||\chi_n||^2_{\hat{V}_\alpha^t}-||\chi_n||^2_{\hat{\Sigma}_\alpha^t}}}.
\end{align*}
Substituting $||\chi_n||^2_{\hat{\Sigma}_\alpha^t}=(t-1)\bigg(\frac{\chi_n^\top V_\alpha\chi_n}{1+(t-1)\chi_n^\top V_\alpha\chi_n}\bigg)^2$ and $||\chi_n||^2_{\hat{V}_\alpha^t}=\frac{\chi_n^\top V_\alpha\chi_n}{1+(t-1)\chi_n^\top V_\alpha\chi_n}$ into this expression, we obtain
\begin{align}    
    &\sqrt{(t-1)^2(\chi_n^\top V_\alpha\chi_n)^2+1}\exp\bigg(-\frac{t-1}{2}\bigg(\frac{\Delta_{n,j}}{3}\bigg(1+\frac{1}{(t-1)\chi_n^\top V_\alpha \chi_n}\bigg)+\frac{v}{t-1}\bigg)^2\bigg)\nonumber\\
    &+\int^\infty_{\frac{\Delta_{n,j}/3+||\chi_n||^2_{\hat{V}_\alpha^t}v}{||\chi_n||^3_{\hat{V}_\alpha^t}}\sqrt{\frac{\chi_n^\top V_\alpha\chi_n}{t-1}}}\exp\bigg(-\frac{y^2}{2}+\frac{(\Delta_{n,j}/3+||\chi_n||^2_{\hat{V}_\alpha^t}v)^2}{2(||\chi_n||^2_{\hat{V}_\alpha^t}-||\chi_n||^2_{\hat{\Sigma}_\alpha^t})}\bigg)dy\bigg(1-(1+(t-1)\chi_n^\top V_\alpha\chi_n)\bigg(\frac{\Delta_{n,j}/3}{||\chi_n||_{\hat{V}_\alpha^t}}+v||\chi_n||_{\hat{V}_\alpha^t}\bigg)\bigg)\sqrt{1+(t-1)\chi_n^\top V_\alpha\chi_n}.\label{eq6}
\end{align}
First, the expression above must be finite. Then, as \( t \) increases, the first term in the expression (\ref{eq6}) converges exponentially. For the second term, we can show that there exists \( t_1 \) such that when $t>t_1$, it will become negative. This implies that summing the expression above from \( t = 1 \) to \( t=T\) yields a constant value, summarized as \(C_{n,j}^{1,1}\).

Next we further derive the second term in the last line of Eq. (\ref{eq4}). For the term \(\frac{\Delta_{n,j}}{3} + v||\chi_n||^2_{\hat{V}_\alpha^t}\) where \(v=\frac{\chi_n^\top(\mu_\alpha-\theta_\alpha)}{\chi_n^\top V_\alpha\chi_n}\), \(v\) can be either positive or negative depending on the realization of the prior. However, there must exist a time \(t_2\) such that \(\frac{\Delta_{n,j}}{3} + v||\chi_n||^2_{\hat{V}_\alpha^t} \geq 0\) for all \(t > t_2\). When \(t > t_2\), the second term simplifies to:
\begin{align}
&2\int_{-\infty}^0 \frac{||\chi_n||_{\hat{V}_\alpha^t}}{\sqrt{2\pi}||\chi_n||_{\hat{\Sigma}^t_\alpha}}\exp\bigg(-\frac{(||\chi_n||_{\hat{V}_\alpha^t}y+\Delta_{n,j}/3+v||\chi_n||^2_{\hat{V}_\alpha^t})^2}{2||\chi_n||^2_{\hat{\Sigma}_\alpha^t}}\bigg)\int^y_{-\infty}\frac{\exp(-t^2/2)}{\sqrt{2\pi}}dtdy\nonumber\\
\leq&2\int_{-\infty}^\infty \frac{||\chi_n||_{\hat{V}_\alpha^t}}{\sqrt{2\pi}||\chi_n||_{\hat{\Sigma}^t_\alpha}}\exp\bigg(-\frac{(||\chi_n||_{\hat{V}_\alpha^t}y+\Delta_{n,j}/3+v||\chi_n||^2_{\hat{V}_\alpha^t})^2}{2||\chi_n||^2_{\hat{\Sigma}_\alpha^t}}\bigg)\int^y_{-\infty}\frac{\exp(-t^2/2)}{\sqrt{2\pi}}dtdy\nonumber\\
=&2\int_{-\infty}^0 \frac{||\chi_n||_{\hat{V}_\alpha^t}}{\sqrt{2\pi}\sqrt{||\chi_n||_{\hat{V}_\alpha^t}+||\chi_n||_{\hat{\Sigma}_\alpha^t}}}\exp\bigg(-\frac{(z||\chi_n||_{\hat{V}_\alpha^t} - {\Delta_{n,j}}/{3}-v||\chi_n||^2_{\hat{V}_\alpha^t})^2}{2||\chi_n||_{\hat{V}_\alpha^t}+2||\chi_n||_{\hat{\Sigma}_\alpha^t}}\bigg)dz\nonumber\\
\leq &\exp\bigg(-\frac{( {\Delta_{n,j}}/{3}+v||\chi_n||^2_{\hat{V}_\alpha^t})^2}{2||\chi_n||_{\hat{V}_\alpha^t}+2||\chi_n||_{\hat{\Sigma}_\alpha^t}}\bigg).\label{eq5}
\end{align}
Therefore, summing the second term in Eq. (\ref{eq4}) from \( t = 1 \) to \( t=T\) also yields a constant value, summarized as \(C_{n,j}^{1,2}\). Further denote $C_{n,j}^{1}=C_{n,j}^{1,1}+C_{n,j}^{1,2}$ for the summation from $t=1$ to $T$ of the upper bounds in (\ref{eq6}) and (\ref{eq5}), we obtain the result in Lemma 5.2 and complete the proof.

\section{Proof of Lemma \ref{lemmaut2}}\label{APlemmaut2}

To prove Lemma \ref{lemmaut2}, decompose the left-hand side of the inequality in Lemma \ref{lemmaut2} into two parts with \(M_{n,j}(t) \leq z\) and \(M_{n,j}(t) > z\) as follows:
\begin{align*}
    &\sum_{t=1}^T\mathbb{E}[\mathbf{1}(\bar{a}_t=\alpha_n,\tilde{\theta}^t_{n,j}-\chi_n^T\hat{\theta}_j^t>\delta, M_{n,j}(t)\leq z)+\mathbf{1}(\bar{a}_t=\alpha_n,\tilde{\theta}^t_{n,j}-\chi_n^T\hat{\theta}_j^t>\delta, M_{n,j}(t)> z)\mid \tilde{x}_t=\chi_n,M_n(t)=t-1]\\
    \leq &z+\sum_{t=1}^T\mathbb{E}[\mathbf{1}(\bar{a}_t=j,\tilde{\theta}^t_{n,j}-\chi_n^T\hat{\theta}_j^t>\delta, M_{n,j}(t)> z)\mid \tilde{x}_t=\chi_n,M_n(t)=t-1]\\
     \leq &z+\sum_{t=1}^T\mathbb{E}[\mathbf{1}(\tilde{\theta}^t_{n,j}-\chi_n^T\hat{\theta}_j^t>\delta)\mid M_{n,j}(t)> z, \tilde{x}_t=\chi_n,M_n(t)=t-1]\\
    \leq &z+\sum_{t=1}^T\mathbb{E}\bigg[\sqrt{\frac{2}{\pi}}\frac{\exp\bigg(-\frac{\delta^2}{2||\chi_n||^2_{\hat{V}_j^{t}}}\bigg)}{\delta/||\chi_n||_{\hat{V}_j^{t}}+\sqrt{\delta^2/||\chi_n||^2_{\hat{V}_j^{t}}+\frac{8}{\pi}}}\mid M_{n,j}(t)> z,\tilde{x}_t=\chi_n \bigg]\\
    \leq &z+\frac{T}{2}\exp\bigg(-\frac{\delta^2}{2}\bigg(\frac{1}{\chi_n^TV_j\chi_n}+z\bigg)\bigg),
\end{align*}
where the third inequality follows from the concentration inequality in Lemma \ref{fact1}, and the last inequality is obtained by using \(||\chi_n||^2_{\hat{V}_j^{t}} = \frac{\chi_n^\top V_j \chi_n}{1+(t-1)\chi_n^\top V_j \chi_n}\) from Lemma \ref{lemmaantimean}. Choosing \(z = \frac{2}{\delta^2} \ln \frac{T \delta^2}{4} - \frac{1}{\chi_n^\top V_j \chi_n}\) above, we obtain:
\begin{align*}
    &\sum_{t=1}^T\mathbb{E}\bigg[\mathbf{1}(\bar{a}_t=j,\tilde{\theta}^t_{n,j}-\chi_n^T\hat{\theta}_j^t>\delta)\mid \tilde{x}_t=\chi_n,M_n(t)=t-1\bigg]
    \leq \frac{2}{\delta^2}\ln \frac{T\delta^2}{4}-\frac{1}{\chi_n^TV_j\chi_n}+\frac{2}{\delta^2}.
\end{align*}
This completes the proof.

We also present the corollary of Lemma \ref{lemmaut2} to be used in the proof of Lemma \ref{opt_less}:

\begin{corollary}\label{lemmaut22}
In Thompson sampling, the expected number of pulls of suboptimal arm $j\neq \alpha$ under any context $\chi_n\in\mathcal{X}$, together with the occurrence of event $|\tilde{\theta}^t_{n,j}-\chi_n^\top\hat{\theta}^t_j|>\delta$, can be upper bounded as follows:
\begin{align*}
    &\sum_{t=1}^T\mathbb{E}\bigg[\mathbf{1}(\bar{a}_t=j,|\tilde{\theta}^t_{n,j}-\chi_n^T\hat{\theta}_j^t|>\delta)\mid \tilde{x}_t=\chi_n,M_n(t)=t-1\bigg]
    \leq \frac{2}{\delta^2}\ln \frac{T\delta^2}{2}-\frac{1}{\chi_n^TV_j\chi_n}+\frac{2}{\delta^2}
\end{align*}
\end{corollary}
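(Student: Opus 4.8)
The plan is to reduce the two-sided event directly to the one-sided event already controlled in Lemma \ref{lemmaut2}, exploiting the Gaussian symmetry of the Thompson sample. Recall that $\tilde{\theta}^t_{n,j}$ is drawn from $\mathcal{N}(\chi_n^\top\hat{\theta}^t_j,\ \|\chi_n\|^2_{\hat{V}^t_j})$, a distribution symmetric about its mean $\chi_n^\top\hat{\theta}^t_j$. Hence at every $t$ and for every history,
\[
\mathbb{P}\bigl(|\tilde{\theta}^t_{n,j}-\chi_n^\top\hat{\theta}^t_j|>\delta\bigr) = 2\,\mathbb{P}\bigl(\tilde{\theta}^t_{n,j}-\chi_n^\top\hat{\theta}^t_j>\delta\bigr),
\]
so the absolute-value event costs exactly a factor of two relative to the one-sided tail already handled.

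First I would mirror the decomposition used in the proof of Lemma \ref{lemmaut2}: for a threshold $z$ to be fixed later, split each summand according to whether $M_{n,j}(t)\le z$ or $M_{n,j}(t)>z$, bound the former contribution by $z$ in total, and on the latter drop the indicator $\mathbf{1}(\bar{a}_t=j)$. This leaves $z$ plus a sum of conditional tail probabilities $\mathbb{P}(|\tilde{\theta}^t_{n,j}-\chi_n^\top\hat{\theta}^t_j|>\delta\mid M_{n,j}(t)>z)$. Next I would apply the anti-concentration inequality of Lemma \ref{fact1} to each one-sided tail. As in Lemma \ref{lemmaut2}, the elementary bound $x+\sqrt{x^2+8/\pi}\ge\sqrt{8/\pi}$ gives $\mathbb{P}(\tilde{\theta}^t_{n,j}-\chi_n^\top\hat{\theta}^t_j>\delta)\le\tfrac12\exp(-\delta^2/(2\|\chi_n\|^2_{\hat{V}^t_j}))$; doubling for the symmetric event cancels the factor $\tfrac12$ and yields the clean bound $\exp(-\delta^2/(2\|\chi_n\|^2_{\hat{V}^t_j}))$.

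Substituting $\|\chi_n\|^2_{\hat{V}^t_j}=\chi_n^\top V_j\chi_n/(1+M_{n,j}(t)\,\chi_n^\top V_j\chi_n)$ from Lemma \ref{lemmaantimean} and using $M_{n,j}(t)>z$, so that $1/\|\chi_n\|^2_{\hat{V}^t_j}>1/(\chi_n^\top V_j\chi_n)+z$, the remaining sum is at most $T\exp\!\bigl(-\tfrac{\delta^2}{2}\bigl(\tfrac{1}{\chi_n^\top V_j\chi_n}+z\bigr)\bigr)$. This is precisely the quantity obtained in Lemma \ref{lemmaut2} with $\tfrac{T}{2}$ replaced by $T$, which is the sole effect of the two-sided event.

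Finally I would optimize $z$. Choosing $z=\tfrac{2}{\delta^2}\ln\tfrac{T\delta^2}{2}-\tfrac{1}{\chi_n^\top V_j\chi_n}$ makes the exponential term equal to $\tfrac{2}{\delta^2}$, so the total equals $z+\tfrac{2}{\delta^2}=\tfrac{2}{\delta^2}\ln\tfrac{T\delta^2}{2}-\tfrac{1}{\chi_n^\top V_j\chi_n}+\tfrac{2}{\delta^2}$, which is exactly the claimed bound. The only substantive difference from Lemma \ref{lemmaut2} is that doubling the tail shifts the logarithm's argument from $T\delta^2/4$ to $T\delta^2/2$; since the rest of the argument is a verbatim repetition of the preceding proof, I expect no real obstacle beyond correctly tracking this constant through the optimization of $z$.
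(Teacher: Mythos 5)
Your proposal is correct and matches the paper's intent exactly: the paper states Corollary \ref{lemmaut22} without a separate proof, treating it as an immediate extension of Lemma \ref{lemmaut2}, and your argument fills in precisely that extension — Gaussian symmetry doubles the one-sided tail so the $\tfrac12$ from Lemma \ref{fact1} cancels, and re-optimizing $z$ shifts the logarithm's argument from $T\delta^2/4$ to $T\delta^2/2$. Nothing further is needed.
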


\section{Proof of Lemma \ref{lemmaut3}}\label{APlemmaut3}
To prove Lemma \ref{lemmaut3}, choose \(L = \max\left(\left\lceil \frac{\chi_n^\top (\mu_j - \theta_j)}{\delta \chi_n^\top V_j \chi_n} - \frac{1}{\chi_n^\top V_j \chi_n} \right\rceil, 0\right)\) such that for \(t-1 \geq L\), we can use (\ref{eq7}) in Lemma \ref{lemmaantimean} to upper bound the probability of \( \chi_n^\top \hat{\theta}_j^t - \chi_n^\top \theta_j > \delta \). We then divide the process into two parts with \(t \leq L+1\) and \(t > L+1\), and apply Lemma \ref{lemmaantimean} in the fourth inequality below :
\begin{align*}
    &\sum_{t=1}^T\mathbb{E}\bigg[\mathbf{1}(\bar{a}_t=j,\chi_n^T\hat{\theta}_j^t-\chi_n^\top\theta_{j}>\delta)\mid \tilde{x}_t=\chi_n,M_n(t)=t-1\bigg]\\
    \leq &\sum_{t=1}^T\mathbb{E}\bigg[\mathbf{1}(\chi_n^T\hat{\theta}_j^t-\chi_n^\top\theta_{j}>\delta)\mid \tilde{x}_t=\chi_n,M_{n,j}(t)=t-1\bigg]\\
    = &\sum_{t=1}^{L+1}\mathbb{E}\bigg[\mathbf{1}(\chi_n^T\hat{\theta}_j^t-\chi_n^\top\theta_{j}>\delta)\mid \tilde{x}_t=\chi_n,M_{n,j}(t)=t-1\bigg]+\sum_{t=L+2}^T\mathbb{E}\bigg[\mathbf{1}(\chi_n^T\hat{\theta}_j^t-\chi_n^\top\theta_{j}>\delta)\mid \tilde{x}_t=\chi_n,M_{n,j}(t)=t-1\bigg]\\
    \leq&L+1+\frac{1}{2}\sum_{t=L+2}^{T}\exp\bigg(-\bigg(\frac{\delta}{||\chi_n||_{\hat{V}_j^{t}}}-\frac{\chi_n^T(\mu_j-\theta_j)||\chi_n||_{\hat{V}_j^{t}}}{\chi_n^TV_j\chi_n}\bigg)^2/2\bigg)\\
     \leq&L+1+\frac{1}{2}\int^\infty_{L+1}\exp\bigg(-\bigg(\frac{\delta}{||\chi_n||_{\hat{V}_j^{t}}}-\frac{\chi_n^T(\mu_j-\theta_j)||\chi_n||_{\hat{V}_j^{t}}}{\chi_n^TV_j\chi_n}\bigg)^2/2\bigg)dt\\
    = &L+1+\frac{\exp\bigg(-\frac{\delta^2L}{2}-\frac{\delta^2}{2\chi_n^TV_j\chi_n}+\frac{\delta\chi_n^T(\mu_j-\theta_j)}{\chi_n^TV_j\chi_n}\bigg)}{\delta^2}\\
    \leq &\max\left(\left\lceil \frac{\chi_n^\top (\mu_j - \theta_j)}{\delta \chi_n^\top V_j \chi_n} - \frac{1}{\chi_n^\top V_j \chi_n} \right\rceil, 0\right)+1+\frac{1}{\delta^2}{\exp\bigg(\frac{\delta\chi_n^T(\mu_j-\theta_j)}{2\chi_n^TV_j\chi_n}\bigg)},
\end{align*}
where the last equality follows from $\hat{V}^t_j=\sqrt{\frac{\chi_n^\top V_j\chi_n}{1+(t-1)\chi_n^\top V_j\chi_n}}$. The following corollary of Lemma \ref{lemmaut3} will be used for the proof of Lemma \ref{opt_less}:
\begin{corollary}\label{lemmaut33}
In Thompson sampling, the expected number of pulls of suboptimal arm $j\neq \alpha$ under any context $\chi_n\in\mathcal{X}$, together with the occurrence of event $|\chi_n^\top\hat{\theta}^t_j-\chi_n^\top\hat{\theta}^t_j|>\delta$, can be upper bounded as follows:
    \begin{align*}
        &\sum_{t=1}^T\mathbb{E}\bigg[\mathbf{1}(\bar{a}_t=j,|\chi_n^T\hat{\theta}_j^t-\chi_n^\top\theta_{j}|>\delta)\mid \tilde{x}_t=\chi_n,M_n(t)=t-1\bigg]
        \leq\max\left(\left\lceil \frac{\chi_n^\top (\mu_j - \theta_j)}{\delta \chi_n^\top V_j \chi_n} - \frac{1}{\chi_n^\top V_j \chi_n} \right\rceil, 0\right)+1+\frac{2}{\delta^2}\exp\bigg(\frac{\delta\chi_n^T(\mu_j-\theta_j)}{2\chi_n^TV_j\chi_n}\bigg).
    \end{align*}
\end{corollary}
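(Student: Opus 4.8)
The plan is to reduce the two-sided deviation event to the one-sided event already controlled in Lemma \ref{lemmaut3}. Writing \(X_t = \chi_n^\top\hat{\theta}_j^t - \chi_n^\top\theta_j\), I would first apply the union bound \(\mathbf{1}(|X_t|>\delta) \le \mathbf{1}(X_t > \delta) + \mathbf{1}(X_t < -\delta)\), so that after conditioning on \(\bar{a}_t = j\), \(\tilde{x}_t = \chi_n\), \(M_n(t)=t-1\) and summing over \(t\), the left-hand side splits into an upper-tail sum and a lower-tail sum. The upper-tail sum \(\sum_t\mathbb{E}[\mathbf{1}(\bar{a}_t=j, X_t>\delta)\mid\cdots]\) is exactly the quantity bounded in Lemma \ref{lemmaut3}, contributing \(\max(\lceil \frac{\chi_n^\top(\mu_j-\theta_j)}{\delta\chi_n^\top V_j\chi_n} - \frac{1}{\chi_n^\top V_j\chi_n}\rceil,0) + 1 + \frac{1}{\delta^2}\exp(\frac{\delta\chi_n^\top(\mu_j-\theta_j)}{2\chi_n^\top V_j\chi_n})\).

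For the lower-tail sum I would replay the argument of Lemma \ref{lemmaut3} with the inequality reversed. By Lemma \ref{lemmaantimean}, when \(M_{n,j}(t)=t-1\) the estimate \(\chi_n^\top\hat{\theta}_j^t\) is Gaussian with mean \(\chi_n^\top\theta_j + \frac{\chi_n^\top(\mu_j-\theta_j)}{\chi_n^\top V_j\chi_n}\|\chi_n\|^2_{\hat{V}_j^t}\) and variance \(\|\chi_n\|^2_{\hat{\Sigma}_j^t}\), with the closed form \(\|\chi_n\|^2_{\hat{V}_j^t} = \frac{\chi_n^\top V_j\chi_n}{1+(t-1)\chi_n^\top V_j\chi_n}\). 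Splitting the horizon into an initial burn-in phase \(t-1 < L\) (counted trivially) and a concentration phase \(t-1 \ge L\), and applying the Gaussian tail bound of Lemma \ref{fact1} to \(\mathbb{P}(X_t < -\delta)\), I would bound the concentration-phase contribution by the same Gaussian integral of \(\exp(-\frac{1}{2}(\frac{\delta}{\|\chi_n\|_{\hat{V}_j^t}} \pm \frac{\chi_n^\top(\mu_j-\theta_j)\|\chi_n\|_{\hat{V}_j^t}}{\chi_n^\top V_j\chi_n})^2)\) that appears in Lemma \ref{lemmaut3}. After substituting the closed form and invoking the defining inequality for \(L\) to simplify the exponent exactly as in Lemma \ref{lemmaut3}, the lower tail contributes a second copy of \(\frac{1}{\delta^2}\exp(\frac{\delta\chi_n^\top(\mu_j-\theta_j)}{2\chi_n^\top V_j\chi_n})\). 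Adding the two tails then yields a single \(\max(\cdot,0)+1\) burn-in term together with \(\frac{2}{\delta^2}\exp(\cdots)\), which is exactly the claimed bound. This mirrors how Corollary \ref{lemmaut22} acquires its factor of two, where splitting the symmetric fluctuation turns \(\ln\frac{T\delta^2}{4}\) into \(\ln\frac{T\delta^2}{2}\).

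The main obstacle is that, unlike the Thompson-sampling fluctuation \(\tilde{\theta}_{n,j}^t - \chi_n^\top\hat{\theta}_j^t\) in Corollary \ref{lemmaut22}, which is a centered Gaussian with two identical tails, the posterior mean \(\chi_n^\top\hat{\theta}_j^t\) is biased away from \(\chi_n^\top\theta_j\) by the prior term \(\frac{\chi_n^\top(\mu_j-\theta_j)}{\chi_n^\top V_j\chi_n}\|\chi_n\|^2_{\hat{V}_j^t}\). The two tails are therefore asymmetric, and the sign of \(\chi_n^\top(\mu_j-\theta_j)\) decides which tail carries the heavier initial phase. The delicate step is to track this sign so that the burn-in count is not duplicated: one must show that the tail in the favorable bias direction is already dominated by its concentration bound from the first step onward, so that only a single copy of the \(\max(\cdot,0)+1\) term survives while the exponential term is the one that doubles.
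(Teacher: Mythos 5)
Your route is the intended one: the paper states Corollary \ref{lemmaut33} without proof, as a by-product of the argument for Lemma \ref{lemmaut3}, and that by-product is exactly your plan --- split $\mathbf{1}(|\chi_n^\top\hat{\theta}_j^t-\chi_n^\top\theta_j|>\delta)$ into the two one-sided events, quote Lemma \ref{lemmaut3} for the tail aligned with the prior bias, and replay its burn-in-plus-Gaussian-integral computation (via Lemmas \ref{lemmaantimean} and \ref{fact1}) for the other tail. When $\chi_n^\top(\mu_j-\theta_j)\ge 0$ your accounting is also right: the lower tail needs no burn-in, its sum is at most $\frac{1}{\delta^2}\exp\big(-\frac{\delta^2}{2\chi_n^\top V_j\chi_n}\big)\le\frac{1}{\delta^2}\le\frac{1}{\delta^2}\exp\big(\frac{\delta\chi_n^\top(\mu_j-\theta_j)}{2\chi_n^\top V_j\chi_n}\big)$, so a single copy of the $\max(\cdot,0)+1$ term survives and only the exponential doubles, matching the stated bound up to the additive constants the paper routinely absorbs.

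The gap is that the ``delicate step'' you defer cannot be closed in the signed form you (and the corollary) assert. If $\chi_n^\top(\mu_j-\theta_j)<0$, the tails swap roles: replaying Lemma \ref{lemmaut3} on the lower tail produces a burn-in of length $\max\big(\lceil\frac{-\chi_n^\top(\mu_j-\theta_j)}{\delta\chi_n^\top V_j\chi_n}-\frac{1}{\chi_n^\top V_j\chi_n}\rceil,0\big)$ and an exponential contribution $\frac{1}{\delta^2}\exp\big(\frac{-\delta\chi_n^\top(\mu_j-\theta_j)}{2\chi_n^\top V_j\chi_n}\big)$, both driven by $|\chi_n^\top(\mu_j-\theta_j)|$; neither is dominated by the corresponding signed terms of the claimed bound, whose $\max(\cdot,0)$ is then $0$ and whose exponential is below $1$. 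Indeed, under the paper's own reduction to the conditioning $M_{n,j}(t)=t-1$, a strongly negative $\chi_n^\top(\mu_j-\theta_j)$ forces $|\chi_n^\top\hat{\theta}_j^t-\chi_n^\top\theta_j|>\delta$ with probability near $1$ for roughly the first $\frac{|\chi_n^\top(\mu_j-\theta_j)|}{\delta\chi_n^\top V_j\chi_n}$ rounds, which exceeds the claimed bound, so the signed statement is not provable in that regime. What your argument does prove is the bound with $\chi_n^\top(\mu_j-\theta_j)$ replaced by $|\chi_n^\top(\mu_j-\theta_j)|$ in both the burn-in and the exponential; that symmetric version is what a two-sided bound should say, and it suffices for the corollary's only use (the proof of Lemma \ref{opt_less}), where these quantities are constants in $T$. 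So either add the hypothesis $\chi_n^\top(\mu_j-\theta_j)\ge 0$ or carry absolute values through; as written, your final step silently assumes the favorable sign --- an imprecision inherited from the paper's statement rather than introduced by your strategy.
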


\section{Proof of Lemma \ref{conflictupper}}\label{APconflictupper}

Recall the definition of \(I(t)\):
\begin{align*}
    I(t): \chi_1^\top\Theta^t(\mathbf{p}^t(\chi_2)-\mathbf{p}^t(\chi_1))>0 \text{ or }\chi_2^\top\Theta^t(\mathbf{p}^t(\chi_1)-\mathbf{p}^t(\chi_2))>0.
\end{align*}
We decompose $\chi_n^T\Theta^t(\mathbf{p}^t(\chi_n)-\mathbf{p}^t(\chi_{3-n}))$ as follows:
    \begin{align*}
        &\chi_n^T\Theta^t(\mathbf{p}^t(\chi_n)-\mathbf{p}^t(\chi_{3-n}))\\
        =&\sum_{k\in[K]}\chi_n^T(\hat{\theta}_{k}^t-\hat{\theta}_{\underline{k_n}}^t)({p}^t_k(\chi_n)-p^t_k(\chi_{3-n}))\\
        =&\chi_n^T(\hat{\theta}_{\alpha_n}^t-\hat{\theta}_{\underline{k_n}}^t)({p}^t_{\alpha_n}(\chi_n)-p^t_{\alpha_n}(\chi_{3-n}))+\chi_n^T(\hat{\theta}_{\alpha_{3-n}}^t-\hat{\theta}_{\underline{k_n}}^t)({p}^t_{\alpha_{3-n}}(\chi_n)-p^t_{\alpha_{3-n}}(\chi_{3-n}))+\sum_{k\neq \alpha_1,\alpha_2}\chi_n^T(\hat{\theta}_{k}^t-\hat{\theta}_{\underline{k_n}}^t)({p}^t_k(\chi_n)-p^t_k(\chi_{3-n}))\\
        \geq&\chi_n^T(\hat{\theta}_{\alpha_n}^t-\hat{\theta}_{\underline{k_n}}^t)({p}^t_{\alpha_n}(\chi_n)-p^t_{\alpha_n}(\chi_{3-n}))+\chi_n^T(\hat{\theta}_{\alpha_{3-n}}^t-\hat{\theta}_{\underline{k_n}}^t)({p}^t_{\alpha_{3-n}}(\chi_n)-p^t_{\alpha_{3-n}}(\chi_{3-n}))+\sum_{k\neq \alpha_1,\alpha_2}\chi_n^T(\hat{\theta}_{k}^t-\hat{\theta}_{\underline{k_n}}^t)(-{p}^t_k(\chi_n)-p^t_k(\chi_{3-n}))\\
        \geq&\chi_n^T(\hat{\theta}_{\alpha_n}^t-\hat{\theta}_{\underline{k_n}}^t)({p}^t_{\alpha_n}(\chi_n)-p^t_{\alpha_n}(\chi_{3-n}))+\chi_n^T(\hat{\theta}_{\alpha_{3-n}}^t-\hat{\theta}_{\underline{k_n}}^t)({p}^t_{\alpha_{3-n}}(\chi_n)-p^t_{\alpha_{3-n}}(\chi_{3-n}))+\max_{k\neq \alpha_1,\alpha_2}\chi_n^T(\hat{\theta}_{k}^t-\hat{\theta}_{\underline{k_n}}^t)\sum_{k\neq \alpha_1,\alpha_2}(-{p}^t_k(\chi_n)-p^t_k(\chi_{3-n}))
        \\
        =&\chi_n^T(\hat{\theta}_{\alpha_n}^t-\hat{\theta}_{\underline{k_n}}^t)({p}^t_{\alpha_n}(\chi_n)-p^t_{\alpha_n}(\chi_{3-n}))+\chi_n^T(\hat{\theta}_{\alpha_{3-n}}^t-\hat{\theta}_{\underline{k_n}}^t)({p}^t_{\alpha_{3-n}}(\chi_n)-p^t_{\alpha_{3-n}}(\chi_{3-n}))-\max_{k\neq \alpha_1,\alpha_2}\chi_n^T(\hat{\theta}_{k}^t-\hat{\theta}_{\underline{k_n}}^t)(2-p^t_{\alpha_n}(\chi_n)-p^t_{\alpha_{3-n}}(\chi_{3-n})).
\end{align*}
In the last line, applying \(p^t_{\alpha_n}(\chi_{3-n}) < 1 - p^t_{\alpha_{3-n}}(\chi_{3-n})\) and \({p}^t_{\alpha_{3-n}}(\chi_n) - p^t_{\alpha_{3-n}}(\chi_{3-n}) > -1\) to the first two terms, we obtain the following lower bound for the last line:
\begin{align*}        
        \chi_n^T(\hat{\theta}_{\alpha_n}^t-\hat{\theta}_{\underline{k_n}}^t)({p}^t_{\alpha_n}(\chi_n)+p^t_{\alpha_{3-n}}(\chi_{3-n})-1)-\chi_n^T(\hat{\theta}_{\alpha_{3-n}}^t-\hat{\theta}_{\underline{k_n}}^t)-\max_{k\neq \alpha_1,\alpha_2}\chi_n^T(\hat{\theta}_{k}^t-\hat{\theta}_{\underline{k_n}}^t)(2-{p}^t_{\alpha_n}(\chi_n)-p^t_{\alpha_{3-n}}(\chi_{3-n}))
    \end{align*}
Therefore, $0>\chi_n^T\Theta^t(\mathbf{p}^t(\chi_n)-\mathbf{p}^t(\chi_{3-n}))$ implies that the above expression must also be less than 0, which is:
    \begin{align*}
        &{p}^t_{\alpha_n}(\chi_n)+p^t_{\alpha_{3-n}}(\chi_{3-n})
        \leq2-\frac{\chi_1^\top\hat{\theta}^t_{\alpha_n}-\chi_1^\top\hat{\theta}^t_{\alpha_{3-n}}}{\chi_1^\top\hat{\theta}_{\alpha_n}^t-\chi_1^\top\hat{\theta}^t_{{\underline{k_n}}}+\max_{k\neq \alpha_1,\alpha_2}\chi_n^T(\hat{\theta}_{k}^t-\hat{\theta}_{\underline{k_n}}^t)}
    \end{align*}
Combining this condition for both $\chi_1$ and $\chi_2$ yields the result.

\section{Proof of Lemma \ref{opt_less}}\label{APopt_less}
We first rewrite the complete version of Lemma \ref{opt_less} as follows:

\begin{lemma}\label{opt_less-}
Let $\epsilon^t$ denote the following expression:
 \begin{align*}
     \min_n\frac{\chi_n^\top\hat{\theta}^t_{\alpha_n}-\chi_n^\top\hat{\theta}^t_{\alpha_{3-n}}}{2(\chi_n^\top\hat{\theta}_{\alpha_n}^t-\chi_n^\top\hat{\theta}^t_{{\underline{k_n}}}+\max_{k\neq \alpha_1,\alpha_2}\chi_n^T(\hat{\theta}_{k_n}^t-\hat{\theta}_{\underline{k_n}}^t))}.
 \end{align*}
Then the expected number of pulls together with the occurrence of event ${p^t_{\alpha_{n}}(\chi_{n})<1-\epsilon^t}$ is upper bounded by:
        \begin{align*}
&\sum_{t=1}^T\mathbb{E}[\mathbf{1}(x_t=\chi_{n},\tilde{x}_t=\chi_{n},\bar{a}_t =\alpha_{n},{p^t_{\alpha_{n}}(\chi_{n})<1-\epsilon^t})]\\
\leq&\frac{2048}{\Delta_{n}^2}\bigg(\ln \frac{T\Delta_{n}^2}{2048}+1+\exp\bigg(\frac{\Delta_n(\mu_\alpha-\theta_\alpha)}{64\chi_n^\top V_\alpha\chi_n}\bigg)\bigg)-\frac{1}{\chi_n^TV_{\alpha_n}\chi_n}+\max\bigg(\frac{32\chi_n^\top(\mu_\alpha-\theta_\alpha)}{\Delta_n\chi_n^\top V_\alpha\chi_n}-\frac{1}{\chi_n^\top V_\alpha\chi_n},0\bigg)+D_{n,i}^1+D'+1
    \end{align*}
    where $D^1_{n,i}$ is a constant given in Eq. (\ref{eq13}), and $D'$ is a constant.
\end{lemma}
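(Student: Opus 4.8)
The plan is to reduce the low-confidence event for the optimal arm $\alpha_n$ to the single-arm deviation events already controlled by Corollaries \ref{lemmaut22} and \ref{lemmaut33}, applied with the fixed choice $\delta=\Delta_n/32$. Reading off the target bound, the leading term $\tfrac{2048}{\Delta_n^2}\ln\tfrac{T\Delta_n^2}{2048}$ together with $-\tfrac{1}{\chi_n^\top V_{\alpha_n}\chi_n}+\tfrac{2048}{\Delta_n^2}$ is exactly Corollary \ref{lemmaut22} at $\delta=\Delta_n/32$ for arm $\alpha_n$, while the $\max(\cdot,0)$ term and the $\tfrac{2048}{\Delta_n^2}\exp(\cdot)$ term are exactly Corollary \ref{lemmaut33} at the same $\delta$. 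Hence the whole argument is organized so that these two corollaries produce the main terms and the reduction steps only contribute the residual constants $D^1_{n,i}$ and $D'$.

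First I would lower bound the data-dependent margin $\epsilon^t$ of (\ref{trutherror}) by a constant multiple of the true minimum gap $\Delta_n$. Using the mean-concentration inequality of Lemma \ref{lemmaantimean} for the arms $\alpha_n$, $\alpha_{3-n}$ and the empirically worst arm $\underline{k_n}$, on the event that none of these posterior means deviates from its true value by more than $\delta=\Delta_n/32$, the numerator of (\ref{trutherror}) is at least $\Delta_{n,\alpha_{3-n}}-2\delta$ and the denominator is bounded by a fixed constant, so $\epsilon^t\ge\bar\epsilon$ for an explicit $\bar\epsilon=\Theta(\Delta_n)$. The complementary rounds, on which some posterior mean is off by more than $\delta$, are summed by Corollary \ref{lemmaut33} with $\delta=\Delta_n/32$; this is where the $\max(\cdot,0)$ term, the $\tfrac{2048}{\Delta_n^2}\exp(\cdot)$ term, and the constant $D^1_{n,i}$ originate. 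On the good rounds this replaces $\{p^t_{\alpha_n}(\chi_n)<1-\epsilon^t\}$ by the cleaner event $\{p^t_{\alpha_n}(\chi_n)<1-\bar\epsilon\}$.

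Next I would convert the posterior-confidence event $\{p^t_{\alpha_n}(\chi_n)<1-\bar\epsilon\}$ into a per-round single-sample deviation event. Since $p^t_{\alpha_n}(\chi_n)=\mathbb{P}(\bar a_t=\alpha_n\mid\mathcal{F}_t,\tilde x_t=\chi_n)$ is determined by the posterior alone, a value below $1-\bar\epsilon$ forces, through the Gaussian anti-concentration bound of Lemma \ref{fact1} and the concentrated mean gap $\ge\Delta_n-2\delta$ established above, that the posterior standard deviation $\|\chi_n\|_{\hat V^t_{\alpha_n}}$ of the optimal arm is bounded below by a constant multiple of $\Delta_n$; equivalently, whenever $\alpha_n$ is nonetheless selected at such a round, its drawn value $\tilde\theta^t_{n,\alpha_n}$ separates from its posterior mean $\chi_n^\top\hat\theta^t_{\alpha_n}$ by more than $\delta=\Delta_n/32$. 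The competitor $\alpha_{3-n}$, being optimal for $\chi_{3-n}$, is pulled often and has negligible variance, so it does not obstruct this step, and the residual contribution of the remaining arms and of the anti-concentration slack is absorbed into $D'$. Applying Corollary \ref{lemmaut22} to arm $\alpha_n$ with $\delta=\Delta_n/32$ then bounds the resulting sum by $\tfrac{2048}{\Delta_n^2}\ln\tfrac{T\Delta_n^2}{2048}-\tfrac{1}{\chi_n^\top V_{\alpha_n}\chi_n}+\tfrac{2048}{\Delta_n^2}$, and collecting every term completes the proof.

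The hard part will be the second step: the condition $p^t_{\alpha_n}(\chi_n)<1-\epsilon^t$ constrains a probability that integrates over the samples of all $K$ arms, whereas Corollaries \ref{lemmaut22} and \ref{lemmaut33} control only one arm at a time. Making the reduction rigorous requires pinning down which arm carries the missing probability mass, using Lemma \ref{fact1} in both its concentration and anti-concentration directions, and verifying that the bookkeeping closes with exactly the constant $\delta=\Delta_n/32$ that reproduces the factors $2048$, $64$, and $32$ in the stated bound. The delicate point is the treatment of the heavily-pulled competitor $\alpha_{3-n}$, whose role is encoded in the definition of $\epsilon^t$ through Lemma \ref{conflictupper}.
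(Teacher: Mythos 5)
Your high-level skeleton (concentration events at scale $\delta=\Delta_n/32$, a constant lower bound $\bar\epsilon$ on $\epsilon^t$ under concentration, and the two corollaries supplying the stated terms) is the same as the paper's, but both of your reduction steps contain genuine gaps, and the second one is fatal as stated. You claim that on concentrated rounds the event $p^t_{\alpha_n}(\chi_n)<1-\bar\epsilon$ forces $\|\chi_n\|_{\hat V^t_{\alpha_n}}$ to be large, and ``equivalently'' that whenever $\alpha_n$ is nonetheless selected its draw $\tilde\theta^t_{n,\alpha_n}$ deviates from $\chi_n^\top\hat\theta^t_{\alpha_n}$ by more than $\Delta_n/32$. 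Neither implication holds. The missing probability mass can sit entirely on some other arm $i\neq\alpha_n$ with large posterior variance (e.g.\ a rarely pulled arm), in which case $\alpha_n$'s own variance is tiny and, conditional on $\bar a_t=\alpha_n$, its sample is typically \emph{within} $\Delta_n/32$ of its mean; and even when $\alpha_n$'s variance is large, $p^t_{\alpha_n}(\chi_n)<1-\bar\epsilon$ is a property of the posterior (of $\mathcal{F}_t$), so conditioning on the selection of $\alpha_n$ cannot force the realized draw to deviate. The paper never makes this reduction: in the decomposition (\ref{eq14}) the conflict event is simply \emph{dropped} in the branches where $\alpha_n$'s own mean or sample deviates — those branches, bounded by Corollaries \ref{lemmaut33} and \ref{lemmaut22} with $\delta=\Delta_n/32$, are what produce the $\ln T$ and $\max(\cdot,0)$ and $\exp(\cdot)$ terms — and the conflict event is used only in the final branch where \emph{all} means concentrate. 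There, pigeonholing $\sum_{i\neq\alpha_n}p^t_i(\chi_n)\geq\epsilon^t\geq\bar\epsilon$ identifies a suboptimal arm carrying mass at least $\bar\epsilon/(2(K-1))$, and the ratio argument in (\ref{eq556}) converts pulls of $\alpha_n$ into pulls of that arm, whose total is a constant $D'$ because its probability of sampling above threshold decays geometrically in its own pull count. A corrected version of your step 2 yields only constants, not the $\ln T$ term, so your accounting of where Corollary \ref{lemmaut22} enters cannot be made to work.

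The gap in your step 1 is of the same nature: to lower bound $\epsilon^t$ in (\ref{trutherror}) you need concentration of \emph{all} $K$ arms (the denominator involves the empirically worst arm $\underline{k_n}$ and a maximum over $k\neq\alpha_1,\alpha_2$, not just the three arms you list), and you propose to charge the complementary rounds — where some arm $i\neq\alpha_n$ has a deviating posterior mean while $\alpha_n$ is pulled — directly to Corollary \ref{lemmaut33}. That corollary only counts pulls of the deviating arm together with its own deviation; at your rounds arm $i$ is not pulled, its posterior never updates, and the deviation can persist for arbitrarily many rounds, so the corollary does not apply. This is precisely why the paper needs the likelihood-ratio Lemma \ref{lemmaut111} and the computation (\ref{eq11})--(\ref{eq13}), which exchange the probability of pulling $\alpha_n$ under $\overline{\hat E^{\mu}_{n,i}(t)}$ for the probability of pulling $i$ under the same event before a pull-counting bound can be invoked; the constant $D^1_{n,i}$ comes from that argument, not from Corollary \ref{lemmaut33}.
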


To start the proof of Lemma \ref{opt_less-}, we define the following four events:
        \begin{align}
        &\hat{E}^{\mu}_{n,\alpha_n}(t):|\chi_n^T\hat{\theta}_{\alpha_n}^t-\chi_n^T{\theta}_{\alpha_n}|<\frac{\Delta_n}{32},\hat{E}^{\theta}_{n,\alpha_n}(t):|\tilde{\theta}_{n,\alpha_n}^t-\chi_n^T\hat{\theta}_{\alpha_n}^t|<\frac{\Delta_n}{32}\nonumber\\
        &\hat{E}^{\mu}_{n,i}(t):|\chi_n^T\hat{\theta}_{i}^t-\chi_n^T{\theta}_{i}|<\frac{9\Delta_{n,i}}{16},\hat{E}^{\theta}_{n,i}(t):\tilde{\theta}_{n,i}^t-\chi_n^T{\theta}_{i}<\frac{15\Delta_{n,i}}{16},\label{events}
    \end{align}
    where $\Delta_n=\min_{i\in[K]}\Delta_{n,i}$. Denote the event of \(\cup_{i}\hat{E}^\mu_{n,i}(t)\cup\hat{E}^\theta_{n,i}(t)\) as \(Y_n(t)\). We then decompose $\mathbb{E}[\mathbf{1}(x_t=\chi_{n},\tilde{x}_t=\chi_{n},\bar{a}_t =\alpha_{n},{p^t_{\alpha_{n}}(\chi_{n})<1-\epsilon^t})]$ as follows:
        \begin{align}
&\mathbb{E}[\mathbf{1}(x_t=\chi_{n},\tilde{x}_t=\chi_{n},\bar{a}_t =\alpha_{n},{p^t_{\alpha_{n}}(\chi_{n})<1-\epsilon^t})]\nonumber\\
= &\mathbb{E}[\mathbf{1}(x_t=\chi_{n},\tilde{x}_t=\chi_{n},\bar{a}_t =\alpha_{n},{p^t_{\alpha_{n}}(\chi_{n})<1-\epsilon^t})\bigg(\mathbf{1}(\overline{\hat{E}^\mu_{n,\alpha_{n}}(t)})+\mathbf{1}({\hat{E}^\mu_{n,\alpha_{n}}(t)},\overline{\hat{E}^\theta_{n,\alpha_{n}}(t)})+\mathbf{1}({\hat{E}^\mu_{n,\alpha_{n}}(t)},{\hat{E}^\theta_{n,\alpha_{n}}(t)},\cup_{i\neq \alpha_n}\overline{\hat{E}^\mu_{n,i}(t)})\nonumber\\
&+\mathbf{1}({\hat{E}^\mu_{n,\alpha_{n}}(t)},{\hat{E}^\theta_{n,\alpha_{n}}(t)},\cap_{i\neq \alpha_n}{\hat{E}^\mu_{n,i}(t)})\bigg)]\nonumber\\
\leq &\mathbb{E}[\mathbf{1}(x_t=\chi_{n},\tilde{x}_t=\chi_{n},\bar{a}_t =\alpha_{n},\overline{\hat{E}^\mu_{n,\alpha_{n}}(t)})]+\mathbb{E}[\mathbf{1}(x_t=\chi_{n},\tilde{x}_t=\chi_{n},\bar{a}_t =\alpha_{n},{\hat{E}^\mu_{n,\alpha_{n}}(t)},\overline{\hat{E}^\theta_{n,\alpha_{n}}(t)})]\nonumber\\
&+\sum_{i\in[K]}\mathbb{E}[\mathbf{1}(x_t=\chi_{n},\tilde{x}_t=\chi_{n},\bar{a}_t =\alpha_{n},\hat{E}^\mu_{n,\alpha_{n}}(t),\hat{E}^\theta_{n,\alpha_{n}}(t),\overline{\hat{E}^\mu_{n,i}(t)})]\nonumber\\
&+\mathbb{E}[\mathbf{1}(x_t=\chi_{n},\tilde{x}_t=\chi_{n},\bar{a}_t =\alpha_{n},\hat{E}^\mu_{n,\alpha_{n}}(t),\cap_{i\neq \alpha_n}{\hat{E}^\mu_{n,i}(t)},{p^t_{\alpha_{n}}(\chi_{n})<1-\epsilon^t})]\label{eq14}
    \end{align}

Using \(\delta = \Delta_{n}/32\) in Corollaries \ref{lemmaut22} and \ref{lemmaut33},  the summation from $t=1$ to $T$ of the first two terms in the last expression in (\ref{eq14}) is upper bounded by:
\begin{align}
\frac{2048}{\Delta_{n}^2}\bigg(\ln \frac{T\Delta_{n}^2}{2048}+1+\exp\bigg(\frac{\Delta_n(\mu_\alpha-\theta_\alpha)}{64\chi_n^\top V_\alpha\chi_n}\bigg)\bigg)-\frac{1}{\chi_n^TV_{\alpha_n}\chi_n}+\max\bigg(\frac{32\chi_n^\top(\mu_\alpha-\theta_\alpha)}{\Delta_n\chi_n^\top V_\alpha\chi_n}-\frac{1}{\chi_n^\top V_\alpha\chi_n},0\bigg)+1.\label{eq19}
\end{align}
Then we upper bound the remaining terms in Eq. (\ref{eq14}). We introduce the following lemma to aid in proving the third term, which converts the terms with \(\bar{a}_t = \alpha_n\) to terms with \(\bar{a}_t = i\).
\begin{lemma}
\label{lemmaut111}
When the events \( \hat{E}^{\mu}_{n,\alpha_n}(t) \) and \( \hat{E}^{\theta}_{n,\alpha_n}(t) \) happen, the probabilies of selecting the optimal arm \( \alpha_n \) and the suboptimal arm \( i \) for context \( \chi_n \) in Thompson sampling satisfy the following inequality:
\begin{align*}
    &\mathbb{P}(\tilde{x}_t=\chi_n,\bar{a}_t=\alpha_n,\hat{E}^{\mu}_{n,\alpha_n}(t), \hat{E}^{\theta}_{n,\alpha_n}(t)\mid\mathcal{F}_t)
    \leq \frac{\mathbb{P}(\tilde{\theta}_{n,i}^t < \chi_n^\top \theta_{\alpha_n} + \frac{\Delta_n}{16} \mid \mathcal{F}_t)}{\mathbb{P}(\tilde{\theta}_{n,i}^t \geq \chi_n^\top \theta_{\alpha_n} + \frac{\Delta_n}{16} \mid \mathcal{F}_t)} \mathbb{P}(\tilde{x}_t=\chi_n,\bar{a}_t=i,\hat{E}^{\mu}_{n,\alpha_n}(t), \hat{E}^{\theta}_{n,\alpha_n}(t)\mid\mathcal{F}_t).
\end{align*}
\end{lemma}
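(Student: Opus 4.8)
The plan is to mirror the argument of Lemma \ref{lemmaut11}, but with the roles of the optimal and suboptimal arms interchanged: there the probability of a suboptimal pull was bounded by that of the optimal pull, whereas here I want to bound the probability of the optimal pull \(\bar a_t=\alpha_n\) by the probability of a suboptimal pull \(\bar a_t=i\). First I would set the threshold \(c := \chi_n^\top\theta_{\alpha_n}+\tfrac{\Delta_n}{16}\) and observe that on the event \(G := \hat{E}^{\mu}_{n,\alpha_n}(t)\cap\hat{E}^{\theta}_{n,\alpha_n}(t)\) the optimal sample is pinned below \(c\): by the triangle inequality \(|\tilde\theta^t_{n,\alpha_n}-\chi_n^\top\theta_{\alpha_n}| < \tfrac{\Delta_n}{32}+\tfrac{\Delta_n}{32}=\tfrac{\Delta_n}{16}\), so \(\tilde\theta^t_{n,\alpha_n}<c\) whenever \(G\) holds.

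Next I would derive two one-sided bounds. For the upper bound, note that if \(\bar a_t=\alpha_n\) then every sampled value is dominated by \(\tilde\theta^t_{n,\alpha_n}<c\); in particular \(\tilde\theta^t_{n,i}<c\). Hence \(\mathbb{P}(\tilde x_t=\chi_n,\bar a_t=\alpha_n,G\mid\mathcal{F}_t)\) is at most the probability that \(G\) holds, \(\tilde\theta^t_{n,i}<c\), and \(\tilde\theta^t_{n,l}<c\) for the remaining arms \(l\neq i,\alpha_n\). For the lower bound on selecting \(i\), observe that if instead \(\tilde\theta^t_{n,i}\ge c\) while \(G\) holds (so \(\tilde\theta^t_{n,\alpha_n}<c\)) and \(\tilde\theta^t_{n,l}<c\) for all \(l\neq i,\alpha_n\), then arm \(i\) attains the maximum sample and is selected, giving a lower bound on \(\mathbb{P}(\tilde x_t=\chi_n,\bar a_t=i,G\mid\mathcal{F}_t)\) of the same form.

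The crucial structural fact I would invoke is that, conditioned on \(\mathcal{F}_t\) and \(\tilde x_t=\chi_n\), the Thompson samples \(\{\tilde\theta^t_{n,k}\}_{k\in[K]}\) are mutually independent, and both \(G\) and \(\{\tilde\theta^t_{n,l}<c\}_{l\neq i,\alpha_n}\) are measurable with respect to the samples of arms other than \(i\). This lets me factor the common quantity \(B := \mathbb{P}(\tilde x_t=\chi_n,\,G,\,\tilde\theta^t_{n,l}<c\ \forall l\neq i,\alpha_n\mid\mathcal{F}_t)\) out of both bounds, yielding \(\mathbb{P}(\bar a_t=\alpha_n,G\mid\mathcal{F}_t,\tilde x_t=\chi_n)\le \mathbb{P}(\tilde\theta^t_{n,i}<c\mid\mathcal{F}_t)\,B\) and \(\mathbb{P}(\bar a_t=i,G\mid\mathcal{F}_t,\tilde x_t=\chi_n)\ge \mathbb{P}(\tilde\theta^t_{n,i}\ge c\mid\mathcal{F}_t)\,B\). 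Dividing the first by the second and canceling \(B\) then produces the claimed ratio \(\tfrac{\mathbb{P}(\tilde\theta^t_{n,i}<c\mid\mathcal{F}_t)}{\mathbb{P}(\tilde\theta^t_{n,i}\ge c\mid\mathcal{F}_t)}\).

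I expect the main obstacle to be bookkeeping the conditional independence cleanly—in particular verifying that \(G\) depends only on the \(\alpha_n\)-sample so that it can be absorbed into \(B\), and that ties in the argmax (equality \(\tilde\theta^t_{n,i}=c\), or equality among samples) occur with probability zero under the Gaussian posterior, so that the strict versus non-strict inequalities separating the ``select \(\alpha_n\)'' and ``select \(i\)'' regions are mutually consistent. These points are routine for continuous posteriors but should be stated explicitly.
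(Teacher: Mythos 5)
Your proposal is correct and follows essentially the same route as the paper's own proof: pinning \(\tilde{\theta}^t_{n,\alpha_n}\) below the threshold \(\chi_n^\top\theta_{\alpha_n}+\frac{\Delta_n}{16}\) via the two events, deriving the upper bound on the \(\alpha_n\)-selection probability and the lower bound on the \(i\)-selection probability from the argmax rule, and cancelling the common factor by conditional independence of the per-arm samples given \(\mathcal{F}_t\). Your added remarks on tie events having probability zero and on stating the joint versus conditional form of \(\tilde{x}_t=\chi_n\) consistently (your displayed bounds condition on \(\tilde{x}_t=\chi_n\) while \(B\) includes it as an event) are minor bookkeeping refinements of the same argument, not a different approach.
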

\begin{proof}
When events \( \hat{E}^{\mu}_{n,\alpha_n}(t) \) and \( \hat{E}^{\theta}_{n,\alpha_n}(t) \) happen, the Thompson sampling value of arm \( \alpha_n \) falls within the interval \( (\chi_n^\top \theta_{\alpha_n} - \frac{\Delta_n}{16}, \chi_n^\top \theta_{\alpha_n} + \frac{\Delta_n}{16})\). Therefore, selecting arm \( \alpha_n \) through Thompson sampling together with events \( \hat{E}^{\mu}_{n,\alpha_n}(t) \) and \( \hat{E}^{\theta}_{n,\alpha_n}(t) \) implies that no other arm $j$ has a Thompson sampling value $\tilde{\theta}^t_{n,j}$ exceeding \( \chi_n^\top \theta_{\alpha_n} + \frac{\Delta_n}{16} \) at time $t$, which leads to the following:
\begin{align*}
    &\mathbb{P}(\tilde{x}_t = \chi_n, \bar{a}_t = \alpha_n, \hat{E}^{\mu}_{n,\alpha_n}(t), \hat{E}^{\theta}_{n,\alpha_n}(t) \mid \mathcal{F}_t)\\
    \leq& \mathbb{P}(\tilde{x}_t = \chi_n, \tilde{\theta}_{n,i}^t < \chi_n^\top \theta_{\alpha_n} + \frac{\Delta_n}{16},\forall i \neq \alpha_n,\hat{E}^{\mu}_{n,\alpha_n}(t), \hat{E}^{\theta}_{n,\alpha_n}(t) \mid \mathcal{F}_t)\\
    =& \mathbb{P}(\tilde{\theta}_{n,i}^t < \chi_n^\top \theta_{\alpha_n} + \frac{\Delta_n}{16} \mid \tilde{x}_t = \chi_n,\mathcal{F}_t)\mathbb{P}( \tilde{\theta}_{n,j}^t < \chi_n^\top \theta_{\alpha_n} + \frac{\Delta_n}{16},\forall j \neq \alpha_n,i \mid \tilde{x}_t = \chi_n,\mathcal{F}_t) \mathbb{P}(\tilde{x}_t = \chi_n,\hat{E}^{\mu}_{n,\alpha_n}(t), \hat{E}^{\theta}_{n,\alpha_n}(t) \mid \mathcal{F}_t),
\end{align*}
where the last equality holds because the Thompson sampling values for any arms are mutually independent, conditional on the history \(\mathcal{F}_t\). Conversely, if the Thompson sampling value of a suboptimal arm \( i \) exceeds \( \chi_n^\top \theta_{\alpha_n} + \frac{\Delta_n}{16} \), while the values for all other arms remain below \( \chi_n^\top \theta_{\alpha_n} + \frac{\Delta_n}{16} \), the Thompson sampling algorithm will select arm \( i \). Therefore, we have:
\begin{align*}
    &\mathbb{P}(\tilde{x}_t = \chi_n, \bar{a}_t = i , \hat{E}^{\mu}_{n,\alpha_n}(t), \hat{E}^{\theta}_{n,\alpha_n}(t)\mid\mathcal{F}_t) \\
    >& \mathbb{P}(\tilde{\theta}_{n,i}^t \geq \chi_n^\top \theta_{\alpha_n} + \frac{\Delta_n}{16}\mid \tilde{x}_t = \chi_n,\mathcal{F}_t)\mathbb{P}(\tilde{\theta}_{n,j}^t < \chi_n^\top \theta_{\alpha_n} + \frac{\Delta_n}{16} ,\forall j \neq \alpha_n,i \mid \tilde{x}_t = \chi_n,\mathcal{F}_t)  \mathbb{P}(\tilde{x}_t = \chi_n, \hat{E}^{\mu}_{n,\alpha_n}(t), \hat{E}^{\theta}_{n,\alpha_n}(t)\mid\mathcal{F}_t)
\end{align*}

Combining these two inequalities then we complete the proof of Lemma \ref{lemmaut111}.

\end{proof}

For the third term in (\ref{eq14}), although they include additional parts \(\hat{E}^{\mu}_{n,\alpha_n}(t)\) and \({\hat{E}^{\mu}_{n,i}(t)},\overline{\hat{E}^{\theta}_{n,i}(t)})\) compared to Lemma \ref{lemmaut111}, following the same procedure as in the proof of Lemma \ref{lemmaut111} still yields similar upper bounds. We first proceed with the third term in (\ref{eq14}). The upper bound of the third term in (\ref{eq14}) are upper bounded by (\ref{eq11}) as follows:
\begin{align}
    &\mathbb{E}[\mathbb{E}[\mathbf{1}({x}_t=\chi_{n},\tilde{x}_t=\chi_{n},\bar{a}_t=\alpha_n,\hat{E}^{\mu}_{n,\alpha_n}(t),{\hat{E}^{\theta}_{n,\alpha_n}(t)},\overline{\hat{E}^{\mu}_{n,i}(t)})|\mathcal{F}_t]]\nonumber\\
    \leq &\mathbb{E}\bigg[\frac{\mathbb{P}(\tilde{\theta}_{n,i}^t<\chi_n^T\theta_{\alpha_n}+\frac{\Delta_n}{16},\overline{\hat{E}^{\mu}_{n,i}(t)}|\mathcal{F}_t)}{\mathbb{P}(\tilde{\theta}_{n,i}^t\geq\chi_n^T\theta_{\alpha_n}+\frac{\Delta_n}{16},\overline{\hat{E}^{\mu}_{n,i}(t)}|\mathcal{F}_t)}\mathbb{P}({x}_t=\chi_{n},\tilde{x}_t=\chi_{n},\bar{a}_t=i,\hat{E}^{\mu}_{n,\alpha_n}(t),{\hat{E}^{\theta}_{n,\alpha_n}(t)},\overline{\hat{E}^{\mu}_{n,i}(t)}|\mathcal{F}_t)\bigg].\label{eq11}
\end{align}
Summing up from $t=1$ to $T$, we continue to proceed with Eq. (\ref{eq11}) as follows:
\begin{align}
    &\sum_{t=1}^T\mathbb{E}\bigg[\frac{\mathbb{P}(\tilde{\theta}_{n,i}^t<\chi_n^T\theta_{\alpha_n}+\frac{\Delta_n}{16},\overline{\hat{E}^{\mu}_{n,i}(t)}|\mathcal{F}_t)}{\mathbb{P}(\tilde{\theta}_{n,i}^t\geq\chi_n^T\theta_{\alpha_n}+\frac{\Delta_n}{16},\overline{\hat{E}^{\mu}_{n,i}(t)}|\mathcal{F}_t)}\mathbb{P}(x_t=\chi_{n},\tilde{x}_t=\chi_{n},\bar{a}_t=i,\hat{E}^{\mu}_{n,\alpha_n}(t),{\hat{E}^{\theta}_{n,\alpha_n}(t)},\overline{\hat{E}^{\mu}_{n,i}(t)}|\mathcal{F}_t)\bigg]\nonumber\\
    \leq&\sum_{t=1}^T\mathbb{E}\bigg[\frac{\mathbb{P}(\tilde{\theta}_{n,i}^t<\chi_n^T\theta_{\alpha_n}+\frac{\Delta_n}{16}|\overline{\hat{E}^{\mu}_{n,i}(t)},\mathcal{F}_t)}{\mathbb{P}(\tilde{\theta}_{n,i}^t\geq\chi_n^T\theta_{\alpha_n}+\frac{\Delta_n}{16}|\overline{\hat{E}^{\mu}_{n,i}(t)},\mathcal{F}_t)}\mathbb{P}(x_t=\chi_{n},\tilde{x}_t=\chi_{n},\bar{a}_t=i,\overline{\hat{E}^{\mu}_{n,i}(t)}|\mathcal{F}_t)\bigg]\nonumber\\
    \leq&\sum_{t=1}^T\mathbb{E}\bigg[\frac{\mathbb{P}(x_t=\chi_{n},\tilde{x}_t=\chi_{n},\bar{a}_t=i,\overline{\hat{E}^{\mu}_{n,i}(t)}|\mathcal{F}_t)}{\mathbb{P}(\tilde{\theta}_{n,i}^t\geq\chi_n^T\theta_{\alpha_n}+\frac{\Delta_n}{16}|\overline{\hat{E}^{\mu}_{n,i}(t)},\mathcal{F}_t)}\bigg]\nonumber\\
        \leq&\sum_{t=1}^T\mathbb{E}\bigg[\frac{\mathbb{P}(x_t=\chi_{n},\tilde{x}_t=\chi_{n},\bar{a}_t=i,\overline{\hat{E}^{\mu}_{n,i}(t)}|\mathcal{F}_t)}{\frac{1}{2}\mathbb{P}(\tilde{\theta}_{n,i}^t\geq\chi_n^T\hat{\theta}_{i}^t+\frac{\Delta_{n,i}}{2}|\mathcal{F}_t)}\bigg]\nonumber\\
    \leq&\sum_{t=1}^T\mathbb{E}\bigg[\frac{2\mathbb{P}(\overline{\hat{E}^{\mu}_{n,i}(t)}|M_{n,i}(t)=t-1,\mathcal{F}_t)}{\mathbb{P}(\tilde{\theta}_{n,i}^t\geq\chi_n^T\hat{\theta}_{i}^t+\frac{\Delta_{n,i}}{2}|M_{n,i}(t)=t-1,\mathcal{F}_t)}\bigg].\label{eq9-}
\end{align}
The second last inequality holds because, conditioned on event \(\overline{\hat{E}^{\mu}_{n,i}(t)}\), there is a probability of \(1/2\) that \(\overline{\hat{E}^{\mu}_{n,i}(t)}\) occurs with \(\chi_n^\top \hat{\theta}_i^t > \chi_n^\top {\theta}_i + \frac{9\Delta_{n,i}}{16}\). Thus, when \(\tilde{\theta}_{n,i}^t \geq \chi_n^\top \hat{\theta}_i^t + \frac{\Delta_{n,i}}{2}\), we have:
\[
    \tilde{\theta}_{n,i}^t \geq \chi_n^\top \hat{\theta}_i^t + \frac{\Delta_{n,i}}{2} > \chi_n^\top \theta_i + \frac{9\Delta_{n,i}}{16} + \frac{\Delta_{n,i}}{2} = \chi_n^\top \theta_{\alpha_n} + \frac{\Delta_{n,i}}{16}\geq \chi_n^\top \theta_{\alpha_n} + \frac{\Delta_{n}}{16},
\]
which implies that
\begin{align*}
    \mathbb{P}(\tilde{\theta}_{n,i}^t\geq\chi_n^T\theta_{\alpha_n}+\frac{\Delta_{n}}{16}|\overline{\hat{E}^{\mu}_{n,i}(t)},\mathcal{F}_t) 
    > \frac{1}{2} \mathbb{P}(\tilde{\theta}_{n,i}^t \geq \chi_n^\top \hat{\theta}_i^t + \frac{\Delta_{n,i}}{2} \mid \overline{\hat{E}^{\mu}_{n,i}(t)},\mathcal{F}_t)
    =\frac{1}{2} \mathbb{P}(\tilde{\theta}_{n,i}^t \geq \chi_n^\top \hat{\theta}_i^t + \frac{\Delta_{n,i}}{2} \mid \mathcal{F}_t).
\end{align*}
To preceed with (\ref{eq9-}), we choose \(L = \max\left(\left\lceil \frac{16\chi_n^\top (\mu_i - \theta_i)}{9\Delta_{n,i}\chi_n^\top V_i \chi_n} - \frac{1}{\chi_n^\top V_i \chi_n} \right\rceil, 0\right)\) such that for $t-1\geq L$, we can use Eq. (\ref{eq7}) to upper bound the probability of $\hat{E}^{\mu}_{n,i}(t)$ where $|\chi_n^\top\hat{\theta}^t_j-\chi_n^\top{\theta}_j|>\frac{9\Delta_{n,i}}{16}$. We then divide the process into two parts with $t\leq L$ and $t>L$, and apply the concentration inequality (\ref{eq7}) in Lemma \ref{lemmaantimean} to the denominator of (\ref{eq9-}) and the anti-concentration inequality in Lemma \ref{fact1} to the numerator of (\ref{eq9-}) to upper bound (\ref{eq9-}) as follows:
\begin{align}
&\sum_{t=L+1}^{T-1}\frac{4\exp(-\frac{1}{2}(\frac{9\Delta_{n,i}}{16||\chi_n||_{\hat{V}_i^t}}-\frac{\chi_n^T(\mu_i-\theta_i)||\chi_n||_{\hat{V}_i^t}}{\chi_n^TV_i\chi_n})^2)}{\exp(-\frac{1}{2}(\frac{\Delta_{n,i}}{2||\chi_n||_{\hat{V}_i^t}})^2)}\frac{\frac{\Delta_{n,i}}{2||\chi_n||_{\hat{V}_i^{t}}}+\sqrt{(\frac{\Delta_{n,i}}{2||\chi_n||_{\hat{V}_i^{t}}})^2+4}}{\frac{9\Delta_{n,i}}{16||\chi_n||_{\hat{V}_i^{t}}}-\frac{\chi_n^T(\mu_i-\theta_i)||\chi_n||_{\hat{V}_i^{t}}}{\chi_n^TV_i\chi_n}+\sqrt{(\frac{9\Delta_{n,i}}{16||\chi_n||_{\hat{V}_i^{t}}}-\frac{\chi_n^T(\mu_i-\theta_i)||\chi_n||_{\hat{V}_i^{t}}}{\chi_n^TV_i\chi_n})^2+8/\pi}}\nonumber\\
&+\sum_{t=0}^{L}\frac{\frac{\Delta_{n,i}}{2||\chi_n||_{\hat{V}_i^{t}}}+\sqrt{(\frac{\Delta_{n,i}}{2||\chi_n||_{\hat{V}_i^t}})^2+8/\pi}}{\exp(-\frac{1}{2}(\frac{\Delta_{n,i}}{2||\chi_n||_{\hat{V}_i^t}})^2)}\nonumber\\
=&\sum_{t=L+1}^{T-1}4\exp\bigg(-\frac{17\Delta_{n,i}}{512||\chi_n||^2_{\hat{V}_i^t}}-\frac{9\Delta_{n,i}\chi_n^T(\mu_i-\theta_i)}{16\chi_n^TV_i\chi_n}+\bigg(\frac{\chi_n^T(\mu_i-\theta_i)||\chi_n||_{\hat{V}_i^t}}{\chi_n^TV_i\chi_n}\bigg)^2\bigg)\nonumber\\
&\cdot\frac{\frac{\Delta_{n,i}}{2||\chi_n||_{\hat{V}_i^{t}}}+\sqrt{(\frac{\Delta_{n,i}}{2||\chi_n||_{\hat{V}_i^{t}}})^2+4}}{\frac{9\Delta_{n,i}}{16||\chi_n||_{\hat{V}_i^{t}}}-\frac{\chi_n^T(\mu_i-\theta_i)||\chi_n||_{\hat{V}_i^{t}}}{\chi_n^TV_i\chi_n}+\sqrt{(\frac{9\Delta_{n,i}}{16||\chi_n||_{\hat{V}_i^{t}}}-\frac{\chi_n^T(\mu_i-\theta_i)||\chi_n||_{\hat{V}_i^{t}}}{\chi_n^TV_i\chi_n})^2+8/\pi}}+\sum_{t=0}^{L}\frac{\frac{\Delta_{n,i}}{2||\chi_n||_{\hat{V}_i^{t}}}+\sqrt{(\frac{\Delta_{n,i}}{2||\chi_n||_{\hat{V}_i^t}})^2+8/\pi}}{\exp(-\frac{1}{2}(\frac{\Delta_{n,i}}{2||\chi_n||_{\hat{V}_i^t}})^2)}.\label{eq13}
\end{align}
Since \(\frac{1}{||\chi_n||_{\hat{V}_i^{t}}^2} = \frac{1}{\chi_n^\top V_i \chi_n }+ (t-1)\) by Lemma \ref{lemmaantimean}, the first term in the first line of (\ref{eq13}) decays exponentially with \(t\), while the first term in the second line converges to a constant, and the last term is a constant. Thus, the entire equation in (\ref{eq13}) last a constant, summarized by \(D^1_{n,i}\).

For the forth term in (\ref{eq14}), we upper bound it in the following lemma.
\begin{lemma}
The expected number of time steps at which all of the following events occur is bounded above by the following term under some constant $D'$:
    \begin{align}
        \sum_{t=1}^T\mathbb{E}[\mathbf{1}(x_t=\chi_{n},\tilde{x}_t=\chi_{n},\bar{a}_t =\alpha_{n},\hat{E}^\mu_{n,\alpha_{n}}(t),\cap_{i\neq \alpha_n}{\hat{E}^\mu_{n,i}(t)},{p^t_{\alpha_{n}}(\chi_{n})<1-\epsilon^t})]\leq D'.\label{eq557}
    \end{align}
\end{lemma}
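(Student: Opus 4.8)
The plan is to show that the three ingredients in (\ref{eq557})---the concentration of the optimal arm's posterior mean $\hat E^\mu_{n,\alpha_n}(t)$, the concentration of every suboptimal mean $\cap_{i\neq\alpha_n}\hat E^\mu_{n,i}(t)$, and the under-selection event $p^t_{\alpha_n}(\chi_n)<1-\epsilon^t$---are jointly compatible only on a bounded set of rounds. First I would use the concentration events to pin down a fixed separation of posterior means: on $\hat E^\mu_{n,\alpha_n}(t)\cap(\cap_{i\neq\alpha_n}\hat E^\mu_{n,i}(t))$ the definitions in (\ref{events}) give $\chi_n^\top\hat\theta^t_{\alpha_n}-\chi_n^\top\hat\theta^t_i>\frac{7\Delta_{n,i}}{16}-\frac{\Delta_n}{32}\geq\frac{13\Delta_n}{32}$ for every suboptimal $i$, using $\Delta_{n,i}\geq\Delta_n$. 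In the same regime the numerator $\chi_n^\top\hat\theta^t_{\alpha_n}-\chi_n^\top\hat\theta^t_{\alpha_{3-n}}$ of $\epsilon^t$ in (\ref{trutherror}) stays within a constant of the strictly positive quantity $\Delta_{n,\alpha_{3-n}}$ while its denominator stays bounded, so $\epsilon^t\geq\epsilon^\ast>0$ for a fixed constant $\epsilon^\ast$ on these rounds.

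Next I would convert the under-selection event into a lower bound on a posterior variance. Since $1-p^t_{\alpha_n}(\chi_n)\leq\sum_{i\neq\alpha_n}\mathbb{P}(\tilde\theta^t_{n,i}>\tilde\theta^t_{n,\alpha_n}\mid\mathcal{F}_t)$, the event $p^t_{\alpha_n}(\chi_n)<1-\epsilon^t$ forces some suboptimal arm $i$ to obey $\mathbb{P}(\tilde\theta^t_{n,i}>\tilde\theta^t_{n,\alpha_n}\mid\mathcal{F}_t)>\epsilon^\ast/(K-1)$. Because $\tilde\theta^t_{n,i}-\tilde\theta^t_{n,\alpha_n}$ is Gaussian with mean at most $-\frac{13\Delta_n}{32}$ and variance $\|\chi_n\|^2_{\hat V^t_i}+\|\chi_n\|^2_{\hat V^t_{\alpha_n}}$, the tail estimate of Lemma \ref{fact1} shows this probability can exceed $\epsilon^\ast/(K-1)$ only if $\|\chi_n\|^2_{\hat V^t_i}+\|\chi_n\|^2_{\hat V^t_{\alpha_n}}\geq\sigma_0^2$ for a constant $\sigma_0$ depending on $\Delta_n$, $\epsilon^\ast$ and $K$. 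I would then split the rounds in (\ref{eq557}) according to whether $\|\chi_n\|^2_{\hat V^t_{\alpha_n}}\geq\sigma_0^2/2$ (Case A) or $\|\chi_n\|^2_{\hat V^t_i}\geq\sigma_0^2/2$ (Case B).

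Case A is a deterministic constant: by Lemma \ref{lemmaantimean}, every round with $\tilde x_t=\chi_n$ and $\bar a_t=\alpha_n$ augments the information matrix of arm $\alpha_n$ by $\chi_n\chi_n^\top$, so after $m$ such pulls $\|\chi_n\|^2_{\hat V^t_{\alpha_n}}\leq\chi_n^\top V_{\alpha_n}\chi_n/(1+m\,\chi_n^\top V_{\alpha_n}\chi_n)$ (pulls by other contexts only shrink it further), and the threshold $\sigma_0^2/2$ can hold on at most $\lceil 2/\sigma_0^2\rceil$ rounds. For Case B I would run the ratio argument of Lemma \ref{lemmaut111} in the direction that charges a pull of $\alpha_n$ to a pull of the under-explored arm $i$: conditioning on $\mathcal{F}_t$, the probability of $\bar a_t=\alpha_n$ on these events is at most $\frac{\mathbb{P}(\tilde\theta^t_{n,i}<\chi_n^\top\theta_{\alpha_n}+\Delta_n/16\mid\mathcal{F}_t)}{\mathbb{P}(\tilde\theta^t_{n,i}\geq\chi_n^\top\theta_{\alpha_n}+\Delta_n/16\mid\mathcal{F}_t)}$ times the probability of $\bar a_t=i$ on the same events. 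Since in Case B arm $i$ has variance at least $\sigma_0^2/2$ while its mean lies a bounded distance below the threshold, the denominator crossing probability is bounded below by a constant, so the ratio is a constant; summing over $t$ bounds the contribution by a constant multiple of the expected number of rounds with $\tilde x_t=\chi_n$, $\bar a_t=i$ and $\|\chi_n\|^2_{\hat V^t_i}\geq\sigma_0^2/2$, which is again a deterministic constant because each such pull shrinks $\|\chi_n\|^2_{\hat V^t_i}$. Summing both cases over the at most $K-1$ suboptimal arms yields the claimed constant $D'$.

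The main obstacle I anticipate is Case B: making the ratio argument rigorous requires that, on the analysed event, arm $i$'s sample retains a constant probability of exceeding the threshold $\chi_n^\top\theta_{\alpha_n}+\Delta_n/16$ (so that the Lemma \ref{lemmaut111}-type ratio is uniformly bounded), and that the charge to pulls of $i$ stays a genuine constant uniformly in $T$ even though $\|\chi_n\|^2_{\hat V^t_i}$ depends on pulls by all contexts rather than $\chi_n$ alone. The monotonicity of $\chi_n^\top\hat V^t_i\chi_n$ under rank-one updates is precisely what keeps both the crossing probability bounded below and the number of large-variance pulls of $i$ bounded above, so I expect the argument to close once that monotonicity is invoked carefully.
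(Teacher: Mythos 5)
Your overall architecture is essentially the paper's: both proofs rest on the observation that on $\cap_{i\in[K]}\hat E^\mu_{n,i}(t)$ the quantity $\epsilon^t$ in (\ref{trutherror}) has a constant positive lower bound, both use a pigeonhole step to pin the under-selection event on one suboptimal arm $i$, both convert the resulting probability condition into a lower bound on a posterior variance via the Gaussian tail bound of Lemma \ref{fact1}, and both finish by charging pulls of $\alpha_n$ to pulls of $i$ with a bounded ratio and then counting that only constantly many rounds can have a large projected variance, since each qualifying round adds $\chi_n\chi_n^\top$ to the relevant information matrix. Your derivation of $\epsilon^t\geq\epsilon^\ast$, your pigeonhole step, and your Case A are correct, and Case A is in fact stated more cleanly than the corresponding branch in the paper.

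The genuine gap is in Case B. You assert that, conditioning on $\mathcal{F}_t$, the probability of $\bar a_t=\alpha_n$ on the analysed events is at most $\frac{\mathbb{P}(\tilde\theta^t_{n,i}<c\mid\mathcal{F}_t)}{\mathbb{P}(\tilde\theta^t_{n,i}\geq c\mid\mathcal{F}_t)}$ times the probability of $\bar a_t=i$, where $c=\chi_n^\top\theta_{\alpha_n}+\Delta_n/16$, citing the ratio argument of Lemma \ref{lemmaut111}. But that lemma's ratio bound is proved only on the event $\hat E^\theta_{n,\alpha_n}(t)$: it is the cap $\tilde\theta^t_{n,\alpha_n}\leq\chi_n^\top\hat\theta^t_{\alpha_n}+\Delta_n/32\leq c$ that forces every competing sample below the threshold whenever $\alpha_n$ is chosen, and this is what makes the charging valid. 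The event of Lemma (\ref{eq557}) contains only the posterior-mean events $\hat E^\mu$, not $\hat E^\theta_{n,\alpha_n}(t)$, and your Case B places no restriction on $\|\chi_n\|^2_{\hat V^t_{\alpha_n}}$; hence $\alpha_n$ may be selected because its own sample is unusually high, and nothing then forces $\tilde\theta^t_{n,i}$ below $c$. So your inequality is Lemma \ref{lemmaut111} with a hypothesis deleted, and it is unjustified as stated. The repair is exactly the branch the paper carries separately as $\overline{\hat E^\theta_{n,\alpha_n}(t)}$: split Case B further on $\tilde\theta^t_{n,\alpha_n}<c$ versus $\tilde\theta^t_{n,\alpha_n}\geq c$. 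On the first sub-event your ratio argument goes through; on the second, the conditional probability is at most $\tfrac12\exp\bigl(-\Delta_n^2/(2048\,\|\chi_n\|^2_{\hat V^t_{\alpha_n}})\bigr)$, and since every round in the event pulls $\alpha_n$ under $x_t=\chi_n$ and hence drives $\|\chi_n\|^2_{\hat V^t_{\alpha_n}}$ below $1/m$ after $m$ such pulls, these probabilities sum to a constant over the horizon. (Alternatively, note that on the Case B event every arm's posterior mean lies below $c$, so $p^t_i(\chi_n)\geq\mathbb{P}(\tilde\theta^t_{n,i}\geq c\mid\mathcal{F}_t)\prod_{j\neq i}\mathbb{P}(\tilde\theta^t_{n,j}<c\mid\mathcal{F}_t)\geq\rho\,2^{-(K-1)}$ for the constant $\rho$ you already established, and the charging becomes trivial.) Finally, your "main obstacle" paragraph misdiagnoses the difficulty: monotonicity of $\chi_n^\top\hat V^t_i\chi_n$ under rank-one updates only rescues the counting step, which was never in doubt; it does nothing to supply the missing cap on the optimal arm's sampled value.
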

\begin{proof}
    We first decompose the event as follows:
    \begin{align*}
        &\sum_{t=1}^T\mathbb{E}[\mathbf{1}(x_t=\chi_{n},\tilde{x}_t=\chi_{n},\bar{a}_t =\alpha_{n},\cap_{i\in [K]}{\hat{E}^\mu_{n,i}(t)},{p^t_{\alpha_{n}}(\chi_{n})<1-\epsilon^t})]\\
        =&\sum_{t=1}^T\mathbb{E}[\mathbf{1}(x_t=\chi_{n},\tilde{x}_t=\chi_{n},\bar{a}_t =\alpha_{n},\cap_{i\in [K]}{\hat{E}^\mu_{n,i}(t)},{\sum_{i\neq \alpha_n}p^t_{i}(\chi_{n})\geq\epsilon^t},{p^t_{\alpha_{n}}(\chi_{n})<1-\epsilon^t})]\\
        \leq &\sum_{t=1}^T\sum_{i\neq \alpha_n}\mathbb{E}[\mathbf{1}(x_t=\chi_{n},\tilde{x}_t=\chi_{n},\bar{a}_t =\alpha_{n},\cap_{i\in [K]}{\hat{E}^\mu_{n,i}(t)},{\mathbb{P}(\bar{a}_t=i|x_t=\chi_{n},\tilde{x}_t=\chi_{n},\mathcal{F}_t)\geq\epsilon^t/(K-1)},{p^t_{\alpha_{n}}(\chi_{n})<1-\epsilon^t})].\\
    \end{align*}
    Under event $\cap_{i\in [K]}{\hat{E}^\mu_{n,i}(t)}$, if the Thomspon sampling algorithm choose a suboptimal arm $i$, at least one of events $\tilde{\theta}^t_{n,i}\geq \chi_n^T\hat{\theta^t_i}+3\Delta_{n,i}/8$ and $\overline{\hat{E}_{n,\alpha_n}^\theta(t)}$ must occur, which implies that 
    \begin{align*}
        \mathbb{P}(\{\tilde{\theta}^t_{n,i}\geq \chi_n^T\hat{\theta^t_i}+3\Delta_{n,i}/8\}\cup\overline{\hat{E}_{n,\alpha_n}^\theta(t))}\geq\mathbb{P}(\bar{a}_t=i|x_t=\chi_{n},\tilde{x}_t=\chi_{n},\mathcal{F}_t).
    \end{align*}
    Therefore,
    \begin{align}
        &\sum_{t=1}^T\sum_{i\neq \alpha_n}\mathbb{E}[\mathbf{1}(x_t=\chi_{n},\tilde{x}_t=\chi_{n},\bar{a}_t =\alpha_{n},\cap_{i\in [K]}{\hat{E}^\mu_{n,i}(t)},{\mathbb{P}(\bar{a}_t=i|x_t=\chi_{n},\tilde{x}_t=\chi_{n},\mathcal{F}_t)\geq{\epsilon}^t/(K-1)},{p^t_{\alpha_{n}}(\chi_{n})<1-\epsilon^t})]\nonumber\\
        \leq&\sum_{t=1}^T\sum_{i\neq \alpha_n}\mathbb{E}[\mathbf{1}(x_t=\chi_{n},\tilde{x}_t=\chi_{n},\bar{a}_t =\alpha_{n},\cap_{i\in [K]}{\hat{E}^\mu_{n,i}(t)},{\mathbb{P}(\{\tilde{\theta}^t_{n,i}\geq \chi_n^T\hat{\theta^t_i}+3\Delta_{n,i}/8\}\cup\overline{\hat{E}_{n,\alpha_n}^\theta(t))})\geq{\epsilon}^t/(K-1)},{p^t_{\alpha_{n}}(\chi_{n})<1-\epsilon^t})]\nonumber\\
        \leq&\sum_{t=1}^T\sum_{i\neq \alpha_n}\mathbb{E}[\mathbf{1}(x_t=\chi_{n},\tilde{x}_t=\chi_{n},\bar{a}_t =\alpha_{n},\cap_{i\in [K]}{\hat{E}^\mu_{n,i}(t)},{\mathbb{P}(\{\tilde{\theta}^t_{n,i}\geq \chi_n^T\hat{\theta^t_i}+3\Delta_{n,i}/8\})\geq{\epsilon}^t/(2(K-1))},{p^t_{\alpha_{n}}(\chi_{n})<1-\epsilon^t})]\nonumber\\
        &+\sum_{t=1}^T\sum_{i\neq \alpha_n}\mathbb{E}[\mathbf{1}(x_t=\chi_{n},\tilde{x}_t=\chi_{n},\bar{a}_t =\alpha_{n},\cap_{i\in [K]}{\hat{E}^\mu_{n,i}(t)},{\mathbb{P}(\overline{\hat{E}_{n,\alpha_n}^\theta(t))})\geq{\epsilon}^t/(2(K-1))},{p^t_{\alpha_{n}}(\chi_{n})<1-\epsilon^t})]\nonumber\\
        \leq&\sum_{t=1}^T\frac{1-{\epsilon}^t}{{\epsilon}^t/(2(K-1))}\sum_{i\neq \alpha_n}\mathbb{E}[\mathbf{1}(x_t=\chi_{n},\tilde{x}_t=\chi_{n},\bar{a}_t =i,\cap_{i\in [K]}{\hat{E}^\mu_{n,i}(t)},{\mathbb{P}(\{\tilde{\theta}^t_{n,i}\geq \chi_n^T\hat{\theta^t_i}+3\Delta_{n,i}/8\})\geq\bar{\epsilon}/(2(K-1))})]\nonumber\\
        &+\sum_{t=1}^T\sum_{i\neq \alpha_n}\mathbb{E}[\mathbf{1}(x_t=\chi_{n},\tilde{x}_t=\chi_{n},\bar{a}_t =\alpha_{n},\cap_{i\in [K]}{\hat{E}^\mu_{n,i}(t)},{\mathbb{P}(\overline{\hat{E}_{n,\alpha_n}^\theta(t))})\geq{\epsilon}^t/(2(K-1))})].\label{eq556}
    \end{align}
    Note that when event $\cap_i\hat{E}^\mu_{n,i}(t)$ happens, the $\varepsilon^t$ in (\ref{trutherror}) have a stictly positive lower bound $\bar{\varepsilon}$ that depends on $\Delta_n$, thus the $\frac{1-{\epsilon}^t}{{\epsilon}^t/(2(K-1))}$ in the last expression is upper bounded by a constant $\frac{1-\bar{\epsilon}}{\bar{\epsilon}/(2(K-1))}$. 
    We will next show that, when summing up from $t=1$ to $T$, each of the $\sum_{t=1}^T\mathbb{E}[\mathbf{1}(x_t=\chi_{n},\tilde{x}_t=\chi_{n},\bar{a}_t =\alpha_{n},\cap_{i\in [K]}{\hat{E}^\mu_{n,i}(t)},{\mathbb{P}(\hat{E}_{n,\alpha_n}^\theta(t))\geq\bar{\epsilon}/(2(K-1))})]$ and $\sum_{t=1}^T\mathbb{E}[\mathbf{1}(x_t=\chi_{n},\tilde{x}_t=\chi_{n},\bar{a}_t =i,\cap_{i\in [K]}{\hat{E}^\mu_{n,i}(t)},{\mathbb{P}(\{\tilde{\theta}^t_{n,i}\geq \chi_n^T\hat{\theta^t_i}+3\Delta_{n,i}/8\})\geq\bar{\epsilon}/(2(K-1))},{p^t_{\alpha_{n}}(\chi_{n})<1-\epsilon^t})]$ can be upper bounded by a constant. We derive the constant upper bound for the former case, and the latter case can be upper bounded similarly. By Lemma \ref{fact1}, we have
    \begin{align*}
        \mathbb{P}(\{\tilde{\theta}^t_{n,i}\geq \chi_n^T\hat{\theta^t_i}+3\Delta_{n,i}/8\})\leq {\frac{1}{2}}{e^{-\bigg(\frac{3\Delta_{n,i}}{8\|\chi_n\|_{\hat{V}_i^t}}\bigg)^2}}{}.
    \end{align*}
    Then,
    \begin{align*}
        &\sum_{t=1}^T\mathbb{E}[\mathbf{1}(x_t=\chi_{n},\tilde{x}_t=\chi_{n},\bar{a}_t =\alpha_{n},\cap_{i\in [K]}{\hat{E}^\mu_{n,i}(t)},{{\frac{1}{2}}{e^{-\bigg(\frac{3\Delta_{n,i}}{8\|\chi_n\|\hat{V}_i^t}\bigg)^2}}{}\geq\bar{\epsilon}/(2(K-1))})]\\
        =&\sum_{t=1}^T\mathbb{E}[\mathbf{1}(x_t=\chi_{n},\tilde{x}_t=\chi_{n},\bar{a}_t =\alpha_{n},\cap_{i\in [K]}{\hat{E}^\mu_{n,i}(t)},{{\frac{1}{2}}{e^{-\bigg(\frac{3\Delta_{n,i}}{8}\bigg)^2\bigg(\frac{1}{\chi_n^\top V_j \chi_n}+|\mathcal{T}_{n,i}^t|-1\bigg)}}{}\geq\bar{\epsilon}/(2(K-1))})]\\
        =&\sum_{t=1}^T\mathbb{E}[\mathbf{1}(x_t=\chi_{n},\tilde{x}_t=\chi_{n},\bar{a}_t =\alpha_{n},\cap_{i\in [K]}{\hat{E}^\mu_{n,i}(t)},|\mathcal{T}_{n,i}^t|<\frac{64}{9\Delta_{n,i}^2}\ln\frac{K-1}{\bar{\epsilon}}+1-\frac{1}{\chi_n^TV_i\chi_i})]\\
        \leq&\frac{64}{9\Delta_{n,i}^2}\ln\frac{K-1}{\bar{\epsilon}}+1-\frac{1}{\chi_n^TV_i\chi_i},
    \end{align*}
    where $|\mathcal{T}_{n,i}^t|$ represent the times that the system choose arm $i$ under context $\chi_n$ up to time $t$, and the first equation is derived by Lemma \ref{lemmaantimean}. Based on the upper bound above, the complete upper bound in (\ref{eq556}) yields a constant, then we complete the proof.
\end{proof}

By upper bounding the summation from $t=1$ to $T$ for each term in \((\ref{eq14})\) by (\ref{eq19}), (\ref{eq13}), and (\ref{eq557}), we derive the results presented in Lemma \ref{opt_less-}. Furthermore, by denoting all constant components in Lemma \ref{opt_less-} as \(D_n\), we arrive at Lemma \ref{opt_less}, thus completing the proof.



\end{document}